\definecolor{darkred}{RGB}{250,0,0}
\definecolor{darkgreen}{RGB}{0,150,0}
\definecolor{myblue}{RGB}{0,0,250}
\definecolor{darkblue}{RGB}{0,0,200}
\newtheorem{theorem}{Theorem}
\newtheorem{lemma}{Lemma}
\newtheorem{assumption}{Assumption}
\providecommand{\cref}[1]{Chapter~\ref{chap:#1}}
\providecommand{\R}{\ensuremath{\mathbb{R}}}
\providecommand{\abs}[1]{\lvert#1\rvert}
\providecommand{\norm}[1]{\lVert#1\rVert}
\renewcommand{\vec}[1]{\ensuremath{\boldsymbol{#1}}}
\providecommand{\mat}[1]{\ensuremath{\boldsymbol{#1}}}
\providecommand{\mA}{\mat{A}} \providecommand{\mB}{\mat{B}}
\providecommand{\mC}{\mat{C}} 
\providecommand{\mD}{\mat{D}}
\providecommand{\mI}{\mat{I}}  
\providecommand{\mK}{\mat{K}}  
\providecommand{\mM}{\mat{M}} \providecommand{\mP}{\mat{P}} 
 \providecommand{\mR}{\mat{R}}
\providecommand{\mS}{\mat{S}} \providecommand{\mU}{\mat{U}} 
\providecommand{\mV}{\mat{V}}
\providecommand{\mT}{\mat{T}}
\providecommand{\mSigma}{\mat{\Sigma}}
\providecommand{\mGm}{\mat{\Gamma}} \providecommand{\mG}{\mat{G}}
\providecommand{\va}{\vec{a}} \providecommand{\vb}{\vec{b}}
\providecommand{\vh}{\vec{h}} 
 \providecommand{\vp}{\vec{p}}
\providecommand{\vq}{\vec{q}} \providecommand{\vs}{\vec{s}}
\providecommand{\vt}{\vec{t}} \providecommand{\vr}{\vec{r}}
\providecommand{\vg}{\vec{g}}
\providecommand{\vu}{\vec{u}} \providecommand{\vw}{\vec{w}}
\providecommand{\vx}{\vec{x}} \providecommand{\vy}{\vec{y}}
\providecommand{\vz}{\vec{z}} 
 \providecommand{\vv}{\vec{v}}
\newcommand{\oussama}[1]{\ifthenelse{\boolean{showcomments}}
{ \textcolor{red}{(Oussama says:  #1)}}{}}
\newcommand{\christos}[1]{\ifthenelse{\boolean{showcomments}}
{ \textcolor{blue}{(Christos says: #1)} } {} }
\newcommand{\yue}[1]{\ifthenelse{\boolean{showcomments}}
{ \textcolor{magenta}{(Yue says:  #1)}}{}}
\providecommand{\abs}[1]{\lvert#1\rvert}
\providecommand{\norm}[1]{\lVert#1\rVert}
\renewcommand{\vec}[1]{\ensuremath{\boldsymbol{#1}}}
\providecommand{\mat}[1]{\ensuremath{\boldsymbol{#1}}}
\providecommand{\mA}{\mat{A}} \providecommand{\mB}{\mat{B}}
\providecommand{\mC}{\mat{C}} 
\providecommand{\mD}{\mat{D}}
\providecommand{\mF}{\mat{F}}
\providecommand{\mI}{\mat{I}}  
\providecommand{\mB}{\mat{B}}  
\providecommand{\mM}{\mat{M}} \providecommand{\mP}{\mat{P}} 
 \providecommand{\mR}{\mat{R}}
\providecommand{\mS}{\mat{S}} \providecommand{\mU}{\mat{U}} 
\providecommand{\mV}{\mat{V}}
\providecommand{\mZ}{\mat{Z}}
\providecommand{\mB}{\mat{B}}
\providecommand{\mP}{\mat{P}}
\providecommand{\mSigma}{\mat{\Sigma}}
\providecommand{\mGm}{\mat{\Gamma}} \providecommand{\mG}{\mat{G}}
\providecommand{\va}{\vec{a}} \providecommand{\vb}{\vec{b}}
\providecommand{\vh}{\vec{h}} 
 \providecommand{\vp}{\vec{p}}
\providecommand{\vq}{\vec{q}} \providecommand{\vs}{\vec{s}}
\providecommand{\vt}{\vec{t}} \providecommand{\vr}{\vec{r}}
\providecommand{\vf}{\vec{f}} 
\providecommand{\vg}{\vec{g}}
\providecommand{\vu}{\vec{u}} \providecommand{\vw}{\vec{w}}
\providecommand{\vx}{\vec{x}} \providecommand{\vy}{\vec{y}}
\providecommand{\vz}{\vec{z}} 
 \providecommand{\vv}{\vec{v}}
\providecommand{\vtau}{\vec{\tau}}
\algnewcommand\algorithmicforeach{\textbf{Until :}}
\algnewcommand\algorithmicendif{\textbf{End}}
\algnewcommand\ForEach{\item[ \algorithmicforeach]}
\algnewcommand\EndiFF{\item[ \algorithmicendif]}
\newcommand{\argmin}{\operatornamewithlimits{argmin}}
\providecommand{\vxi}{\vec{\xi}}
\begin{document}

\title{On the Inherent Regularization Effects of\\ Noise Injection During Training}

\author{Oussama Dhifallah and Yue M. Lu
        \thanks{O. Dhifallah and Y. M. Lu are with the John A. Paulson School of Engineering and Applied Sciences, Harvard University, Cambridge, MA 02138, USA (e-mails: oussama$\_$dhifallah@g.harvard.edu,yuelu@seas.harvard.edu).}
\thanks{This research was funded by the Harvard FAS Dean's Fund for Promising Scholarship, and by the US National Science Foundations under grants CCF-1718698 and CCF-1910410.}
}

\maketitle

\begin{abstract}
Randomly perturbing networks during the training process is a commonly used approach to improving generalization performance. In this paper, we present a theoretical study of one particular way of random perturbation, which corresponds to injecting artificial noise to the training data. We provide a precise asymptotic characterization of the training and generalization errors of such randomly perturbed learning problems on a random feature model. Our analysis shows that Gaussian noise injection in the training process is equivalent to introducing a weighted ridge regularization, when the number of noise injections tends to infinity. The explicit form of the regularization is also given. Numerical results corroborate our asymptotic predictions, showing that they are accurate even in moderate problem dimensions. Our theoretical predictions are based on a new correlated Gaussian equivalence conjecture that generalizes recent results in the study of random feature models.
\end{abstract}

\section{Introduction}
A popular approach to improving the generalization performance is to  randomly perturb the network during the training process \cite{dropout,bishop,atvn,ord_imp,ridgoptimal}. Such random perturbations are widely used as an implicit regularization to the learning problem. One way that random perturbation has been used as a regularization is by injecting it to the input data before starting the learning process \cite{maxup,nise_inj1,nise_inj2}. In this paper, we provide a theoretical analysis of such learning procedure on a random feature model \cite{RR08} under Gaussian input and perturbation vectors. Our analysis particularly shows that Gaussian noise injection introduces a weighted ridge regularization, asymptotically.

First, we describe the models for our theoretical analysis. We are given a collection of training data $\lbrace (y_i,\va_i) \rbrace_{i=1}^{n}$, where $\va_i\in\mathbb{R}^p$ is referred to as the input vector and $y_i\in\mathbb{R}$ is referred to as the label corresponding to  $\va_i$. In this paper, we shall assume that the labels are generated according to the standard \textit{teacher--student} model, i.e.
\begin{align}\label{cmodel}
y_i=\varphi(\va_i^\top \vxi),~\forall i\in\lbrace 1,\dots,n \rbrace,
\end{align}
where $\vxi\in\mathbb{R}^p$ is an unknown teacher weight vector, and $\varphi(\cdot)$ is a scalar deterministic or probabilistic function. Here, we use the random feature model \cite{RR08} to learn the model described in \eqref{cmodel}. The random feature model considers the following class of functions 
\begin{align}\label{RFM}
\mathcal{F}_{\text{RF}}(\va)=\Big\lbrace g_{\vw}(\va)=\vw^\top \sigma(\mF^\top \va),~\vw\in\mathbb{R}^k \Big\rbrace,
\end{align}
where $\va\in\mathbb{R}^p$ is an input vector, $\mF\in\mathbb{R}^{p\times k}$ is a random matrix referred to as the \textit{feature matrix}, and $\sigma(\cdot)$ is a scalar function referred to as the \textit{activation function}. This model assumes that $\mF$ is fixed during the training. Note that the family in \eqref{RFM} can be viewed as a two--layer neural network where the first layer weights are fixed, i.e. $\mF$ is fixed. 

\subsection{Learning Formulation}
Before starting the learning process, $\ell$ independent perturbation vectors are injected to each $\va_i$. This procedure forms the augmented family $\lbrace \va_i+\Delta \vz_{ij} \rbrace_{j=1}^{\ell}$ for each $\va_i$, where $\lbrace \vz_{ij} \rbrace_{j=1}^{\ell}$ are independent random perturbations and $\Delta \geq 0$ denotes the \textit{noise variance}. In this paper, we study the effects of such perturbation method on an average loss and a random feature model. Specifically, we analyze formulations of the following form
\begin{align}\label{mform}
\widehat{\vw}=\argmin_{\vw\in\mathbb{R}^k}&~ \frac{1}{2n \ell} \sum_{i=1}^{n} \sum_{j=1}^{\ell} \big(y_i-\vw^\top \sigma\big(\mF^\top [\va_i+\Delta \vz_{ij}] \big) \big)^2+\tfrac{\lambda}{2} \norm{\vw}^2,
\end{align}
where $\lambda>0$ denotes the regularization parameter. Note that the  problem in \eqref{mform} is a standard feature formulation when $\Delta=0$. Then, we refer to \eqref{mform} as the \textit{noisy formulation}, when $\Delta>0$ and the \textit{standard formulation}, otherwise.

\subsection{Performance Measure}
The main objective in this paper is to study the performance of the learning formulation in \eqref{mform} on unobserved test data. For every test vector $\va_\text{new}\in\mathbb{R}^{p}$, the corresponding label $\widehat{y}$ can be predicted using the following (probabilistic) role
\begin{align}
\widehat{y}=\widehat{\varphi} [\widehat{\vw}^\top \sigma(\mF^\top \va_\text{new})],
\end{align}
for some predefined function $\widehat{\varphi}(\cdot)$, where $\widehat{\vw}\in\mathbb{R}^k$ denotes the optimal solution of the  formulation given in \eqref{mform}. To measure the performance of the learning problem in \eqref{mform} on any unobserved test data $\lbrace ({y}_{\text{new}},\va_\text{new}) \rbrace$, we use the \textit{generalization error} defined as follows
\begin{align}\label{testerr_def}
\mathcal{E}_{\text{test}}=\frac{1}{4^\upsilon} \mathbb{E}\Big[ \big( {y}_{\text{new}}-\widehat{\varphi}(\widehat{\vw}^\top \sigma(\mF^\top \va_{\text{new}})) \big)^2 \Big].
\end{align}
Here, the expectation is taken over the distribution of the unobserved test  vector $\va_{\text{new}}$ and the (random) functions $\varphi(\cdot)$ and $\widehat{\varphi}(\cdot)$. We take $\upsilon=0$ for regression problems (e.g. $\varphi(\cdot)$ is the identity function) and $\upsilon=1$ for binary classification problems (e.g. $\varphi(\cdot)$ is the sign function).
 In this paper, we assume that the test data is generated according to the same training model introduced in \eqref{cmodel}.
Furthermore, we measure the performance of the formulation in \eqref{mform} on the training data via the \textit{training error} defined as follows
\begin{align}
\mathcal{E}_{\text{train}}= \frac{1}{2n \ell} \sum_{i=1}^{n} \sum_{j=1}^{\ell} \big(y_i-\widehat{\vw}^\top \sigma\big(\mF^\top [\va_i+\Delta \vz_{ij}] \big) \big)^2.\nonumber
\end{align}
Note that the training error is the optimal cost value of our learning formulation in \eqref{mform} without regularization.

\subsection{Contributions}\label{ctrbs}
The contribution of this paper can be summarized as follows:
\begin{itemize}[wide = 0pt]
\item[(C.1)] Our first contribution is a \textit{correlated Gaussian equivalence conjecture} (cGEC). Our conjecture considers Gaussian input and perturbation vectors. It states that the learning formulation in \eqref{mform} is asymptotically equivalent to a simpler optimization problem that can be formulated by replacing the non--linear vectors 
\begin{align}
\vv_{ij}=\sigma\big(\mF^\top [\va_i+\Delta \vz_{ij}] \big),\nonumber
\end{align}
with linear vectors with the following form
\begin{align}
\vq_{ij}=\mu_0\vec{1}_k+\widetilde{\mu}_1 \mF^\top \va_i+\widehat{\mu}_1 \mF^\top \vz_{ij} + \mu_2 \vb_i+ \mu_3  \vp_{ij}.\nonumber
\end{align}
Here, $\lbrace \vb_i \rbrace_{i=1}^n$ and $\lbrace \vp_{ij} \rbrace_{i,j=1}^{n,\ell}$ are independent standard Gaussian random vectors and independent of $\lbrace \va_i \rbrace_{i=1}^n$ and $\lbrace \vz_{ij} \rbrace_{i,j=1}^{n,\ell}$. Moreover, the weights $\mu_0$, $\widetilde{\mu}_1$, $\widehat{\mu}_1$, $\mu_2$ and $\mu_3$ depend on $\sigma(\cdot)$ and $\Delta$ as follows
\begin{align}
\begin{cases}\nonumber
\mu_0=\mathbb{E}[\sigma(x_1)], \widetilde{\mu}_1=\mathbb{E}[z \sigma(x_1)], \widehat{\mu}_1=\mathbb{E}[v_1 \sigma(x_1)]\\
\mu_2^2=\mathbb{E}[\sigma(x_1) \sigma(x_2)]-\mu_0^2-\widetilde{\mu}_1^2\\
\mu_3^2=\mathbb{E}[\sigma(x_1)^2 ] - \mathbb{E}[\sigma(x_1) \sigma(x_2)]-\widehat{\mu}_1^2,
\end{cases}
\end{align} 
where $x_1=z+\Delta v_1$, $x_2=z+\Delta v_2$, and $z$, $v_1$ and $v_2$ are independent standard Gaussian random variables. Specifically, the cGEC states that the performance of the formulation:
\begin{align}\label{gform}
&\min_{\vw\in\mathbb{R}^k} \frac{1}{2n\ell} \sum_{i=1}^{n} \sum_{j=1}^{\ell}  \big(y_i-  \widetilde{\mu}_1 \vw^\top  \mF^\top \va_i-\widehat{\mu}_1 \vw^\top  \mF^\top \vz_{ij} \nonumber\\
&-\mu_0 \vw^\top \vec{1}_k  -\mu_2 \vw^\top \vb_i- \mu_3 \vw^\top \vp_{ij} \big)^2 + \tfrac{\lambda}{2} \norm{\vw}^2,
\end{align}
is asymptotically equivalent to the performance of the noisy formulation. This conjecture is valid in the asymptotic limit (i.e. $n$, $p$ and $k$ grow to infinity at finite ratios). More details about this equivalence is provided in Section \ref{CGEC}. We refer to \eqref{gform} as the \textit{Gaussian formulation}. The cGEC is verified by presenting multiple simulations in different scenarios.

\item[(C.2)] The second contribution is a precise characterization of the training and generalization errors of the noise injection procedure formulated in \eqref{mform} for Gaussian input and perturbation vectors. Our analysis is based on the cGEC and valid in the high--dimensional setting (i.e. $n$, $p$ and $k$ grow to infinity at finite ratios). Our predictions show that the asymptotic limit of the training and generalization errors can be precisely predicted after solving a scalar deterministic formulation. The theoretical predictions are obtained using an extended version of the convex Gaussian min-max theorem (CGMT) \cite{chris:151,chris:152} which we refer to as the multivariate CGMT. The new version of the CGMT accounts for the correlation introduced by injecting Gaussian noise during the learning process. Our asymptotic results hold for a general family of feature matrices, activation functions and generative models satisfying \eqref{cmodel}.  

\begin{figure}[t]
    \centering
    \subfigure[]{\label{fintro1a}
        \includegraphics[width=0.4\linewidth]{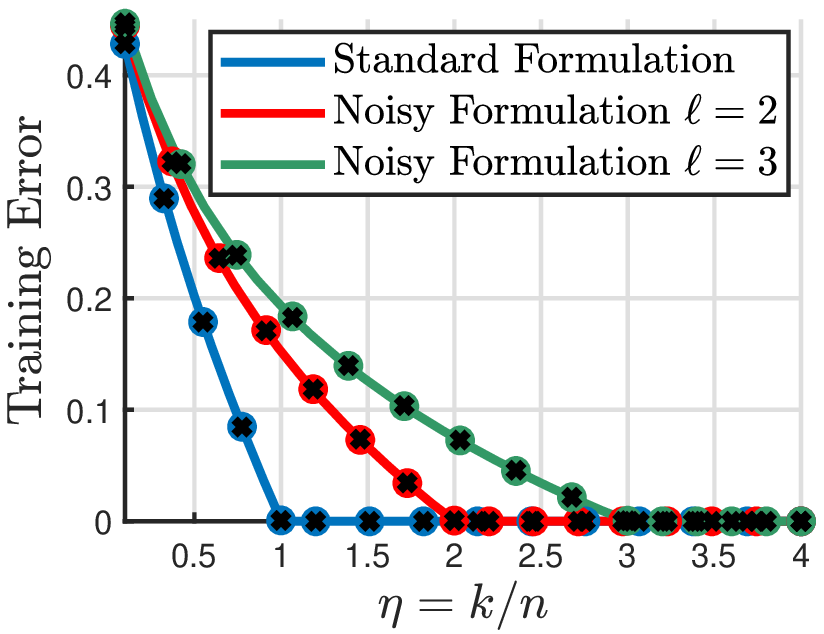}}
        \subfigure[]{\label{fintro1b}
        \includegraphics[width=0.4\linewidth]{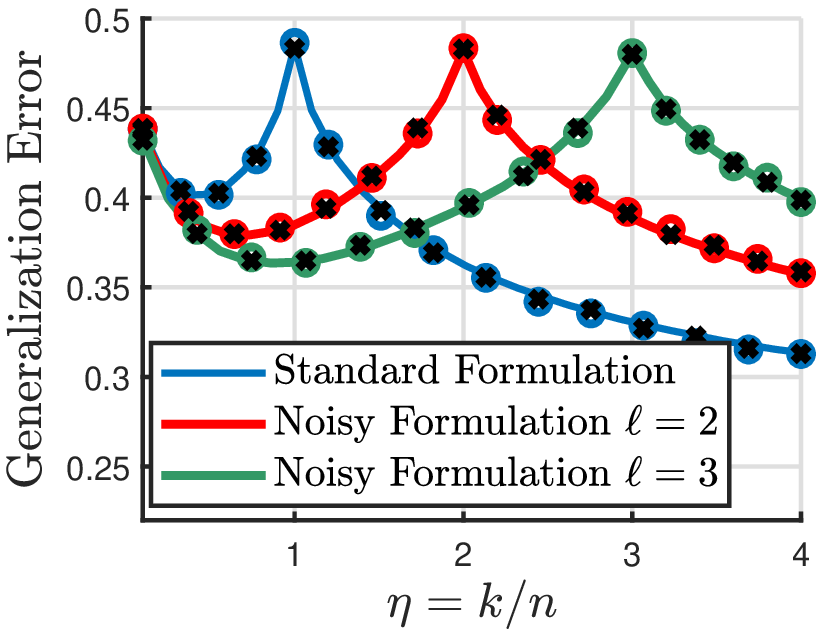}}

    \caption{Solid line: Theoretical predictions. Circle: Numerical simulations for \eqref{mform}. Black cross: Numerical simulations for \eqref{gform}. $\varphi(\cdot)$ is the sign function with probability $\theta$ of flipping the sign. $\widehat{\varphi}(\cdot)$ and $\sigma(\cdot)$ are the sign function. We set $p=500$, $\Delta=0.5$, $\alpha=2$, $\theta=0.1$, $\lambda=10^{-5}$. $\mF$ has independent Gaussian components with zero mean and variance $1/p$. The results are averaged over $200$ independent Monte Carlo trials.}
        \label{fintro1}
\end{figure}

In Figure \ref{fintro1}, we compare our theoretical predictions with the actual performance of the learning problem given in \eqref{mform}. First, note that our asymptotic predictions are in excellent agreement with the actual performance of \eqref{mform} and its Gaussian formulation given in \eqref{gform}, even for moderate values of $p$, $n$ and $k$. This provides a first empirical validation of our results. Figure \ref{fintro1} also study the effects of $\ell$ on the training and generalization performance. Note that the generalization error follows a double descent curve \cite{BMM18,BHMM18}. Specifically, the generalization error decreases monotonically as a function of the complexity $\eta=k/n$ after reaching a peak known as the interpolation threshold \cite{BMM18,BHMM18}. Figure \ref{fintro1b} particularly demonstrates that the location of the interpolation threshold depends on the number of noise samples. Specifically, the interpolation threshold peak occurs at $\ell$ for fixed noise variance $\Delta=0.5$. Additionally, Figure \ref{fintro1a} shows that the interpolation threshold occurs when the training error converges to zero. Then, we can see that perturbing the input data with $\ell$ random noise vectors moves the interpolation threshold from $1$ to $\ell$ and improves the generalization error for complexity $\eta$ lower than $\ell$.

\item[(C.3)] The third contribution is a precise analysis of the regularization effects of the considered noise injection procedure. Specifically, we use the asymptotic predictions of the noisy formulation to show that the noise injection model in \eqref{mform} is equivalent to solving a standard feature formulation with an additional weighted ridge regularization. This theoretical result is valid when the number of noise samples $\ell$ tends to infinity. In particular, we show that the formulation in \eqref{mform} is equivalent to solving the problem
\begin{align}\label{form_asy}
\min_{\vw\in\mathbb{R}^k}&~\frac{1}{2n} \sum_{i=1}^{n}  \big(y_i-  \vw^\top \widehat{\sigma}(\mF^\top \va_i)  \big)^2+ \frac{1}{2} \norm{\mR^{\frac{1}{2}} \vw}^2 + \tfrac{\lambda}{2} \norm{\vw}^2,
\end{align}
when $\ell$ grows to infinity slower than the dimensions $n$, $p$ and $k$.
Here, $\widehat{\sigma}(\cdot)$ is a new activation function and $\mR$ is defined as follows
\begin{align}
\mR=\widehat{\mu}_1^2 \mF^\top \mF + \mu_3^2 \mI_k.
\end{align}
Finally, we provide a precise asymptotic characterization of the training and generalization errors corresponding to \eqref{form_asy}. We refer to this formulation as the \textit{limiting formulation}.

\begin{figure}[t]
    \centering
    \subfigure[]{\label{fintro2a}
    \includegraphics[width=0.4\linewidth]{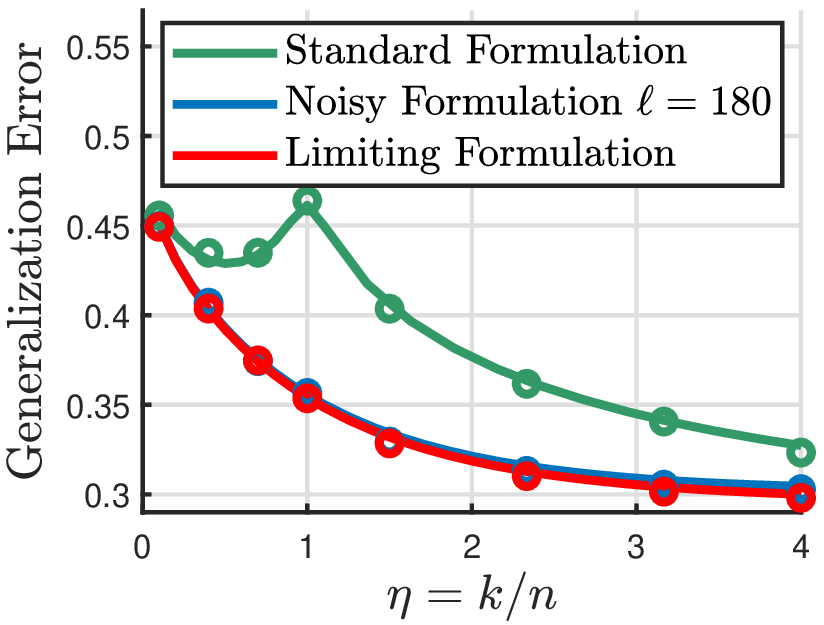}
    }
    \subfigure[]{\label{fintro2b}
    \includegraphics[width=0.4\linewidth]{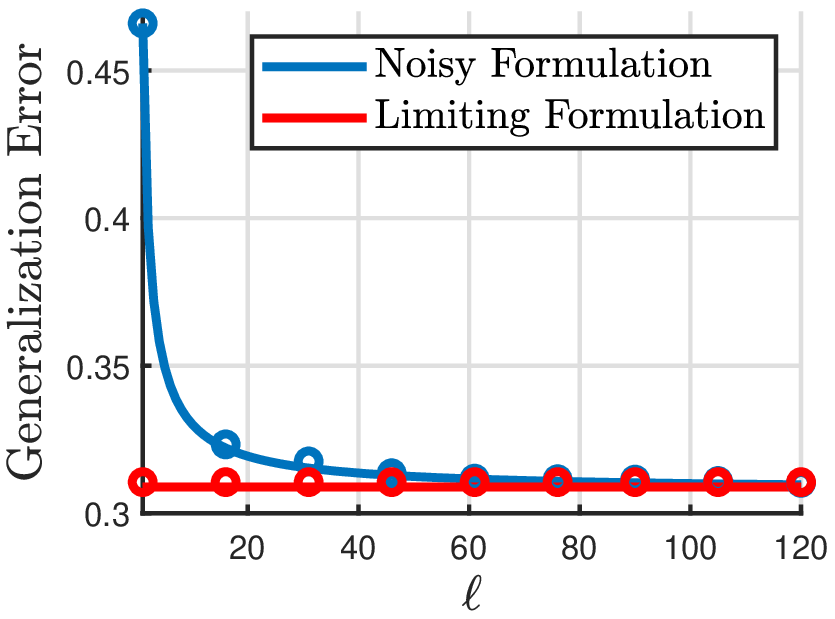}
    }
    \caption{Solid line: Theoretical predictions. Circle: Numerical simulations for \eqref{mform} and \eqref{form_asy}. $\varphi(\cdot)$, $\widehat{\varphi}(\cdot)$ and $\sigma(\cdot)$ are the sign function. {\bf (a)} $p=700$, $\ell=180$, $\alpha=1$, $\Delta=1$ and $\lambda=10^{-3}$. {\bf (b)} $p=600$, $\alpha=1.5$, $\Delta=1$ and $\lambda=10^{-3}$. $\mF$ has independent Gaussian components with zero mean and variance $1/p$. The results are averaged over $100$ independent Monte Carlo trials.}
        \label{fintro2}
\end{figure}

Figure \ref{fintro2} provides another empirical verification of our theoretical predictions since it shows that they are in excellent agreement with the actual performance of \eqref{mform} and \eqref{form_asy}. Figure \ref{fintro2a} shows that the noisy formulation in \eqref{mform} has approximately the same performance as the formulation in \eqref{form_asy} for $\ell=180$. This is aligned with our theoretical prediction which states that the formulations in \eqref{mform} and \eqref{form_asy} are equivalent when $\ell$ grows to infinity slower than the dimensions $n$, $p$ and $k$. Figure \ref{fintro2b} illustrates the convergence behavior of the generalization error corresponding to \eqref{mform} for a fixed value of $\eta$. It particularly shows that the noisy formulation has a good \textit{convergence rate}, i.e. the limit is already attained with a moderate value of $\ell$. Moreover, we can see from Figure \ref{fintro2a} that the convergence rate depends on the complexity parameter $\eta$.
\end{itemize}

\subsection{Related Work}
There has been significant interest in precisely characterizing the performance of the random feature model in recent literature \cite{RF_monta_1,RF_rep_1,dhifallah2020,hu2020}. The ridge regression formulation, (i.e. $\varphi(\cdot)$ is the identity function and $\Delta=0$ in \eqref{mform}) is precisely analyzed in \cite{RF_monta_1} where the feature matrix is Gaussian. In a subsequent work, \cite{RF_monta_2} uses the CGMT  to accurately analyze the maximum-margin linear classifier in the overparametrized regime. The work in \cite{RF_rep_1} precisely characterizes the performance of the standard formulation, i.e. $\Delta=0$, for general families of feature matrices and convex loss functions. The results presented in \cite{RF_rep_1} are derived using the non--rigorous replica method \cite{Mezard:1986}. The predictions in \cite{RF_rep_1} are rigiourously verified in \cite{dhifallah2020} using the CGMT. All the previous work consider an unperturbed formulation of the random feature model. In this paper, we study the effects of adding random noise during  training. Our analysis is based on an extended version of the CGMT referred to as the multivariate CGMT. The CGMT is first used in \cite{stojnic2013} and further developed in \cite{chris:151}. It extends a Gaussian theorem first introduced in \cite{gordon}. It relies on (strong) convexity properties to prove an equivalence between two Gaussian processes. It has been successfully applied in the analysis of convex regression \cite{chris:151,ouss19,dhifallah2020} and convex classification \cite{ea19bc,kam20,mig2020role,dhifallah2021phase} formulations. 

There has been significant interest in studying the effects of random noise injection during training (see e.g. \cite{bishop,Gan,atvn}). In particular, prior literature \cite{zantedeschi,kannan} shows that Gaussian noise injection during training improves the robustness of the network. Moreover, several recent papers \cite{bishop,maxup} show that such perturbation technique introduces some sort of regularization to the loss function. In particular, the work in \cite{maxup} shows that minimizing the worst--case loss introduces a gradient norm regularization.

Another popular perturbation approach used in regularizing learning models is the \textit{dropout} method \cite{dropout,implicitdp}. It consists of perturbing the learning problem by randomly dropping units from the network during the training procedure. In this paper, we precisely analyze the Gaussian noise injection method and we leave the analysis of the dropout technique for future work. Our empirical studies suggest that the dropout method has a better convergence rate as compared to the noisy formulation. Moreover, they suggest that both methods have comparable generalization performance.

\subsection{Organization}
The rest of this paper is organized as follows. Section \ref{CGEC} provides more details about the cGEC. Section \ref{assmp} lay out the technical assumptions under which our results are derived. Section \ref{pr_analysis} provides an asymptotic characterization of the noisy formulation. Our theoretical predictions hold for a general family of feature matrices, activation functions and generative models as in \eqref{cmodel}. We provide additional simulation examples for special cases of our results in Section \ref{sim_res}. The detailed proof of our theoretical predictions is provided in Section \ref{dproof}. Section \ref{concd} concludes the paper. The appendix in Section \ref{app_add} provides additional technical details.

\section{Gaussian Equivalence Conjecture with an Intuitive Explanation}\label{CGEC}  
Consider three independent standard Gaussian random vectors $\va\in\mathbb{R}^p$, $\vz_1\in\mathbb{R}^p$ and $\vz_2\in\mathbb{R}^p$. Moreover, consider the random variables $\nu_1=\vxi^\top \va$, $\nu_2$ and $\nu_3$ defined as follows
\begin{align}
&\nu_2=\vw^\top \sigma(\mF^\top [\va+\Delta \vz_1]),\nu_3=\vw^\top \sigma(\mF^\top [\va+\Delta \vz_2]),\nonumber
\end{align}
where $\sigma(\cdot)$, $\vxi\in\mathbb{R}^p$ and $\mF\in\mathbb{R}^{p\times k}$ satisfy some regularity assumptions, and where $\vw\in\mathbb{R}^k$. Moreover, define the joint probability distribution of $\nu_1$, $\nu_2$ and $\nu_3$ as $\mathbb{P}(\nu_1,\nu_2,\nu_3)$. The cGEC states that the joint distribution $\mathbb{P}(\nu_1,\nu_2,\nu_3)$ is asymptotically Gaussian, i.e. $d(\mathbb{P}(\nu_1,\nu_2,\nu_3),\mathbb{P}(\nu_{g,1},\nu_{g,2},\nu_{g,3}))$ converges in probability to zero where $\nu_{g,1}$, $\nu_{g,2}$ and $\nu_{g,3}$ are jointly Gaussian with the same first and second moments of $\nu_1$ , $\nu_2$ and $\nu_3$ and $d(\cdot,\cdot)$ is some probability distance that metrizes the convergence in distribution (e.g maximum-sliced (MS) distance \cite{ko19gener,goldt20g}). To have the same first two moments, the random variables ${\nu}_{g,1}$, ${\nu}_{g,2}$ and ${\nu}_{g,3}$ are selected as follows
 $\nu_{g,1}=\nu_1$ and
\begin{align}
&{\nu}_{g,2}=\vw^\top \big( \mu_0  \vec{1}_k + \mF^\top [\widetilde{\mu}_1 \va+\widehat{\mu}_1 \vz_{1}]+\mu_2 \vb+\mu_3 \vp_{1} \big),\nonumber\\
&{\nu}_{g,3}=\vw^\top \big( \mu_0  \vec{1}_k + \mF^\top [ \widetilde{\mu}_1 \va+\widehat{\mu}_1 \vz_{2}]+\mu_2 \vb+\mu_3 \vp_{2} \big),\nonumber
\end{align}
where $\vec{1}_k$ represents the all $1$ vector with size $k$. Here, $\vb \in\mathbb{R}^k$, $\vp_1 \in\mathbb{R}^k$ and $\vp_2 \in\mathbb{R}^k$ are three independent standard Gaussian random vectors and they are independent of $\va$, $\vz_{1}$ and $\vz_{2}$. 
The weights $\mu_0$, $\widetilde{\mu}_1$, $\widehat{\mu}_1$, $\mu_2$ and $\mu_3$ are as defined in Section \ref{ctrbs}.

In the standard setting, i.e. $\Delta=0$, the cGEC is equivalent to the uniform Gaussian equivalence theorem (uGET), observed and used in many earlier papers \cite{RF_monta_2,RF_rep_1,modelling,dhifallah2020}. Recently, the work in \cite{hu2020} provided a rigorous proof of the uGET. Specifically, the work in \cite{hu2020} proves a special case of cGEC when $\Delta=0$, the feature matrix is Gaussian and the activation functions have bounded first three derivatives. However, similar to previous literature \cite{modelling}, we conjecture that the cGEC is valid under more general settings. We believe that the analysis in \cite{hu2020} can be extended to prove the cGEC and we leave the full technical details for future work. 

Our theoretical results are based on this conjecture. It is thus useful to also provide an intuitive explanation for the plausibility of the cGEC. Assume that $\vf_i$ is the $i$th column of $\mF$. The nonlinear term $I_1=\sigma( \vf_i^\top(\va+\Delta \vz_1) )$ can be decomposed by projecting on the basis $(1,\vf_i^\top\va,\vf_i^\top\vz_1)$, i.e. $I_1=\mu_0 +\widetilde{\mu}_1 \vf_i^\top \va+\widehat{\mu}_1 \vf_i^\top \vz_1 + \sigma^\perp_i$. The term $\sigma^\perp_i$ is selected so that we match the variance of $I_1$ and the correlation with $I_2=\sigma( \vf_i^\top(\va+\Delta \vz_2) )$. We note that the cGEC makes sense when the columns of $\mF$ are independent and have the same norm. These are the regularity assumptions for the feature matrix in \cite{modelling}. The same intuition also appears in the analysis of the unperturbed random kernel models, in particular, the random feature model \cite{RF_monta_2}. In this paper, we suppose that the feature matrix and the activation function satisfy the regularity assumptions in \cite{modelling} and conjecture that the Gaussian equivalence is valid for $(\nu_1,\nu_2,\dots,\nu_\ell)$ for $\ell\geq 1$ and uniformly in $\vw\in\mathbb{R}^k$. Using the cGEC, the performance of the formulation in \eqref{mform} can be fully characterized by asymptotically analyzing the Gaussian formulation given in \eqref{gform}. We verify this conjecture by performing multiple simulation examples in different settings (regression and classification). 

\section{Technical Assumptions}\label{assmp}
In this paper, we precisely characterize the noisy formulation under the following technical assumptions. 
\begin{assumption}[Gaussian Vectors]\label{itm:data_Gauss} 
The input vectors $\lbrace \va_i \rbrace_{i=1}^{n}$ and the perturbation vectors $\lbrace \vz_{ij} \rbrace_{i=1,j=1}^{n,\ell}$ are known and drawn  independently from a standard Gaussian distribution. Without loss of generality, we assume that the hidden vector $\vxi\in\mathbb{R}^p$ has unit norm. Also, it is independent of the input vectors, the noise vectors and $\mF$.
\end{assumption}
Our theoretical predictions are valid in the high-dimensional setting where  $n$, $p$ and $k$ grow to infinity at finite ratios.
\begin{assumption}[Asymptotic Limit]\label{itm:asy_lim} 
The number of samples and the number of hidden neurons satisfy $n=n(p)$ and $k=k(p)$, respectively. We assume that $\alpha_p=n(p)/p \to \alpha>0$ and $\eta_p=k(p)/n(p) \to \eta>0$ as $p\to \infty$. Also, the number of noise injections $\ell$ is independent of $p$.
\end{assumption}
Moreover, we consider the following assumption to ensure that the generalization error defined in \eqref{testerr_def} concentrates in the high--dimensional limit.
\begin{assumption}[Generative Model]\label{itm:fun_fwf} 
The data generating function $\varphi(\cdot)$ introduced in \eqref{cmodel} is independent of the input vectors, the noise vectors and the feature matrix. Moreover, the following conditions are satisfied.
\begin{itemize}[wide = 0pt]
\item[\bf (a)] $\varphi(\cdot)$ and $\widehat{\varphi}(\cdot)$ are continuous almost everywhere in $\mathbb{R}$. For every $h > 0$ and $z\sim\mathcal{N}(0,h)$, we have $\mathbb{E}[\varphi^2(z)]<+\infty$, $\mathbb{E}[z\varphi(z)]\neq 0$ and $0<\mathbb{E}[\widehat{\varphi}^2(z)]<+\infty$.
\item[\bf (b)] For any compact interval $[c, C]$, there exists a function $g(\cdot)$ such that 
\[
\sup_{h,\chi \in [c, C]} \abs{ \widehat{\varphi}(\chi+h x) }^2 \leq g(x) \quad \text{for all } x \in \R.
\]
Additionally, the function $g(\cdot)$ satisfies $\mathbb{E}[{{g}^2(z)}]<+\infty$, where $z\sim\mathcal{N}(0,1)$.
\end{itemize}
\end{assumption}
In addition to the assumptions in Section \ref{CGEC}, we consider the following regularity conditions for the activation function.
\begin{assumption}[Activation Function]\label{itm:act_fun} 
The activation function $\sigma(\cdot)$ is independent of the input vectors, the noise vectors and the feature matrix. It also satisfies $\mathbb{E}[\sigma(z)^2]<+\infty$ and $\mathbb{E}[z\sigma(z)]\neq 0$, where $z\sim\mathcal{N}(0,1)$.
\end{assumption}
In addition to the assumptions discussed in Section \ref{CGEC}, we consider a family of feature matrices that satisfy the following assumption to guarantee that the Gaussian formulation converges to a deterministic problem.
\begin{assumption}[Feature Matrix] \label{itm:ass_F} 
The SVD decomposition of the feature matrix can be expressed as $\mF=\mU\mS\mV$, where $\mU\in\mathbb{R}^{p\times p}$ and $\mV\in\mathbb{R}^{k\times k}$ are random orthogonal matrices and $\mS\in\mathbb{R}^{p\times k}$ is a diagonal matrix formed by the singular values of $\mF$. Define the matrix $\mM$ as $\mM=\mF^\top\mF$. 
\begin{itemize}
\item[\bf (a)] We assume that $\mU$ is a Haar-distributed random unitary matrix.
\item[\bf (b)] We also assume that the empirical distribution of the eigenvalues of the matrix $\mM$ converges weakly to a probability distribution $\mathbb{P}_\kappa(\cdot)$ supported in $[0~\zeta_{\text{max}}]$, where $\zeta_{\text{max}}>0$ is a constant independent of $p$ and $\ell$.
\item[\bf (c)] We finally assume that $\mathbb{E}_\kappa[\kappa] > 0$, where the expectation is taken over the distribution $\mathbb{P}_\kappa(\cdot)$.
\end{itemize}
\end{assumption}
Based on Assumption \ref{itm:asy_lim}, we also have the following property $\delta_p=k(p)/p \to \delta>0$ as $p$ grows to infinity. Our theoretical predictions use the weak convergence in Assumption \ref{itm:ass_F} to fully characterize the noisy formulation. 

\section{Precise Analysis of the Noisy Formulation}\label{pr_analysis}
In this section, we asymptotically analyze the noise injection procedure introduced in \eqref{mform}. Specifically, we provide a precise asymptotic characterization of the training and generalization errors corresponding to  \eqref{mform}. 
\subsection{Precise Asymptotic Analysis}\label{st_asylim}
Before stating our technical results, we start with few definitions. Define the following two deterministic functions 
\begin{align}
&T_{2,\lambda}(\vt,\vtau)= \frac{\delta}{T_1^2} \mathbb{E}_\kappa\Big[ \frac{\kappa}{g_{\kappa,\lambda}(\vt,\vtau)} \Big] /\Big(1-\frac{\widetilde{\mu}_1^2 t_1 \delta}{ \tau_1} \mathbb{E}_\kappa\Big[ \frac{\kappa}{g_{\kappa,\lambda}(\vt,\vtau)} \Big] \Big)\nonumber\\
&T_{3,\lambda}(\vt,\vtau)=\frac{ t_1^2}{\ell} \mathbb{E}_\kappa\Big[\frac{ \widetilde{\mu}_1^2 \kappa + \mu_2^2}{g_{\kappa,\lambda}(\vt,\vtau)}\Big]+\frac{t_1^2+t_2^2}{\ell^2} \mathbb{E}_\kappa\Big[\frac{ \widehat{\mu}_1^2 \kappa + \mu_3^2}{g_{\kappa,\lambda}(\vt,\vtau)}\Big],\nonumber
\end{align}
where the expectations are taken over the probability distribution $\mathbb{P}_\kappa(\cdot)$ defined in Assumption \ref{itm:ass_F} and where $\vt=[ t_1,t_2 ]^\top$ and $\vtau=[ \tau_1,\tau_2 ]^\top$. Here, the function $g_{\kappa,\lambda}(\cdot,\cdot)$ is defined as follows
\begin{align}
g_{\kappa,\lambda}(\vt,\vtau)&=\frac{t_1}{\tau_1} \big( \widetilde{\mu}_1^2 \kappa + \mu_2^2 \big)+\Big( \frac{t_1}{\tau_1 \ell}+\frac{t_2(\ell-1)}{\tau_2 \ell} \Big) \big( \widehat{\mu}_1^2 \kappa + \mu_3^2 \big) + \lambda.
\end{align}
Furthermore, define the following four-dimensional deterministic optimization problem
\begin{align}
\label{scprob1}
\hspace{-2mm}\inf_{\substack{\tau_1> 0 \\ \tau_2> 0} } \max_{\substack{ t_1\geq 0\\ t_2 \geq 0}}&~\frac{t_1}{2\tau_1} \big( \gamma_1 -  2 \widetilde{\mu}_1 T_1 q_{t,\tau}^\star \gamma_2 + \widetilde{\mu}_1^2 T_1^2 (q_{t,\tau}^\star)^2 + \mu_0^2 (\vartheta^\star)^2-2\mu_0 \vartheta^\star\gamma_3 \big)  \nonumber\\
&~+\frac{\tau_1 t_1+\tau_2 t_2}{2 \ell}  - \frac{t_1^2+t_2^2}{2 \ell} +\frac{(q_{t,\tau}^\star)^2}{2 T_{2,\lambda}(\vt,\vtau)} -\frac{\eta T_{3,\lambda}(\vt,\vtau)}{2},
\end{align}
where the constant $\vartheta^\star$ satisfies $\vartheta^\star=0$ if $\mu_0=0$ and $\vartheta^\star=\gamma_3/\mu_0$ otherwise, and $T_1=\sqrt{\delta \mathbb{E}_\kappa[\kappa]}$.
Here, $\gamma_1$, $\gamma_2$ and $\gamma_3$ depend on the data distribution and are defined as $\gamma_1=\mathbb{E}[y^2]$, $\gamma_2=\mathbb{E}[y s]$, $\gamma_3=\mathbb{E}[y]$, where $y=\varphi( s )$, and $s$ is a standard Gaussian random variable. Note that the problem defined in \eqref{scprob1} depends on $q_{t,\tau}^\star$ which is given by
\begin{align}\label{qstar_m}
q_{t,\tau}^\star=\frac{ \gamma_2 t_1 \widetilde{\mu}_1 T_1 T_{2,\lambda}(\vt,\vtau) }{\tau_1 + t_1 \widetilde{\mu}_1^2 T_1^2 T_{2,\lambda}(\vt,\vtau) }.
\end{align}
Now, we summarize our main theoretical results in the following theorem.
\begin{theorem}[Noisy Formulation Characterization]\label{ther1}
Suppose that the assumptions in Section \ref{assmp} are all satisfied and the cGEC introduced in Section \ref{CGEC} is valid. Then, the training error converges in probability as follows
\begin{align}
\mathcal{E}_{\text{train}} \xrightarrow{~p\to+\infty~} C^\star(\Delta,\lambda)-\frac{\lambda}{2} \left((q^\star)^2+h^\prime(\lambda) \right), \nonumber
\end{align}
where $C^\star(\Delta,\lambda)$ is the optimal cost of the deterministic problem in \eqref{scprob1}.
Here, the function $h(\cdot)$ is defined as follows
\begin{align}
h(\lambda)=-(q^\star)^2 \Big( \lambda - \frac{1}{T_{2,\lambda}(\vt^\star,\vtau^\star)} \Big)-\eta T_{3,\lambda}(\vt^\star,\vtau^\star).\nonumber
\end{align}
Moreover, the generalization error defined in \eqref{testerr_def} converges in probability to a deterministic function as follows
\begin{align}\label{gen_conv}
{\mathcal{E}}_{\text{test}} \xrightarrow{~p\to+\infty~} \frac{1}{4^\upsilon} \mathbb{E}\left[ \left( \varphi(g_1) -\widehat{\varphi}(g_2) \right)^2 \right],
\end{align}
where $g_1$ and $g_2$ have a bivariate Gaussian distribution with  mean vector $[0,\mu_{0s} \vartheta^\star]$ and covariance matrix $\mC$, defined as follows
\begin{align}
\mC=\begin{bmatrix}
1 & \mu_{1s} T_1 q^{\star} \\
 \mu_{1s} T_1 q^\star &  \mu_{1s}^2 \beta^\star+\mu_{2s}^2 \left((q^\star)^2+h^\prime(\lambda) \right)
\end{bmatrix}.\nonumber
\end{align}
The constant $\vartheta^\star$ satisfies $\vartheta^\star=0$ if $\mu_0=0$ and $\vartheta^\star=\gamma_3/\mu_0$ otherwise. Here, the constants $\mu_{0s}$, $\mu_{1s}$ and $\mu_{2s}$ are defined as $\mu_{0s}=\mathbb{E}[\sigma(z)]$, $\mu_{1s}=\mathbb{E}[z \sigma(z)]$ and $\mu_{2s}^2=\mathbb{E}[\sigma(z)^2]-\mu_{0s}^2-\mu_{1s}^2$, where $z$ is a standard Gaussian random variable. Additionally, the constant $\beta^\star$ can be computed via the following expression
\begin{align}\label{bstar}
\beta^\star&= \frac{1}{V_1+V_3} \Big( V_1 T_1^2 - V_2-V_4 -\lambda + \frac{1}{T_{2,\lambda}(\vt^\star,\vtau^\star)} \Big) (q^\star)^2  \nonumber\\
&+ \frac{\eta T_{3,\lambda}(\vt^\star,\vtau^\star)}{V_1+V_3} - \frac{V_2+V_4+\lambda}{V_1+V_3} h^\prime(\lambda),
\end{align}
where the constants $V_1$, $V_2$, $V_3$ and $V_4$ are defined as follows
\begin{align}
&V_1=\frac{t_1^\star \widetilde{\mu}_1^2}{\tau_1^\star},~V_3=\widehat{\mu}_1^2 \Big( \frac{t_1^\star}{\tau_1^\star \ell}+\frac{t_2^\star (\ell-1)}{\tau_2^\star \ell} \Big)\nonumber\\
&V_2=\frac{t_1^\star \mu_2^2}{\tau_1^\star},V_4\textit{•}=\mu_3^2 \Big( \frac{t_1^\star}{\tau_1^\star \ell}+\frac{t_2^\star(\ell-1)}{\tau_2^\star \ell} \Big).\nonumber
\end{align}
Here, $q^\star=q^\star_{t^\star,\tau^\star}$ is given in \eqref{qstar_m}, $\vt^\star=[t_1^\star,t_2^\star]^\top$ and $\vtau^\star=[ \tau_1^\star,\tau_2^\star ]^\top$. Moreover, $\lbrace t_1^\star,t_2^\star,\tau_1^\star,\tau_2^\star \rbrace$ denotes the optimal solution of the problem defined in \eqref{scprob1}. Also, we treat $q^\star$, $\vt^\star$ and $\vtau^\star$ as constants independent of $\lambda$ when we compute the derivative of the function $h(\cdot)$.
\end{theorem}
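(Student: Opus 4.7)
The plan is to combine the correlated Gaussian equivalence conjecture (cGEC) with an extended convex Gaussian min--max theorem (CGMT) to reduce the high--dimensional random problem in \eqref{mform} to the four--dimensional deterministic problem in \eqref{scprob1}. First, I would invoke the cGEC to replace the nonlinear features $\sigma(\mF^\top[\va_i+\Delta\vz_{ij}])$ by their linear Gaussian equivalents, reducing the noisy formulation to the Gaussian formulation in \eqref{gform}. All subsequent analysis operates on \eqref{gform}, which is a ridge--regularized least squares problem whose design involves three classes of Gaussian randomness: (i) the data/perturbation vectors $\va_i,\vz_{ij}$ multiplied by $\mF$, (ii) the per--sample vectors $\vb_i$ (shared across $j=1,\dots,\ell$), and (iii) the per--pair vectors $\vp_{ij}$ (independent across $(i,j)$). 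This three--way structure is what forces the use of a multivariate CGMT below.

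Next, I would dualize the squared loss via $\tfrac{1}{2}\norm{\vr}^2=\max_{\vu}(\vu^\top\vr-\tfrac{1}{2}\norm{\vu}^2)$, collecting the residuals into a single vector indexed by $(i,j)$. The resulting saddle--point problem is bilinear in $\vw$ and the dual $\vu$, with Gaussian ``matrices'' on both sides. Because $\vb_i$ and $\vp_{ij}$ exhibit different correlation patterns across $(i,j)$, a univariate CGMT is insufficient: I would apply a multivariate CGMT that introduces one pair of Gaussian auxiliary variables for the $\vb_i$--block and a second pair for the $\vp_{ij}$--block. This yields an auxiliary optimization (AO) with two ``time scales'' which are the origin of the pairs $(t_1,\tau_1)$ and $(t_2,\tau_2)$ in \eqref{scprob1}: $(t_1,\tau_1)$ scalarizes the component of the residual variance shared across the $\ell$ perturbations of a fixed sample, while $(t_2,\tau_2)$ scalarizes the component that decorrelates across $j$.

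Third, I would scalarize the AO itself. Using the right--orthogonal invariance granted by the SVD $\mF=\mU\mS\mV$ of Assumption \ref{itm:ass_F}, I would rotate $\vw$ into the eigenbasis of $\mM=\mF^\top\mF$; the $\vw$ minimization then decouples coordinate--by--coordinate, with per--coordinate quadratic coefficient exactly $g_{\kappa,\lambda}(\vt,\vtau)$. The weak convergence of the empirical spectral distribution of $\mM$ converts sums over eigenvalues into expectations against $\mathbb{P}_\kappa$, producing the functions $T_{2,\lambda}(\vt,\vtau)$ and $T_{3,\lambda}(\vt,\vtau)$. The cross term between $\vw$ and the teacher direction $\vxi$ pins the ``signal'' component of $\vw$ along $\mF^\top\vxi$ with magnitude $q^\star$ determined by first--order optimality, giving the formula \eqref{qstar_m}. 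Assembling the four Lagrangian terms produces exactly the functional in \eqref{scprob1} with optimal value $C^\star(\Delta,\lambda)$.

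Finally, to extract the training and generalization errors I would apply the envelope theorem to the reduced problem: differentiating $C^\star(\Delta,\lambda)$ in $\lambda$ yields $\tfrac{1}{2}((q^\star)^2+h'(\lambda))$, which equals the asymptotic value of $\norm{\widehat{\vw}}^2$ and hence gives the training error as stated. For generalization, the AO also identifies the asymptotic joint distribution of the ``signal'' $\widehat{\vw}^\top\mF^\top\vxi$ (a multiple of $q^\star$) and the ``energy'' $\widehat{\vw}^\top\mR\widehat{\vw}$ (which determines $\beta^\star$ through \eqref{bstar}); applying the cGEC a second time to the fresh, noiseless test point produces a pair $(g_1,g_2)$ that is asymptotically jointly Gaussian with the displayed mean vector and covariance, now expressed through the single--sample moments $\mu_{0s},\mu_{1s},\mu_{2s}$ rather than the three--point moments that governed training. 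The main obstacle is the two--scale scalarization: upgrading the multivariate Gordon inequality from a probabilistic one--sided bound on the primal optimum to an equality in probability requires exploiting the strong convexity induced by the ridge penalty uniformly across both correlated Gaussian blocks; once this is in place, the remaining steps are bookkeeping of spectral integrals and their derivatives.
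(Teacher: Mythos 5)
Your proposal is correct and follows essentially the same route as the paper: cGEC reduction to the Gaussian formulation \eqref{gform}, dualization, a two--block multivariate CGMT whose shared/decorrelated dual components give rise to $(t_1,\tau_1)$ and $(t_2,\tau_2)$, spectral scalarization against $\mathbb{P}_\kappa$ yielding \eqref{scprob1}, an envelope--theorem identification of $\lim\norm{\widehat{\vw}}^2=(q^\star)^2+h^\prime(\lambda)$ for the training error, and a second Gaussian--equivalence step at the test point with the single--sample moments $\mu_{0s},\mu_{1s},\mu_{2s}$. The only slight imprecision is that the covariance entry requires $\lim\norm{\mF\widehat{\vw}}^2=\beta^\star$ and $\lim\norm{\widehat{\vw}}^2$ tracked as two separate quadratic forms (weighted by $\mu_{1s}^2$ and $\mu_{2s}^2$) rather than the single combination $\widehat{\vw}^\top\mR\widehat{\vw}$, but the explicit AO minimizer makes both limits available, exactly as in the paper.
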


To streamline our presentation, we postpone the proof of Theorem \ref{ther1} to Section \ref{dproof}. Note that Theorem \ref{ther1} provides a full asymptotic characterization of the training and generalization errors corresponding to the formulation given in \eqref{mform}. Specifically, it shows that the performance of \eqref{mform} can be fully characterized after solving a deterministic scalar formulation where the cost function depends on the parameters $\ell$ and $\Delta$.
The theoretical predictions stated in Theorem \ref{ther1} are valid for any fixed noise variance $\Delta \geq 0$ and number of noise samples $\ell \geq 1$. Additionally, it is valid for a general family of feature matrices, activation functions and generative models satisfying \eqref{cmodel}. The analysis presented in Section \ref{dproof} shows that the deterministic problem in \eqref{scprob1} is strictly convex-concave. This implies the uniqueness of the optimal solutions of the optimization in \eqref{scprob1}. Next, we study the properties of the noise injection method in \eqref{mform} when $\ell$ grows to infinity slower than  $n$, $p$ and $k$.

\subsection{Noise Regularization Effects}
Now, we consider the setting where $\ell$ grows to infinity slower than the dimensions $n$, $p$ and $k$. We use the theoretical predictions stated in Theorem \ref{ther1} to study the regularization effects of the noise injection method in \eqref{mform}. Our first theoretical result is introduced in the following theorem.
\begin{theorem}[Regularization Effects]\label{thm2}
Suppose that the assumptions in Theorem \ref{ther1} are all satisfied.
Moreover, define the following formulation
\begin{align}\label{form_asy_th}
\min_{\vw\in\mathbb{R}^k}&~\frac{1}{2n} \sum_{i=1}^{n}  \big(y_i-  \vw^\top \widehat{\sigma}(\mF^\top \va_i)  \big)^2+ \frac{1}{2} \norm{\mR^{\frac{1}{2}} \vw}^2 + \tfrac{\lambda}{2} \norm{\vw}^2.
\end{align}
Here, the regularization matrix $\mR$ is defined as follows
\begin{align}
\mR=\widehat{\mu}_1^2 \mF^\top \mF + \mu_3^2 \mI_k,
\end{align}
and the new activation function $\widehat{\sigma}(\cdot)$ satisfies the  properties
\begin{align}
&\mathbb{E}[\widehat{\sigma}(z)]=\mathbb{E}[\sigma(x_1)],~\mathbb{E}[z\widehat{\sigma}(z)]=\mathbb{E}[z\sigma(x_1)] \nonumber\\
&~~~~~~~~~~~\mathbb{E}[\widehat{\sigma}(z)^2]=\mathbb{E}[\sigma(x_1)\sigma(x_2)],
\end{align}
where $x_1=z+\Delta v_1$, $x_2=z+\Delta v_2$ and $z$, $v_1$ and $v_2$ are independent standard Gaussian random variables. Also, define $\widehat{\mathcal{E}}_{\text{train}}$ and $\widehat{\mathcal{E}}_{\text{test}}$ as the training and generalization errors corresponding to the problem in \eqref{form_asy_th}. Then, for any $\zeta>0$, we have the following convergence results
\begin{align}
\begin{cases}
\lim\limits_{\ell \to +\infty} \lim\limits_{p \to +\infty} \mathbb{P}\Big(\abs{{\mathcal{E}}_{\text{train}}-\widehat{\mathcal{E}}_{\text{train}}} < \zeta \Big)=1\\
\lim\limits_{\ell \to +\infty} \lim\limits_{p \to +\infty} \mathbb{P}\Big(\abs{{\mathcal{E}}_{\text{test}}-\widehat{\mathcal{E}}_{\text{test}}} < \zeta \Big)=1,
\end{cases}
\end{align}
where ${\mathcal{E}}_{\text{test}}$ and ${\mathcal{E}}_{\text{train}}$ are the training and generalization errors corresponding to the noisy formulation.
\end{theorem}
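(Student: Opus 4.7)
The plan is to invoke Theorem~\ref{ther1} to replace each of $\mathcal{E}_{\text{train}}, \mathcal{E}_{\text{test}}, \widehat{\mathcal{E}}_{\text{train}}, \widehat{\mathcal{E}}_{\text{test}}$ by its deterministic $p \to \infty$ limit, and then to show that the $\ell \to \infty$ limits of these deterministic quantities coincide. Specifically, Theorem~\ref{ther1} gives convergence in probability $\mathcal{E}_{\text{train}} \to E_{\text{tr}}(\ell)$ and $\mathcal{E}_{\text{test}} \to E_{\text{te}}(\ell)$; an analogous result applied to \eqref{form_asy_th} (see below) will yield $\widehat{\mathcal{E}}_{\text{train}} \to \widehat{E}_{\text{tr}}$ and $\widehat{\mathcal{E}}_{\text{test}} \to \widehat{E}_{\text{te}}$ in probability, with $\widehat{E}_{\text{tr}}, \widehat{E}_{\text{te}}$ independent of $\ell$. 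The proof of Theorem~\ref{thm2} then reduces to the deterministic statements $E_{\text{tr}}(\ell) \to \widehat{E}_{\text{tr}}$ and $E_{\text{te}}(\ell) \to \widehat{E}_{\text{te}}$ as $\ell \to \infty$.

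First, I would derive $\widehat{E}_{\text{tr}}$ and $\widehat{E}_{\text{te}}$ for the limiting formulation. Since \eqref{form_asy_th} contains no injected noise, the cGEC of Section~\ref{CGEC} specializes to its one-vector (uGET) form applied to the activation $\widehat{\sigma}$; the moment conditions on $\widehat{\sigma}$ in the statement are chosen precisely so that the linearization coefficients $(\mu_0,\widetilde{\mu}_1,\mu_2)$ match those of cGEC. The resulting Gaussian model differs from \eqref{gform} only by the absence of the $\vz_{ij},\vp_{ij}$ terms and the presence of the explicit ridge $\tfrac{1}{2}\vw^\top \mR \vw$. Rerunning the multivariate CGMT argument of Section~\ref{dproof} in this simpler setting yields a two-variable scalar problem over $(t_1,\tau_1)$ with kernel
\begin{align}
g^{\infty}_{\kappa,\lambda}(t_1,\tau_1) = \tfrac{t_1}{\tau_1}(\widetilde{\mu}_1^2\kappa+\mu_2^2) + \widehat{\mu}_1^2\kappa + \mu_3^2 + \lambda, \nonumber
\end{align}
and training/generalization expressions of the same form as in Theorem~\ref{ther1} but with $T_{3,\lambda} \equiv 0$ and no explicit $1/\ell$ contributions.

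Next, I would take the $\ell \to \infty$ limit of the scalar problem \eqref{scprob1}. Three simultaneous simplifications occur: (i) the explicit $1/\ell$ terms $\tfrac{\tau_1 t_1 + \tau_2 t_2}{2\ell}$, $\tfrac{t_1^2+t_2^2}{2\ell}$ and $\tfrac{\eta T_{3,\lambda}}{2}$ all vanish since $T_{3,\lambda}=O(\ell^{-1})$; (ii) the kernel $g_{\kappa,\lambda}(\vt,\vtau)$ converges, uniformly on compact sets, to $\tfrac{t_1}{\tau_1}(\widetilde{\mu}_1^2\kappa+\mu_2^2)+\tfrac{t_2}{\tau_2}(\widehat{\mu}_1^2\kappa+\mu_3^2)+\lambda$; (iii) the residual dependence on $(t_2,\tau_2)$ enters only through the ratio $t_2/\tau_2$, and the saddle-point conditions in these two variables pin this ratio to a unique equilibrium value, recovering $g^{\infty}_{\kappa,\lambda}$ exactly. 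Direct computation then shows that $q^\star$, $T_{2,\lambda}$, and the auxiliary function $h(\lambda)$ (together with its derivative $h'(\lambda)$) all converge to their counterparts in the scalar problem for \eqref{form_asy_th}, delivering $E_{\text{tr}}(\ell)\to\widehat{E}_{\text{tr}}$ and $E_{\text{te}}(\ell)\to\widehat{E}_{\text{te}}$.

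The main obstacle is not the pointwise convergence of the objective but the convergence of its optimizers, since $E_{\text{tr}}(\ell)$ and $E_{\text{te}}(\ell)$ depend on $(\vt^\star(\ell),\vtau^\star(\ell))$ through $q^\star$, $h'(\lambda)$, and $\beta^\star$. I would handle this by combining the strict convex--concavity of \eqref{scprob1} (established in Section~\ref{dproof}), which guarantees a unique optimizer for each $\ell$, with $\ell$-uniform coercivity estimates derived from the explicit $\lambda > 0$ regularization that keep $\tau_1,\tau_2$ bounded away from $0$ and $+\infty$. These together give pre-compactness of $\{(\vt^\star(\ell),\vtau^\star(\ell))\}_\ell$; subsequential convergence then upgrades to full convergence via uniqueness of the limiting optimizer, and continuity of the maps $(\vt,\vtau)\mapsto(q^\star,T_{2,\lambda},T_{3,\lambda},h'(\lambda),\beta^\star)$ closes the argument.
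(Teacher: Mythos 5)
Your overall architecture matches the paper's: pass to the deterministic scalar problems via Theorem~\ref{ther1} (and its analogue for \eqref{form_asy_th}, which is the paper's Lemma~\ref{lem1}), then show the two scalar problems coincide as $\ell\to\infty$, handling optimizer convergence through strict convex--concavity and uniqueness. Points (ii) and (iii) of your limit computation are also essentially right: the kernel does converge to $\tfrac{t_1}{\tau_1}(\widetilde{\mu}_1^2\kappa+\mu_2^2)+\tfrac{t_2}{\tau_2}(\widehat{\mu}_1^2\kappa+\mu_3^2)+\lambda$, and the saddle in $(t_2,\tau_2)$ does pin $t_2^\star=\tau_2^\star$ (the paper gets this from the identity \eqref{trick_asy}), which collapses the four-variable problem to the two-variable problem \eqref{scprob1_asy_pf}. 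However, there is a genuine gap in step (i): you assume the optimizers $(\vt^\star(\ell),\vtau^\star(\ell))$ of \eqref{scprob1} stay in a compact set independent of $\ell$, and on that basis conclude that the terms $\tfrac{\tau_1 t_1+\tau_2 t_2}{2\ell}$, $\tfrac{t_1^2+t_2^2}{2\ell}$ and $\tfrac{\eta}{2}T_{3,\lambda}$ all vanish. They do not: the optimizers grow like $\sqrt{\ell}$ (in the paper the feasibility bounds scale as $C_{t_i}=\sqrt{\ell}\,\overline{C}_{t_i}$, and the proof begins with the change of variables $t_i\mapsto t_i/\sqrt{\ell}$, $\tau_i\mapsto\tau_i/\sqrt{\ell}$ leading to \eqref{det_form_v1}). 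Under this scaling all three terms are $O(1)$ and survive; in particular $T_{3,\lambda}$ converges to $t_1^2\,\mathbb{E}_\kappa[(\widetilde{\mu}_1^2\kappa+\mu_2^2)/g_{\kappa,\lambda}]\neq 0$ (Lemma~\ref{lnnoise}), and the correct limiting scalar problem \eqref{det_pr_lasy} retains the term $-\tfrac{\eta}{2}\widehat{T}_{3,\lambda}(t_1,\tau_1)$. Your claim that the scalar problem for \eqref{form_asy_th} has ``$T_{3,\lambda}\equiv 0$'' is likewise incorrect: the $\mu_2\vb_i$ component of the Gaussian equivalent of $\widehat{\sigma}(\mF^\top\va_i)$ still generates the $\vg_1^\top\mSigma^{1/2}\vw$ term in the AO, whose quadratic contribution after minimizing over $\vw$ is exactly this nonzero $\widehat{T}_{3,\lambda}$.

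The error is not cosmetic. If the $\tfrac{1}{2\ell}(t_i^2-\tau_i t_i)$ terms really vanished, the limit problem would lose strong concavity in $(t_1,t_2)$ (the sup over $t_1$ of the surviving linear term $\tfrac{t_1}{2\tau_1}(\cdots)$ would sit on the boundary of the feasible set, which is itself the signal that the optimizer scales with the bound, i.e.\ with $\sqrt{\ell}$), and there would be nothing left to pin the ratio $t_2/\tau_2$ in your step (iii) --- that pinning comes precisely from the rescaled, $O(1)$ terms $\tfrac{\tau_2 t_2}{2}-\tfrac{t_2^2}{2}$ that step (i) discards. Moreover your claimed limit could not match the scalar problem of \eqref{form_asy_th}, which manifestly contains $\tfrac{\tau_1 t_1}{2}-\tfrac{t_1^2}{2}$ and $-\tfrac{\eta}{2}\widehat{T}_{3,\lambda}$. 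The fix is to first rescale by $\sqrt{\ell}$ as in \eqref{det_form_v1}; after that, your compactness-plus-uniqueness argument for the convergence of the (rescaled) optimizers, and the continuity of $(\vt,\vtau)\mapsto(q^\star,T_{2,\lambda},h'(\lambda),\beta^\star)$, go through as in the paper's appeal to \cite[Theorem 2.1]{NEWEY19942111}.
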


To streamline our presentation, we postpone the proof of Theorem \ref{thm2} to Section \ref{dproof}. The above theorem shows that the noisy formulation given in \eqref{mform} is equivalent to a standard formulation with a new activation function and an additional weighted ridge regularization, when $\ell \to +\infty$. It also provides the explicit form of the regularization. This shows that inserting Gaussian noise during the training procedure introduces a regularization that depend on the activation function, the feature matrix and the noise variance. Now, we provide a precise asymptotic characterization of the formulation in \eqref{form_asy_th}. Before stating our asymptotic result, we define the following deterministic problem
\begin{equation}\label{det_pr_lasy}
\begin{aligned}
&\inf_{\substack{\tau_1> 0} } \sup_{\substack{ t_1\geq 0}}~\frac{t_1}{2\tau_1} \big( \gamma_1 -  2 \widetilde{\mu}_1 T_1 \widehat{q}_{t,\tau}^\star \gamma_2 + \widetilde{\mu}_1^2 T_1^2 (\widehat{q}_{t,\tau}^\star)^2 + \mu_0^2 (\vartheta^\star)^2 \\
&-2\mu_0 \vartheta^\star\gamma_3 \big)+\frac{\tau_1 t_1}{2}  - \frac{t_1^2}{2} +\frac{(\widehat{q}_{t,\tau}^\star)^2}{2 \widehat{T}_{2,\lambda}(t_1,\tau_1)}   -\frac{\eta \widehat{T}_{3,\lambda}(t_1,\tau_1)}{2}.
\end{aligned}
\end{equation}
Here, the constant $\vartheta^\star$ satisfies $\vartheta^\star=0$ if $\mu_0=0$ and $\vartheta^\star=\gamma_3/\mu_0$ otherwise, and $T_1$ is defined in Section \ref{st_asylim}. Moreover, the functions $\widehat{q}_{t,\tau}^\star$ and $\widehat{T}_{2,\lambda}(\cdot,\cdot)$ are defined as follows
\begin{equation}\label{qh_t2h}
\begin{aligned}
\widehat{q}_{t,\tau}^\star=\frac{ \gamma_2 t_1 \widetilde{\mu}_1 T_1 \widehat{T}_{2,\lambda}(t_1,\tau_1) }{\tau_1 + t_1 \widetilde{\mu}_1^2 T_1^2 \widehat{T}_{2,\lambda}(t_1,\tau_1) },~\text{and}~\widehat{T}_{2,\lambda}(t_1,\tau_1)= \frac{\delta}{T_1^2} \mathbb{E}_\kappa\Big[ \frac{\kappa}{\widehat{g}_{\kappa,\lambda}(t_1,\tau_1)} \Big] /\Big(1-\frac{\widetilde{\mu}_1^2 t_1 \delta}{ \tau_1} \mathbb{E}_\kappa\Big[ \frac{\kappa}{\widehat{g}_{\kappa,\lambda}(t_1,\tau_1)} \Big] \Big).
\end{aligned}
\end{equation}
Here, the functions $\widehat{T}_{3,\lambda}(\cdot,\cdot)$ and $\widehat{g}_{\kappa,\lambda}(\cdot,\cdot)$ are defined as follows
\begin{align}
&\widehat{T}_{3,\lambda}(t_1,\tau_1)=t_1^2 \mathbb{E}_\kappa\Big[{ (\widetilde{\mu}_1^2 \kappa + \mu_2^2})/{\widehat{g}_{\kappa,\lambda}(t_1,\tau_1)}\Big],\nonumber\\
&\widehat{g}_{\kappa,\lambda}(t_1,\tau_1)=\frac{t_1}{\tau_1} \Big( \widetilde{\mu}_1^2 \kappa + \mu_2^2 \Big)+ \Big( \widehat{\mu}_1^2 \kappa + \mu_3^2 \Big) + \lambda, \nonumber
\end{align}
where the expectations are taken over the probability distribution $\mathbb{P}_\kappa(\cdot)$ defined in Assumption \ref{itm:ass_F}.
Now, we summarize the asymptotic properties of the limiting formulation in \eqref{form_asy_th} in the following theorem.
\begin{lemma}[Limiting Formulation Characterization]\label{lem1}
Suppose that the assumptions in Theorem \ref{ther1} are all satisfied. Then, the training error corresponding to the limiting formulation in \eqref{form_asy_th} converges in probability as follows
\begin{align}
\widehat{\mathcal{E}}_{\text{train}} \xrightarrow{~p\to+\infty~} \widehat{C}^\star(\Delta,\lambda)-\frac{\lambda}{2} \left((\widehat{q}^\star)^2+\widehat{h}^\prime(\lambda) \right), \nonumber
\end{align}
where $\widehat{C}^\star(\Delta,\lambda)$ is the optimal cost of the deterministic problem in \eqref{det_pr_lasy}.
Here, the function $\widehat{h}(\cdot)$ is defined as follows
\begin{align}
\widehat{h}(\lambda)=-(\widehat{q}^\star)^2 \Big( \lambda - \frac{1}{\widehat{T}_{2,\lambda}(t_1^\star,\tau_1^\star)} \Big)-\eta \widehat{T}_{3,\lambda}(t_1^\star,\tau_1^\star).\nonumber
\end{align}
Moreover, the generalization error corresponding to the limiting formulation in \eqref{form_asy_th} converges in probability to a deterministic function as follows
\begin{align}\label{gen_conv}
\widehat{\mathcal{E}}_{\text{test}} \xrightarrow{~p\to+\infty~} \frac{1}{4^\upsilon} \mathbb{E}\left[ \left( \varphi(g_1) -\widehat{\varphi}(g_2) \right)^2 \right],
\end{align}
where $g_1$ and $g_2$ have a bivariate Gaussian distribution with  mean vector $[0,\mu_{0s} \vartheta^\star]$ and covariance matrix $\mC$, defined as follows
\begin{align}
\mC=\begin{bmatrix}
1 & \mu_{1s} T_1 \widehat{q}^{\star} \\
 \mu_{1s} T_1 \widehat{q}^\star &  \mu_{1s}^2 \widehat{\beta}^\star+\mu_{2s}^2 \left((\widehat{q}^\star)^2+\widehat{h}^\prime(\lambda) \right)
\end{bmatrix}.\nonumber
\end{align}
The constant $\vartheta^\star$ satisfies $\vartheta^\star=0$ if $\mu_0=0$ and $\vartheta^\star=\gamma_3/\mu_0$ otherwise. Here, the constants $\mu_{0s}$, $\mu_{1s}$ and $\mu_{2s}$ are defined as $\mu_{0s}=\mathbb{E}[\sigma(z)]$, $\mu_{1s}=\mathbb{E}[z \sigma(z)]$ and $\mu_{2s}^2=\mathbb{E}[\sigma(z)^2]-\mu_{0s}^2-\mu_{1s}^2$, where $z$ is a standard Gaussian random variable. Additionally, the constant $\widehat{\beta}^\star$ can be computed via the following expression
\begin{align}\label{bstar}
\widehat{\beta}^\star&= \frac{1}{V_1+V_3} \Big( V_1 T_1^2 - V_2-V_4 -\lambda + \frac{1}{\widehat{T}_{2,\lambda}(t_1^\star,\tau_1^\star)} \Big) (\widehat{q}^\star)^2  \nonumber\\
&+ \frac{\eta \widehat{T}_{3,\lambda}(t_1^\star,\tau_1^\star)}{V_1+V_3} - \frac{V_2+V_4+\lambda}{V_1+V_3} \widehat{h}^\prime(\lambda),
\end{align}
where the constants $V_1$, $V_2$, $V_3$ and $V_4$ are defined as follows
\begin{align}
&V_1=\frac{t_1^\star \widetilde{\mu}_1^2}{\tau_1^\star},~V_3=\widehat{\mu}_1^2,~V_2=\frac{t_1^\star \mu_2^2}{\tau_1^\star},V_4=\mu_3^2 .\nonumber
\end{align}
Here, $\widehat{q}^\star=\widehat{q}^\star_{t^\star,\tau^\star}$ is given in \eqref{qh_t2h}. Moreover, $\lbrace t_1^\star,\tau_1^\star \rbrace$ denotes the optimal solution of the problem defined in \eqref{det_pr_lasy}. Also, we treat $\widehat{q}^\star$, $t_1^\star$ and $\tau_1^\star$ as constants independent of $\lambda$ when we compute the derivative of the function $\widehat{h}(\cdot)$. 
\end{lemma}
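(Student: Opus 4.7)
The plan is to follow the CGMT-based analysis used to prove Theorem \ref{ther1}, specialized to the simpler setting in which each training sample contributes a single unperturbed random feature vector and the objective carries the extra quadratic penalty $\tfrac{1}{2}\vw^\top\mR\vw$. Because there are no $\ell$ correlated noisy copies in \eqref{form_asy_th}, the standard univariate CGMT suffices in place of the multivariate extension developed for Theorem \ref{ther1}.

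First, I would invoke the $\Delta=0$ instance of the Gaussian equivalence conjecture (i.e.\ the uGET referenced in Section \ref{CGEC}) to replace the nonlinear feature map $\widehat{\sigma}(\mF^\top\va_i)$ by its linear Gaussian surrogate $\mu_0\vec{1}_k + \widetilde{\mu}_1\mF^\top\va_i + \mu_2\vb_i$, where $\vb_i$ is standard Gaussian and independent of the other randomness. The moments prescribed for $\widehat{\sigma}$ in the statement of Lemma \ref{lem1}, namely $\mathbb{E}[\widehat{\sigma}(z)]=\mu_0$, $\mathbb{E}[z\widehat{\sigma}(z)]=\widetilde{\mu}_1$, and $\mathbb{E}[\widehat{\sigma}(z)^2]=\mu_0^2+\widetilde{\mu}_1^2+\mu_2^2$, are exactly those required for the first and second joint moments to match the surrogate, so the conjecture applies uniformly in $\vw$. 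This reduces \eqref{form_asy_th} to a Gaussian least-squares problem of the same structural form as \eqref{gform} at $\ell=1$, augmented by the quadratic regularizer $\tfrac{1}{2}\vw^\top(\widehat{\mu}_1^2\mF^\top\mF+\mu_3^2\mI_k)\vw$.

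Next, I would apply the standard CGMT to this Gaussian formulation and reduce it to the scalar deterministic saddle point \eqref{det_pr_lasy}. After introducing a dual variable for the residual vector, decomposing terms along the SVD $\mF=\mU\mS\mV$, and exploiting the Haar-invariance of $\mU$ together with the weak convergence of the spectrum of $\mM$ to $\mathbb{P}_\kappa$ from Assumption \ref{itm:ass_F}, empirical spectral sums can be replaced by expectations $\mathbb{E}_\kappa[\cdot]$ and the bulk coordinates can be minimized out explicitly. Introducing the scalars $t_1,\tau_1$ via the usual variational representation of the squared norm then yields precisely the functions $\widehat{g}_{\kappa,\lambda}$, $\widehat{T}_{2,\lambda}$, $\widehat{T}_{3,\lambda}$, and $\widehat{q}^\star_{t,\tau}$ appearing in the lemma. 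The only substantive differences from the derivation behind \eqref{scprob1} are that the $(t_2,\tau_2)$ variables, which in Theorem \ref{ther1} encode the correlations between the $\ell$ noisy copies of each sample, disappear, and that the contribution $\widehat{\mu}_1^2\kappa+\mu_3^2$ to $\widehat{g}_{\kappa,\lambda}$ now enters directly from $\mR$ rather than from the noise-injection term.

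To extract the error formulas I would then differentiate the optimal saddle value with respect to $\lambda$ by the envelope theorem, treating $\widehat{q}^\star,t_1^\star,\tau_1^\star$ as fixed; this isolates the contribution of the ridge penalty and yields the training error formula $\widehat{C}^\star(\Delta,\lambda)-\tfrac{\lambda}{2}((\widehat{q}^\star)^2+\widehat{h}'(\lambda))$. For the generalization error, the same CGMT machinery combined with standard bilinear concentration pins the scalar summaries $\widehat{\vw}^\top\mF^\top\vxi/\sqrt{p}$ and $\norm{\widehat{\vw}}$ at $\widehat{q}^\star$ and $\widehat{\beta}^\star$ respectively; applying Gaussian equivalence once more to the test feature $\sigma(\mF^\top\va_{\text{new}})$ then produces the bivariate Gaussian joint law of $(\vxi^\top\va_{\text{new}},\widehat{\vw}^\top\sigma(\mF^\top\va_{\text{new}}))$ with the stated covariance $\mC$, from which \eqref{gen_conv} follows. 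I expect the main obstacle to be the algebraic bookkeeping needed to verify that the simplified fixed-point relations and the displayed formula for $\widehat{\beta}^\star$ indeed fall out of the general machinery of Theorem \ref{ther1} under the identifications $t_2=\tau_2=0$, $V_3=\widehat{\mu}_1^2$, and $V_4=\mu_3^2$; this is a direct specialization of techniques already developed and requires no genuinely new ideas.
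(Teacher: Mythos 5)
Your proposal follows essentially the same route as the paper: Lemma \ref{lem1} is proved there by rerunning the Gaussian-equivalence + CGMT + scalarization + concentration pipeline of Sections \ref{for_mao}--\ref{gen_con_org} directly on \eqref{form_asy_th}, with the deterministic penalty $\tfrac{1}{2}\norm{\mR^{\frac{1}{2}}\vw}^2$ entering the effective quadratic form with unit coefficient, exactly as you describe. One small correction to your closing cross-check: matching against the machinery of Theorem \ref{ther1} requires $t_2^\star=\tau_2^\star$ with $\ell\to\infty$ (so that $t_2/\tau_2\to 1$ and hence $V_3\to\widehat{\mu}_1^2$, $V_4\to\mu_3^2$), not $t_2=\tau_2=0$, which would leave those ratios indeterminate; this is precisely the content of the variational identity \eqref{trick_asy} used in the paper.
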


The proof of Lemma \ref{lem1} is provided in Section \ref{dproof}. The results in Theorem \ref{thm2} and Lemma \ref{lem1} are based on the asymptotic predictions stated in Theorem \ref{ther1}. Specifically, we show in Section \ref{dproof} that the asymptotic problem corresponding to the noisy formulation in \eqref{scprob1} converges to the deterministic problem in \eqref{det_pr_lasy}, when $\ell$ grows to infinity. Then, we show that the deterministic problem in \eqref{det_pr_lasy} is the asymptotic limit of the formulation in \eqref{form_asy_th} using the CGMT framework. The analysis presented in Section \ref{dproof} shows that the deterministic problem in \eqref{det_pr_lasy} is strictly convex-concave. This implies the uniqueness of its optimal solutions. 

\section{Simulation Results}\label{sim_res}
In this part, we provide additional simulation examples to verify our asymptotic results stated in Theorem \ref{ther1}, Theorem \ref{thm2} and Lemma \ref{lem1}. Our predictions stated in Section \ref{pr_analysis} are valid for a general family of feature matrices, activation functions and generative models satisfying \eqref{cmodel}. We specialize our general results to popular learning models. 

In particular, we consider two families of feature matrices. We consider feature matrices that can be expressed as $\mF= d \mV$, where:
{\bf (a)} The scalar $d$ satisfies $d=1/\sqrt{p}$ and the matrix $\mV$ has independent standard Gaussian components. We refer to this matrix as the \textit{Gaussian feature matrix}. {\bf (b)} The scalar $d$ satisfies $d=\sqrt{3/p}$ and the matrix $\mV$ has independent uniformly distributed components in $[-1~1]$. We refer to this matrix as the \textit{uniform feature matrix}.

Also, we consider two popular regression and classification models. For the regression model, we assume that $\varphi(\cdot)$ is the ReLu function and $\widehat{\varphi}(\cdot)$ is the identity function. For the classification model, we assume that $\varphi(\cdot)$ is the sign function with possible sign flip with probability $\theta$ and $\widehat{\varphi}(\cdot)$ is the sign function.

\subsection{Correlated Gaussian Equivalence}
In this simulation example, we consider a particular case of the cGEC to illustrate the Gaussian equivalence. Here, we consider a projected version of the probability density function $\mathbb{P}(\nu_1,\nu_2,\nu_3)$, defined in Section \ref{CGEC}. Specifically, we consider the probability density function of the random variable $\nu=\nu_1+\nu_2+\nu_3$. 
\begin{figure}[ht]
    \centering
    \subfigure[]{\label{fcgeca}
        \includegraphics[width=0.4\linewidth]{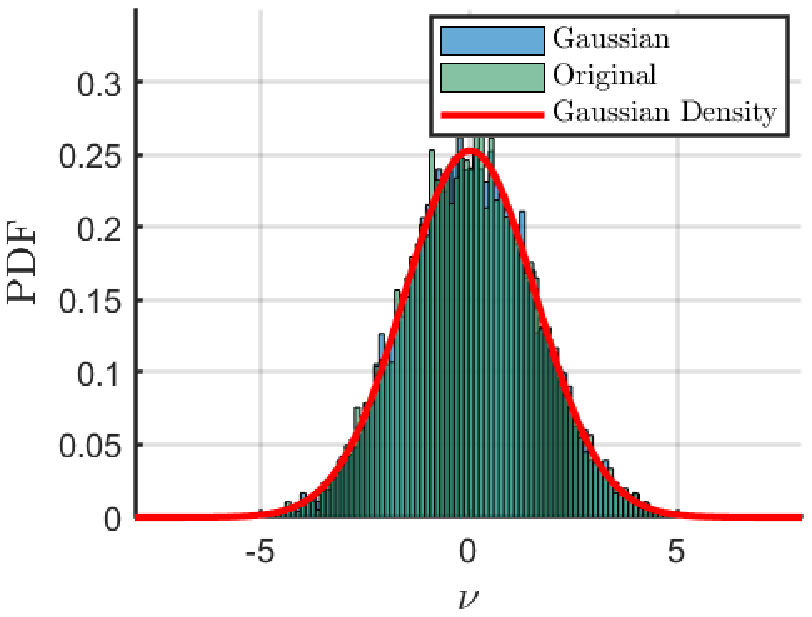}}
        \subfigure[]{\label{fcgecb}
        \includegraphics[width=0.4\linewidth]{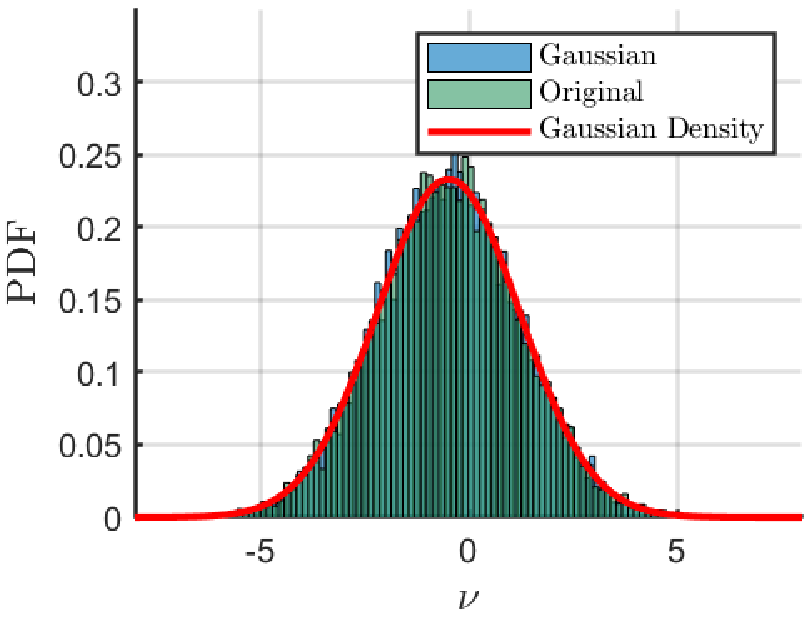}}

    \caption{\textit{Original}: measures the distribution of $\nu=\nu_1+\nu_2+\nu_3$. \textit{Gaussian}: measures the distribution of $\nu=\nu_{g,1}+\nu_{g,2}+\nu_{g,3}$. \textit{Gaussian density}: denotes the Gaussian pdf with the corresponding first two moments of $\nu$. {\bf (a)} $\mF$ is the Gaussian feature matrix and $\sigma(\cdot)$ is the tanh function. {\bf (b)} $\mF$ is the uniform feature matrix in $[-\sqrt{3}~ \sqrt{3}]$ and $\sigma(\cdot)$ is the ReLu function. We take $p=10^4$, $\alpha=1.1$, $\Delta=1$ and $\eta=1.3$.}
        \label{fcgec}
\end{figure}
Figure \ref{fcgec} considers two different feature matrices and activation functions. It also compares the probability density function of the random variable $\nu=\nu_1+\nu_2+\nu_3$ with the probability density function of the random variable $\nu_g=\nu_{g,1}+\nu_{g,2}+\nu_{g,3}$. Figure \ref{fcgec} shows that the random variable $\nu$ is Gaussian with the same first two moments of $\nu_g$ for particular choices of $\mF$, $\sigma(\cdot)$, $\vxi$ and $\vw$. This provides a particular illustration of the cGEC.

\subsection{Limiting Performance}
Our fourth simulation considers the non--linear regression model. Figure \ref{figasym} compares the numerical predictions and our predictions stated in Theorem \ref{thm2} and Lemma \ref{lem1}. 
\begin{figure}[ht]
    \centering
    \subfigure[]{\label{figasyma}
    \includegraphics[width=0.4\linewidth]{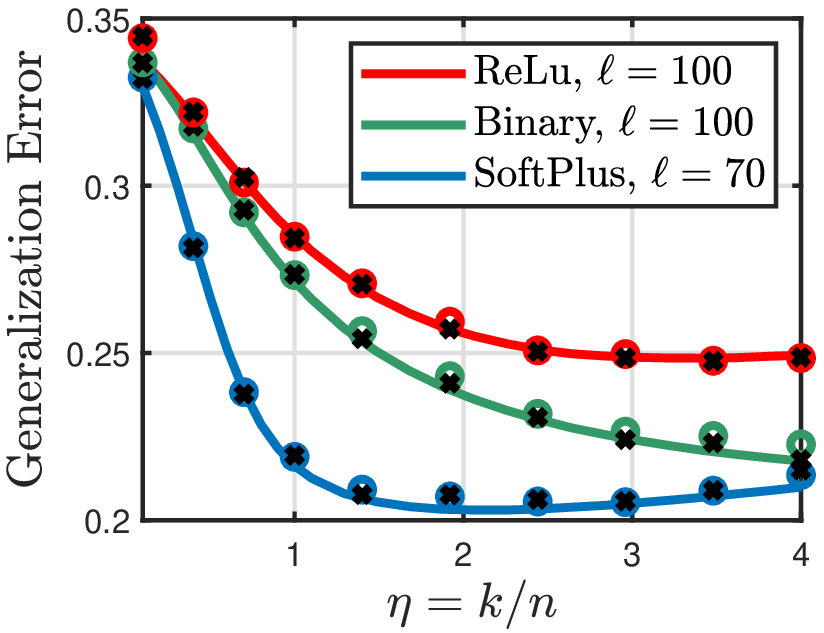}
    }
    \subfigure[]{\label{figasymb}
        \includegraphics[width=0.4\linewidth]{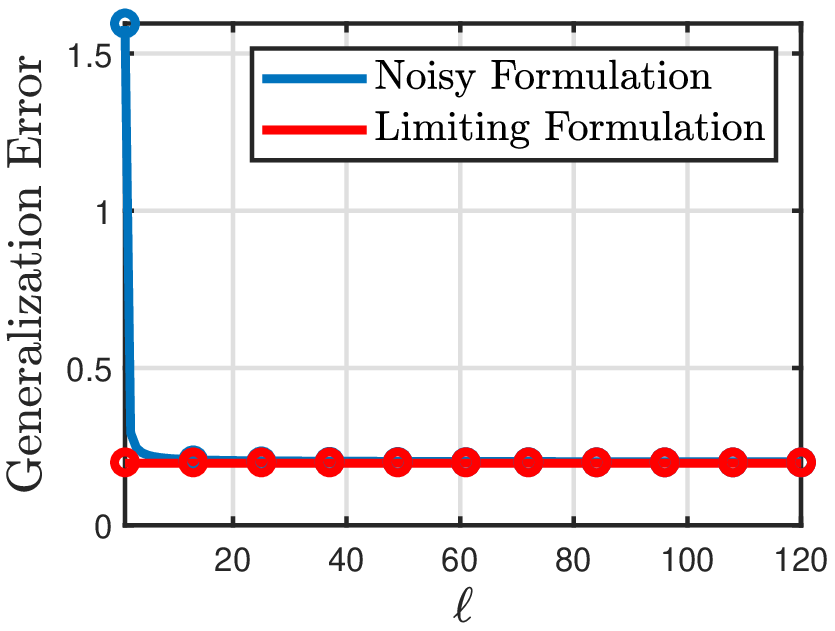}}

    \caption{Solid line: Theoretical predictions. Circle: Numerical simulations for \eqref{mform} in \ref{figasyma} and for both \eqref{mform} and \eqref{form_asy} in \ref{figasymb}. Black cross: Numerical simulations for \eqref{form_asy}.  {\bf (a)} $p=500$, $\Delta=0.4$, $\alpha=1.5$ and $\lambda=10^{-2}$. {\bf (b)} $p=500$, $\Delta=0.6$, $\alpha=2$, $\lambda=10^{-4}$, $\eta=1$ and $\sigma(\cdot)$ is the SoftPlus. Binary denotes the binary step activation. $\mF$ is the Gaussian feature matrix. The number of Monte Carlo trials is $100$.}
        \label{figasym}
\end{figure}
This simulation example first provides an empirical verification of the theoretical predictions in Theorem \ref{thm2} and Lemma \ref{lem1}. It particularly shows that our predictions are in excellent agreement with the empirical results for \eqref{mform} and \eqref{form_asy}. Furthermore, note that the performance of the deterministic formulation given in  \eqref{det_pr_lasy} is achieved with a moderate number of noise samples, i.e. $\ell=70$ and $\ell=100$. This verifies the results stated in Theorem \ref{thm2} and Lemma \ref{lem1} and  provides an empirical verification of the cGEC introduced in Section \ref{CGEC}. Figure \ref{figasyma} further shows that the considered noisy formulation can asymptotically mitigate the double descent in the generalization error for an appropriately selected activation function and fixed noise variance. Specifically, note that the ReLu and binary activation functions lead to a decreasing generalization performance which is not the case for the SoftPlus activation. Figure \ref{figasymb} illustrates the convergence behavior of the generalization error corresponding to \eqref{mform} for the SoftPlus activation and fixed $\eta$. It particularly shows that the generalization error of \eqref{mform} converges to the generalization error of \eqref{form_asy} when $\ell$ grows to infinity. Moreover, note that the limit is already achieved with a small value of $\ell$. This verifies the predictions in Theorem \ref{thm2}.

\subsection{Impact of the Noise Variance}
In this simulation example, we study the effects of the noise variance $\Delta$ on the generalization error corresponding to the noisy formulation and the limiting formulation. Here, we consider the binary classification model. Figure \ref{figvar} compares the numerical predictions and our theoretical predictions stated in Theorem \ref{ther1}, Theorem \ref{thm2} and Lemma \ref{lem1}. 
\begin{figure}[ht]
    \centering
    \subfigure[]{\label{figvara}
    \includegraphics[width=0.31\linewidth]{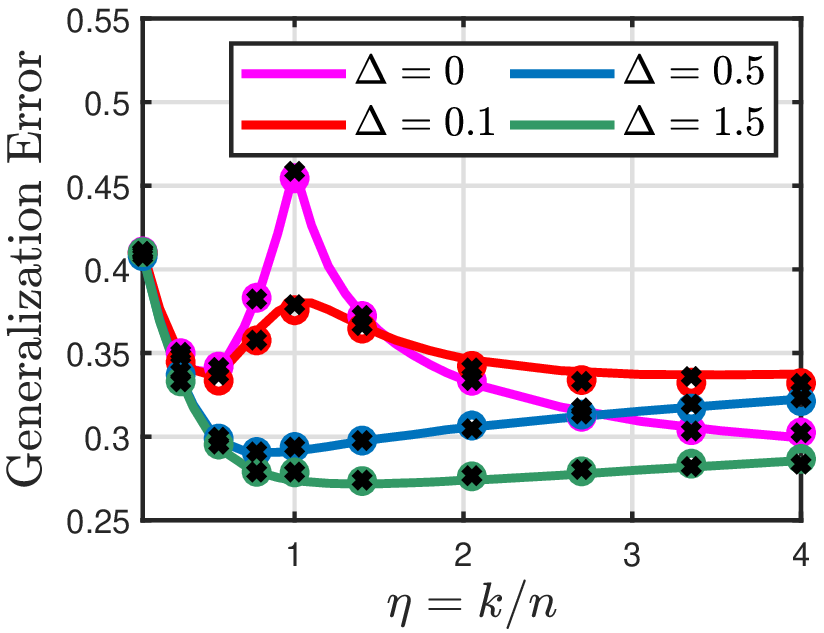}
    }
    \subfigure[]{\label{figvarb}
        \includegraphics[width=0.31\linewidth]{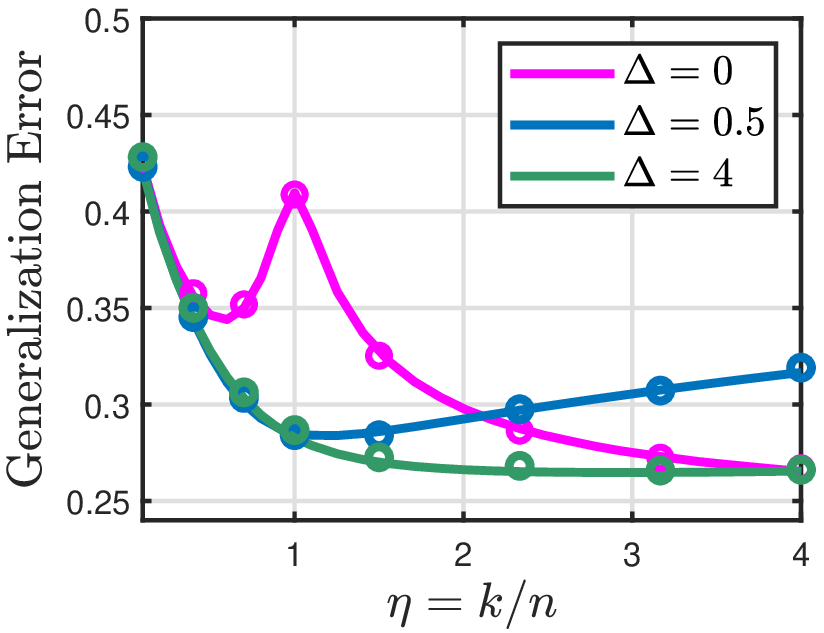}}
    \subfigure[]{\label{figvarc}
            \includegraphics[width=0.31\linewidth]{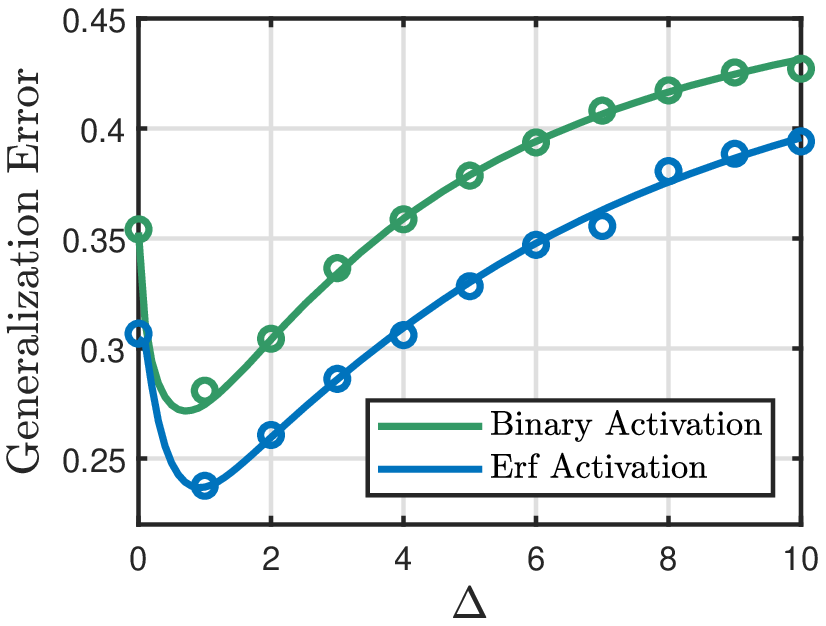}}
            
    \caption{Solid line: Theoretical predictions. Circle: Numerical simulations for \eqref{mform} in \ref{figvara} and \ref{figvarc} and for \eqref{form_asy} in \ref{figvarb}. Black cross: Numerical simulations for \eqref{gform}. {\bf (a)} $\mF$ is the Gaussian feature matrix and $\sigma(\cdot)$ is the tanh activation function. We set $p=400$, $\ell=50$, $\alpha=2$, $\theta=0.1$ and $\lambda=10^{-5}$. {\bf (b)} $\mF$ is the uniform feature matrix and $\sigma(\cdot)$ is the SoftPlus activation. We set $p=1500$, $\alpha=1.5$, $\theta=0$ and $\lambda=10^{-4}$. {\bf (c)} $\mF$ is the uniform feature matrix. We set $p=600$, $\alpha=2$, $\theta=0$, $\ell=20$, $\eta=1.5$ and $\lambda=10^{-4}$. The number of Monte Carlo trials is $100$.}
        \label{figvar}
\end{figure}
It provides another empirical verification of our  theoretical predictions since our results are in excellent agreement with the actual performance of the considered formulations. It also provides an empirical verification of the cGEC discussed in Section \ref{CGEC}. Figure \ref{figvara} studies the effects of the noise variance $\Delta$ on the generalization error corresponding to the noisy formulation for fixed $\ell$. Note that increasing the noise variance improves the generalization error especially at low $\eta$. Figure \ref{figvara} also suggests that an optimized noise variance can reduce the effects of the double descent phenomenon. Now, Figure \ref{figvarb} studies the effects of the noise variance $\Delta$ on the generalization error corresponding to the limiting formulation. We can see that the generalization error increases after reaching a minimum for $\Delta=0.5$. For $\Delta=4$, observe that the generalization error is  deceasing. This suggests that the double descent phenomenon can be mitigated for an appropriately selected noise variance. Figure \ref{figvarc} also shows that the generalization error corresponding to the noisy formulation has a unique minimum as a function of $\Delta$ for the considered activation functions. This suggests that the optimization problem over the noise variance $\Delta$ has a unique solution. This can simplify the design of an efficient optimization scheme of the generalization error in terms of $\Delta$.  

\subsection{Alternative Formulations}
Now, we consider the binary classification model, where $\theta=0$. We compare the performance of the noisy formulation given in \eqref{mform} and the dropout technique. In this paper, we consider the following version of the dropout method 
\begin{align}
\min_{\vw\in\mathbb{R}^k}& \frac{1}{2n \ell} \sum_{i=1}^{n} \sum_{j=1}^{\ell} \big(y_i-\vw^\top \sigma\big(\mD_{ij}\mF^\top \va_i \big) \big)^2+\tfrac{\lambda}{2} \norm{\vw}^2,\nonumber
\end{align}
where $\lbrace \mD_{ij} \rbrace_{i,j}^{n,\ell}$ are diagonal matrices with independent and identically distributed diagonal entries drawn from the distribution, $\mathbb{P}(d=1)=1-\epsilon$ and $\mathbb{P}(d=0)=\epsilon$, where $\epsilon$ denotes the probability of dropping a unit. The above formulation is similar to the one considered in \cite{dropout,implicitdp}.

\subsubsection{Performance Comparison}
In Figure \ref{figdrop}, we compare the performance of the noisy formulation and the dropout formulation for four different activation functions.
\begin{figure}[ht]
    \centering
    \subfigure[]{\label{figdropa}
        \includegraphics[width=0.37\linewidth]{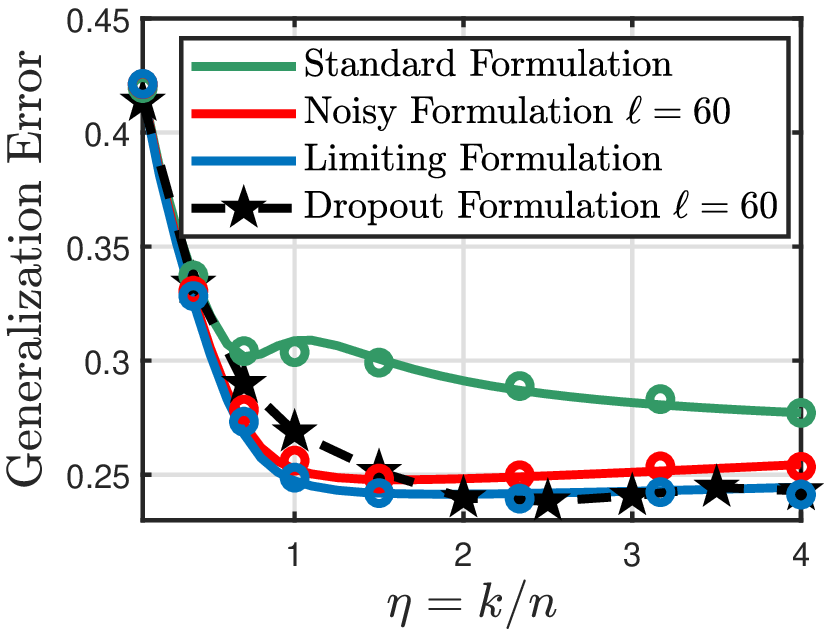}}
    \subfigure[]{\label{figdropb}
    \includegraphics[width=0.37\linewidth]{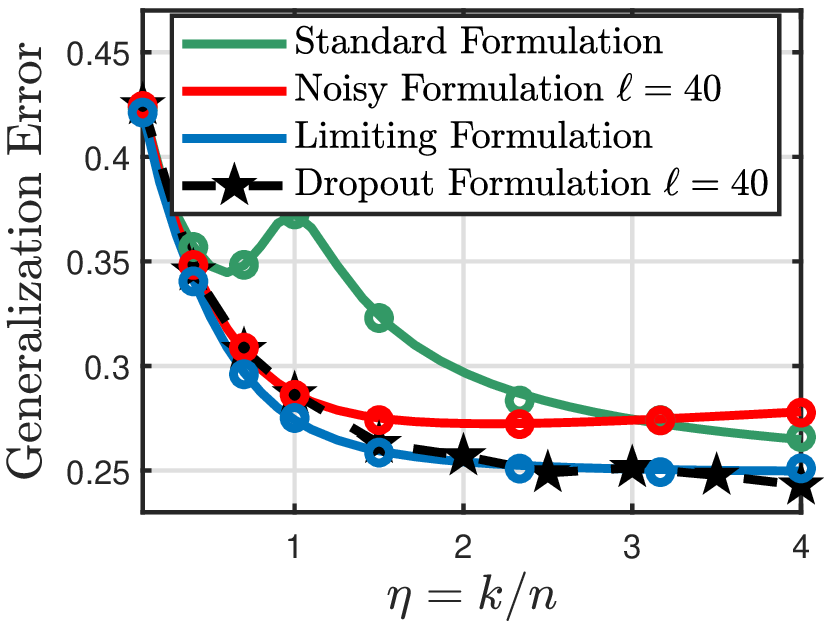}
    } \\   
    \subfigure[]{\label{figdropc}
    \includegraphics[width=0.37\linewidth]{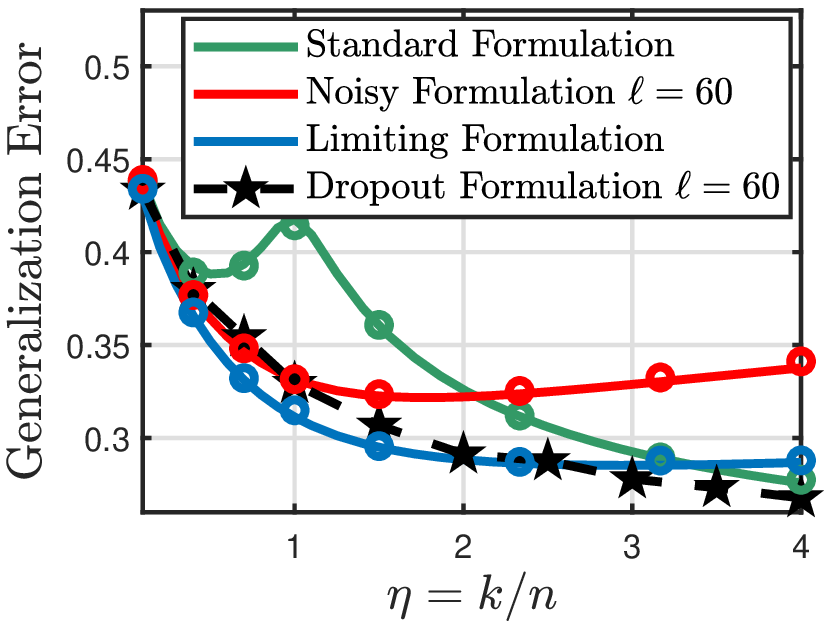}
    }
    \subfigure[]{\label{figdropd}
        \includegraphics[width=0.37\linewidth]{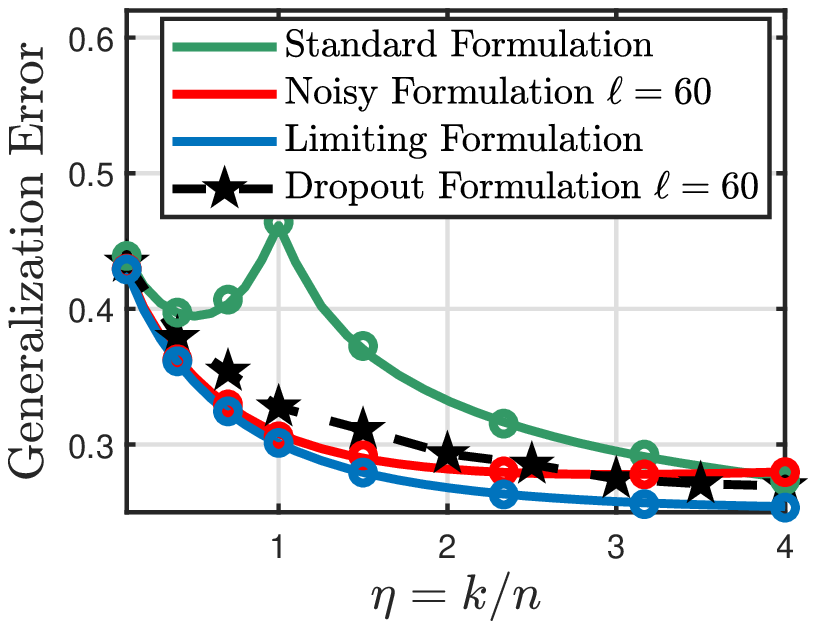}}
            
    \caption{Solid line: Theoretical predictions. Circle: Numerical simulations for \eqref{mform} and \eqref{form_asy}. Hexagram: Numerical simulations for the dropout formulation. {\bf (a)} Sigmoid activation and  $p=600$, $\alpha=1.4$, $\Delta=1.5$, $\lambda=10^{-4}$, $\ell=60$ and $\epsilon=0.3$. {\bf (b)} Erf activation and $p=600$, $\alpha=1.4$, $\Delta=2$, $\lambda=10^{-3}$, $\ell=40$ and $\epsilon=0.3$. {\bf (c)} ReLu activation and $p=600$, $\alpha=1.4$, $\Delta=3.5$, $\lambda=10^{-3}$, $\ell=60$ and $\epsilon=0.3$. {\bf (d)} Sign activation and $p=600$, $\alpha=1.6$, $\Delta=1.5$, $\lambda=10^{-4}$, $\ell=60$ and $\epsilon=0.4$. $\mF$ is the uniform feature matrix. The results are averaged over $35$ independent Monte Carlo trials.}
        \label{figdrop}
\end{figure}
First, we can notice that our asymptotic results provided in Theorem \ref{ther1}, Theorem \ref{thm2} and Lemma \ref{lem1} match with the actual performance of the noisy formulation and its limiting formulation. This gives another empirical verification of our theoretical predictions. Figure \ref{figdrop} suggests that the noisy and dropout formulations significantly improve the generalization performance of the standard formulation for an appropriately selected activation function, $\Delta$, $\ell$ and $\epsilon$. Figure \ref{figdropa} shows that the noisy and dropout formulations have a similar generalization performance for fixed $\ell=60$ and for the sigmoid activation function. Moreover, it shows that the dropout formulation approaches the generalization performance of the limiting formulation at high $\eta$. Moreover, Figures \ref{figdropb} and \ref{figdropc} shows that the dropout method provides a largely better performance as compared to the noisy formulation for fixed $\ell=40$ and $\ell=60$ and for the Erf and the ReLu activation functions, respectively. It also suggests that the limiting and dropout formulations have a similar generalization performance where the dropout method is better at high $\eta$ for the ReLu activation function. Now, Figure \ref{figdropd} considers the sign activation function and shows that the noisy formulation provides a better generalization performance as compared to the dropout formulation at low $\eta$ and for a fixed number of noise injections, $\ell=60$. We can also see that the limiting formulation generalizes better than the dropout method for the considered parameters. This simulation example particularly suggests that the performance of the noisy and dropout formulations depends on the activation function. Moreover, the dropout formulation have a similar generalization performance as compared to the limiting performance for the considered parameters.

\subsubsection{Convergence Behavior}
In the last simulation example, we study the convergence behavior of the noisy and dropout formulations for different activation functions. Figure \ref{figdrop_cb} first shows that our theoretical predictions stated in Theorem \ref{ther1}, Theorem \ref{thm2} and Lemma \ref{lem1} match with the actual performance of the noisy formulation and its limiting formulation. This gives another empirical verification of our predictions.
\begin{figure}[ht]
    \centering
    \subfigure[]{\label{figdrop_cba}
    \includegraphics[width=0.37\linewidth]{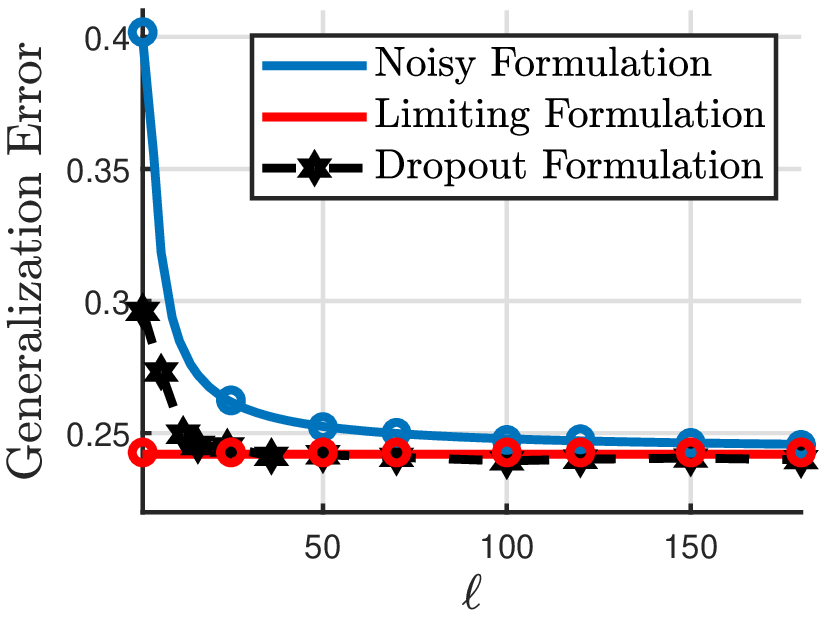}
    }
    \subfigure[]{\label{figdrop_cbb}
    \includegraphics[width=0.37\linewidth]{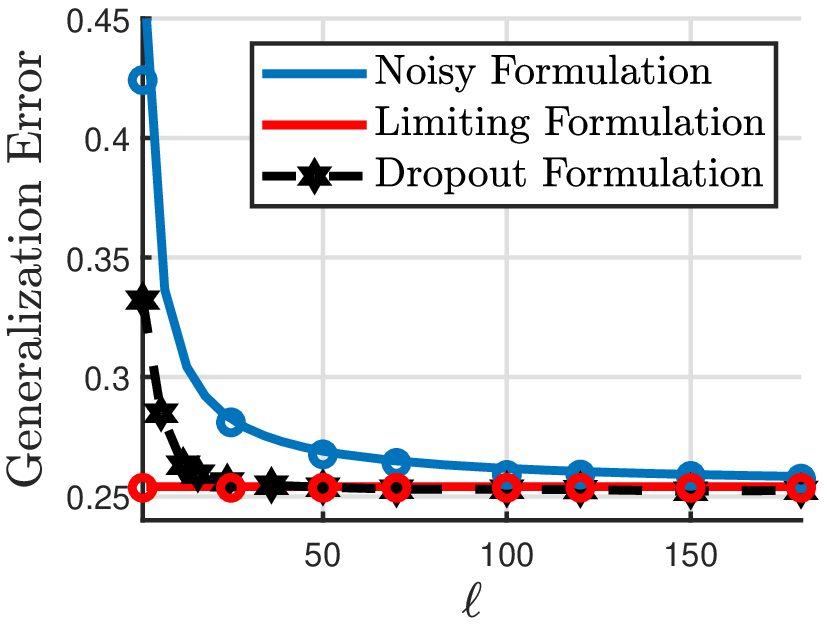}}
    \\
    \subfigure[]{\label{figdrop_cbc}
        \includegraphics[width=0.37\linewidth]{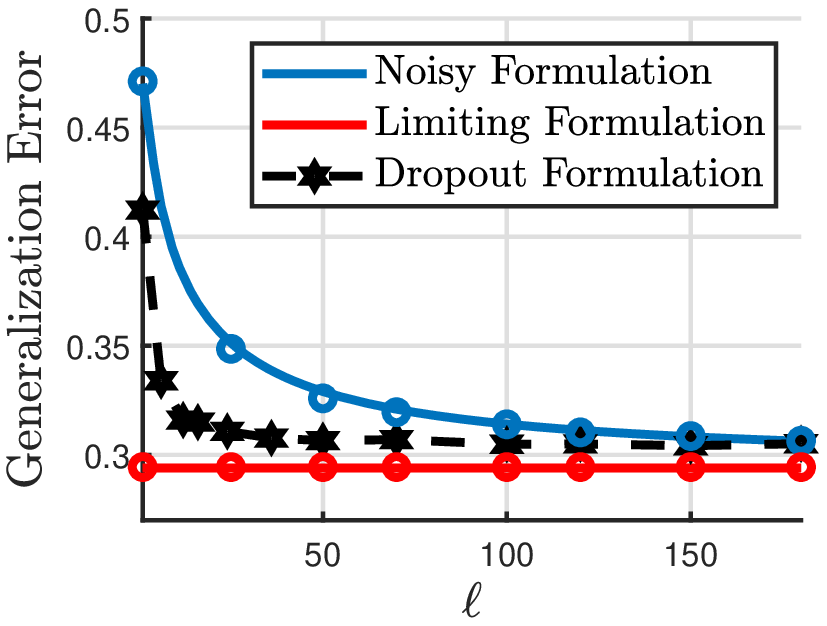}}
    \subfigure[]{\label{figdrop_cbd}
        \includegraphics[width=0.37\linewidth]{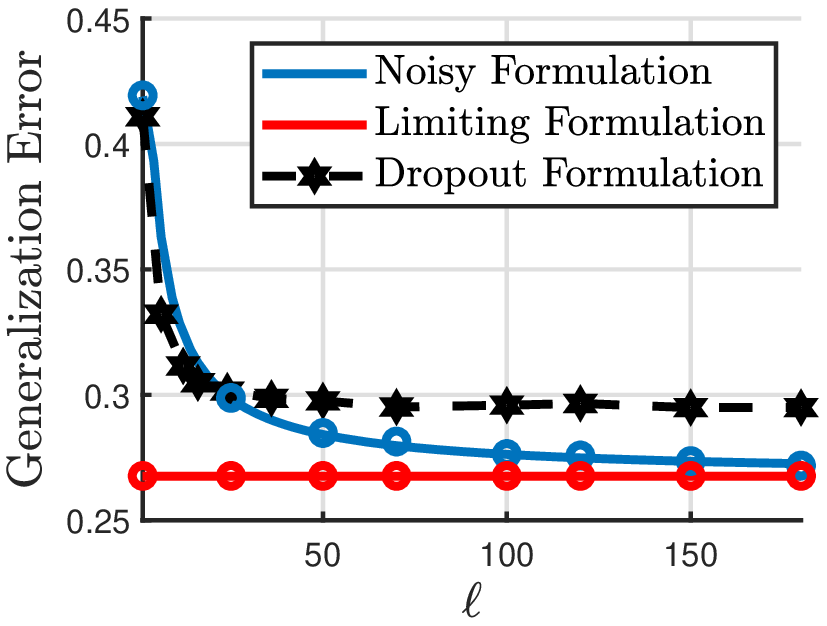}}
        
    \caption{Solid line: Theoretical predictions. Circle: Numerical simulations for \eqref{mform} and \eqref{form_asy}. Hexagram: Numerical simulations for the dropout formulation. {\bf (a)} Sigmoid activation function and we set the parameters as $p=500$, $\alpha=1.4$, $\Delta=1.5$, $\lambda=10^{-4}$, $\eta=2.5$ and $\epsilon=0.3$. {\bf (b)} Erf activation function and we set $p=600$, $\alpha=1.4$, $\Delta=2$, $\lambda=10^{-3}$, $\eta=2$ and $\epsilon=0.3$. {\bf (c)} ReLu activation function and we set the parameters as $p=500$, $\alpha=1.4$, $\Delta=3.5$, $\lambda=10^{-3}$, $\eta=1.5$ and $\epsilon=0.3$. {\bf (d)} Sign activation function and we set the parameters as $p=500$, $\alpha=1.6$, $\Delta=1.5$, $\lambda=10^{-4}$, $\eta=2$ and $\epsilon=0.4$. The feature matrix $\mF$ is the uniform feature matrix. The results are averaged over $35$ independent Monte Carlo trials.}
        \label{figdrop_cb}
\end{figure}
Figure \ref{figdrop_cb} studies the convergence properties of both approaches as a function of $\ell$ and for different activation functions. It particularly suggests that the dropout method has a better convergence rate as compared to the noisy formulation. Moreover, Figure \ref{figdrop_cb}  suggests that the noisy and dropout formulations have comparable generalization performance when the number of noise injections grows to infinity.

\section{Technical Details: Analysis of the Noisy Formulation}\label{dproof}
In this part, we provide a rigorous proof of the predictions stated in Theorem \ref{ther1}, Theorem \ref{thm2} and Lemma \ref{lem1}. To this end, we suppose that the assumptions considered in Sections \ref{CGEC} and \ref{assmp} are all satisfied. We derive our theoretical results using an extended version of the CGMT framework which we refer to as the multivariate CGMT.

\subsection{Multivariate Convex Gaussian Min--Max Theorem}\label{MCGMT_fram}
To derive the asymptotic results stated in Theorem \ref{ther1}, Theorem \ref{thm2} and Lemma \ref{lem1}, we use an extended version of the CGMT framework introduced in \cite{chris:151}. 
The CGMT is used to accurately analyze a generally hard primary formulation by introducing an asymptotically equivalent auxiliary optimization problem. In this paper, we consider primary optimization problems of the following form 
\begin{equation}\label{MPO}
\Phi_k=\min\limits_{\vw \in \mathcal{S}_{\vw}} \max\limits_{\vu\in\mathcal{S}_{\vu}} \sum_{i=1}^{\ell} \vu_i \mG_i \vw_i + \psi(\vw,\vu),
\end{equation}
where $\vu_i \in \mathbb{R}^{n_i}$ and $\vw_i \in \mathbb{R}^{k_i}$ are optimization variables and $\mG_i \in \mathbb{R}^{n_i\times k_i}$ has independent standard Gaussian random components, for any $i\in\lbrace 1\dots,\ell \rbrace$. Additionally, the vectors $\vw$ and $\vu$ are formed by the concatenation of the vectors $\lbrace \vw_i \rbrace_{i=1}^{\ell}$ and $\lbrace \vu_i \rbrace_{i=1}^{\ell}$, respectively. We refer to the formulation in \eqref{MPO} as the multivariate primary optimization (multivariate PO). We show that the corresponding multivariate auxiliary optimization (multivariate AO) is given by
\begin{align}\label{MAO}
\phi_k=\min\limits_{\vw \in \mathcal{S}_{\vw}} \max\limits_{\vu\in\mathcal{S}_{\vu}} &\sum_{i=1}^{\ell} \norm{\vu_i} \vg_i^\top \vw_i + \sum_{i=1}^{\ell} \norm{\vw_i} \vh_i^\top \vu_i + \psi(\vw,\vu),
\end{align}
where $\vg_i\in\mathbb{R}^{k_i}$ and $\vh_i\in\mathbb{R}^{n_i}$ are independent standard Gaussian random vectors, for any $i\in\lbrace 1\dots,\ell \rbrace$. Here, we assume that $\mG_i \in\mathbb{R}^{ {n_i}\times  {k_i}}$, $\vg_i \in \mathbb{R}^{ {k_i}}$ and $\vh_i\in\mathbb{R}^{ {n_i}}$, are all independent, the feasibility sets $\mathcal{S}_{\vw}\subset\R^{ {k}}$ and $\mathcal{S}_{\vu}\subset\R^{ {n}}$ are convex and compact, and the function $\psi: \mathbb{R}^{ {k}} \times \mathbb{R}^{ {n}} \to \mathbb{R}$ is continuous \emph{convex-concave} on $\mathcal{S}_{\vw}\times \mathcal{S}_{\vu}$, where $k=\sum_{i=1}^{\ell} k_i$ and $n=\sum_{i=1}^{\ell} n_i$. Now, we summarize our theoretical result in the following theorem.
\begin{theorem}[Multivariate CGMT]\label{mcgmt}
Assume that the above assumptions are all satisfied. For any fixed $\ell \geq 1$ and $\epsilon>0$, consider an open set $\mathcal{S}_{p,\epsilon}$.
Moreover, define the set $\mathcal{S}^c_{p,\epsilon}={\mathcal{S}}_{\vw} \setminus \mathcal{S}_{p,\epsilon}$. Let $\phi_k$ and $\phi^c_{k}$ be the optimal cost values of the multivariate AO formulation in \eqref{MAO} with feasibility sets ${\mathcal{S}}_{\vw}$ and $\mathcal{S}^c_{p,\epsilon}$, respectively. Assume that the following properties are all satisfied
\begin{itemize}
\item[\bf (1)] There exists a constant $\phi$ such that the optimal cost $\phi_k$ converges in probability to $\phi$ as $k$ goes to $+\infty$.
\item[\bf (2)] There exists a constant $\phi^c$ such that the optimal cost $\phi^c_{k}$ converges in probability to $\phi^c$ as $k$ goes to $+\infty$, for any fixed $\epsilon>0$.
\item[\bf (3)] There exists a positive constant $\zeta>0$ such that $\phi^c \geq \phi+\zeta$, for any fixed $\epsilon>0$.
\end{itemize}
Then, the following convergence in probability holds
\begin{equation}
\abs{ \Phi_k -\phi_k } \overset{k \to +\infty}{\longrightarrow} 0,~\text{and}~\mathbb{P}( \widehat{\vw}_{k} \in \mathcal{S}_{p,\epsilon} )  \overset{k\to\infty}{\longrightarrow} 1,\nonumber
\end{equation}
for any fixed $\epsilon>0$, where $\Phi_k$ and $\widehat{\vw}_{k}$ are the optimal cost and the optimal solution of the multivariate PO formulation in \eqref{MPO}.
\end{theorem}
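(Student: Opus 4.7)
The plan is to extend the proof of the classical CGMT of Thrampoulidis–Oymak–Hassibi \cite{chris:151} to accommodate the block structure of the primary optimization in \eqref{MPO}. Consider the two centered Gaussian processes
\begin{equation*}
X(\vw,\vu) \bydef \sum_{i=1}^{\ell} \vu_i^\top \mG_i \vw_i, \qquad Y(\vw,\vu) \bydef \sum_{i=1}^{\ell} \|\vu_i\| \vg_i^\top \vw_i + \sum_{i=1}^{\ell} \|\vw_i\| \vh_i^\top \vu_i,
\end{equation*}
so that $\Phi_k = \min_{\vw} \max_{\vu}[X(\vw,\vu) + \psi(\vw,\vu)]$ and $\phi_k = \min_{\vw} \max_{\vu}[Y(\vw,\vu) + \psi(\vw,\vu)]$. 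The heart of the proof is a Gaussian min-max comparison between these two processes.

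The first step is a multivariate Gordon-type inequality. By the mutual independence of $\{\mG_i, \vg_i, \vh_i\}_{i=1}^{\ell}$ across $i$, the cross-covariances of both processes decompose additively over blocks,
\begin{equation*}
\mathbb{E}[X(\vw,\vu) X(\vw',\vu')] = \sum_i (\vu_i^\top \vu_i')(\vw_i^\top \vw_i'),\ \ \mathbb{E}[Y(\vw,\vu) Y(\vw',\vu')] = \sum_i \|\vu_i\|\|\vu_i'\|(\vw_i^\top \vw_i') + \sum_i \|\vw_i\|\|\vw_i'\|(\vu_i^\top \vu_i'),
\end{equation*}
so each block has precisely the covariance structure that enables the single-matrix Gordon comparison used in \cite{chris:151}. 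I will adapt Gordon's Gaussian interpolation argument to the multi-block setting: introduce the interpolated process $Z_t = \sqrt{t}\,Y + \sqrt{1-t}\,\widetilde X$, where $\widetilde X$ is an augmentation of $X$ by a block-wise independent Gaussian term that equalizes the variances block-by-block, and differentiate $\mathbb{E}[F(Z_t)]$ in $t$ for a suitable smooth saddle-type functional $F$. Gaussian integration by parts applied block-by-block shows that the Slepian/Gordon sign conditions are preserved additively, which yields the one-sided probabilistic bound $\mathbb{P}(\Phi_k < c) \leq 2\,\mathbb{P}(\phi_k \leq c)$ for every $c \in \mathbb{R}$.

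The second step converts this into a two-sided comparison. Since $\psi$ is continuous convex–concave on the compact convex set $\mathcal{S}_\vw \times \mathcal{S}_\vu$, Sion's minimax theorem permits interchanging $\min$ and $\max$, after which the same multivariate Gordon inequality of Step~1 (applied to $-X$ and $-Y$ with the roles of $\vw$ and $\vu$ swapped) produces the reverse bound $\mathbb{P}(\Phi_k > c) \leq 2\,\mathbb{P}(\phi_k \geq c)$. Both bounds apply verbatim when $\mathcal{S}_\vw$ is replaced by any closed convex subset, and in particular by the closure of $\mathcal{S}^c_{p,\epsilon}$ (convexification does not enlarge $\Phi_k^c$ because the AO cost is convex in $\vw$), giving analogous control of the restricted optima $\Phi_k^c$ and $\phi_k^c$. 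Combining the two-sided comparison with assumption~(1) yields $\Phi_k \xrightarrow{P} \phi$. For the minimizer, note that on $\{\widehat{\vw}_k \in \mathcal{S}^c_{p,\epsilon}\}$ one has $\Phi_k^c = \Phi_k$, so the one-sided comparison together with assumptions (2) and (3) gives $\Phi_k^c \xrightarrow{P} \phi^c \geq \phi + \zeta$; this is incompatible with $\Phi_k \xrightarrow{P} \phi$ unless $\mathbb{P}(\widehat{\vw}_k \in \mathcal{S}^c_{p,\epsilon}) \to 0$.

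The main obstacle is Step~1. The block-diagonal covariance structure makes the sign conditions decompose cleanly, but some care is required to (i) carry out the variance-augmentation trick simultaneously across all $\ell$ blocks while keeping $X$ and $Y$ comparable, and (ii) ensure that the resulting probabilistic comparison still carries the classical factor of $2$, rather than degrading to $2^\ell$, as would happen if one naively applied the single-block CGMT $\ell$ times conditionally. Once the multivariate Gordon inequality is established with the right constant, Steps~2 and~3 are routine adaptations of the classical CGMT argument.
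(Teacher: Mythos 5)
Your proposal follows essentially the same route as the paper: augment the primary process with block-wise variance-equalizing Gaussian terms $\sum_i \norm{\vu_i}\norm{\vw_i} g_i$, verify that the block-decomposed covariances satisfy Gordon's comparison conditions, and then run the standard CGMT localization argument (two-sided comparison via convexity/Sion for the global cost, one-sided comparison on the restricted set together with assumptions (1)--(3) for the minimizer). One remark: the issue you single out as the main obstacle --- preserving the classical factor of $2$ rather than $2^\ell$ --- is a non-issue for this theorem. The paper simply conditions on the signs of all $\ell$ auxiliary Gaussians and accepts the bound $\mathbb{P}(\abs{\Phi_k-\chi}>c)\leq 2^\ell\,\mathbb{P}(\abs{\phi_k-\chi}>c)$; since $\ell$ is fixed as $k\to\infty$, this constant is harmless and the stated convergences follow immediately, so you need not fight for the sharper constant. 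A second, minor point: your parenthetical that "convexification does not enlarge $\Phi_k^c$" is not the right justification for handling $\mathcal{S}^c_{p,\epsilon}$ (replacing a set by its convex hull can only decrease a minimum); the correct observation, as in \cite{chris:151}, is that the one-sided Gordon inequality $\mathbb{P}(\Phi^c_k<c)\leq 2^\ell\,\mathbb{P}(\phi^c_k\leq c)$ requires no convexity of the constraint set, and only this one-sided bound is needed for the restricted problem.
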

The above theorem allows us to analyze the generally easy multivariate AO formulation given in \eqref{MAO} to infer asymptotic properties of the generally hard multivariate PO formulation in \eqref{MPO}. The proof of Theorem \ref{mcgmt} follows by showing that the formulation in \eqref{MAO} and the following formulation 
\begin{align}\label{pMPO}
\widehat{\Phi}_k&=\min\limits_{\vw \in \mathcal{S}_{\vw}} \max\limits_{\vu\in\mathcal{S}_{\vu}} \sum_{i=1}^{\ell} \vu_i \mG_i \vw_i + \psi(\vw,\vu) +\sum_{i=1}^{\ell} \norm{\vu_i} \norm{\vw_i} g_i,\nonumber
\end{align}
satisfy all the assumptions in \cite{gordon}, where $\lbrace g_i \rbrace_{i=1}^{\ell}$ are independent standard Gaussian random variables. Then, following the same analysis in \cite{chris:152} and \cite{chris:151}, we can show that for any $\chi \in \mathbb{R}$ and $c>0$, it holds
\begin{equation}\label{eq:cgmt}
\mathbb{P}\left( \abs{\Phi_k-\chi} > c\right) \leq 2^\ell~ \mathbb{P}\left(  \abs{\phi_k-\chi} > c \right).
\end{equation}
Combining this result with the assumptions of Theorem \ref{mcgmt} completes the proof. We omit the detailed proof since it is similar to the analysis in \cite{chris:152} and \cite{chris:151}. We refer to Theorem \ref{mcgmt} as the multivariate convex Gaussian min-max theorem (multivariate CGMT). 

Next, we use the multivariate CGMT to rigorously prove the technical results provided in Theorem \ref{ther1}, Theorem \ref{thm2} and Lemma \ref{lem1}. Our approach is to reformulate the Gaussian formulation in \eqref{gform} in the form of the multivariate PO problem given in \eqref{MPO}. Then, use the multivariate CGMT framework to show that the formulation in \eqref{mform} is asymptotically equivalent to an easier formulation that can be written in the form of the multivariate AO  problem given in \eqref{MAO}. The next step is to show that the multivariate AO formulation converges in probability to a deterministic problem that can be expressed in the form of the formulation given in \eqref{scprob1}. 

\subsection{Asymptotic Analysis of the Noisy Formulation}
In this part, we provide the technical steps to obtain the theoretical results stated in Theorem \ref{ther1}. Specifically, we use the multivariate CGMT framework to precisely analyze the noisy formulation introduced in \eqref{mform}. Next, we suppose that the assumptions introduced in Section \ref{assmp} are all satisfied.
\subsubsection{Formulating the Multivariate Primary Formulation}\label{for_mao}
Based on the cGEC introduced in Section \ref{CGEC}, it suffices to precisely analyze the Gaussian formulation in the large system limit. Then, it suffices to analyze the following formulation
\begin{align}\label{gform_pf}
\min_{\vw\in\mathbb{R}^k}& \frac{1}{2n\ell} \sum_{i=1}^{n} \sum_{j=1}^{\ell}  \big(y_i-  \widetilde{\mu}_1 \vw^\top  \mF^\top \va_i-\widehat{\mu}_1 \vw^\top  \mF^\top \vz_{ij} \nonumber\\
&-\mu_0 \vw^\top \vec{1}_k  -\mu_2 \vw^\top \vb_i- \mu_3 \vw^\top \vp_{ij} \big)^2 + \tfrac{\lambda}{2} \norm{\vw}^2.
\end{align}
Note that the formulation in \eqref{gform_pf} is strongly convex with a strong convexity parameter equals to $\lambda$. This means that it has a unique optimal solution. Note that the multivariate CGMT framework assumes that the feasibility sets of the multivariate PO formulation in \eqref{MPO} are compact. The following lemma shows that this assumption is satisfied by our formulation.
\begin{lemma}[Primal Compactness]\label{lem_comw}
Assume that $\widehat{\vw}_p$ is the unique optimal solution of the formulation given in \eqref{gform_pf}. Then, there exist two positive constants $C_w>0$ and $C_\vartheta>0$ such that
\begin{align}
\mathbb{P}\big( \norm{\widehat{\vw}_p} \leq C_w \big) \overset{p\to\infty}{\longrightarrow} 1,~\mathbb{P}\big( \abs{ \vec{1}_k^\top \widehat{\vw}_p} \leq C_\vartheta \big) \overset{p\to\infty}{\longrightarrow} 1,
\end{align}
where the second asymptotic result is valid only when $\mu_0 \neq 0$.
\end{lemma}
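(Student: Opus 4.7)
The strategy is two-fold: $\lambda$-strong convexity of the objective in \eqref{gform_pf} controls $\|\widehat{\vw}_p\|$, while the first-order optimality condition projected onto $\vec{1}_k$ controls $\vec{1}_k^\top\widehat{\vw}_p$ by exploiting the large coefficient $\mu_0 k$ that appears specifically along the all-ones direction when $\mu_0\neq 0$. Note that the naive bound $|\vec{1}_k^\top\widehat{\vw}_p|\leq\sqrt{k}\,\|\widehat{\vw}_p\|$ is useless as $k\to\infty$, so the second part really does require the extra structure.

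The norm bound is immediate from strong convexity. Denoting the objective in \eqref{gform_pf} by $L(\vw)$ and comparing with $\vw=\vzero$,
\begin{equation*}
\tfrac{\lambda}{2}\|\widehat{\vw}_p\|^{2} \;\leq\; L(\vzero) - L(\widehat{\vw}_p) \;\leq\; L(\vzero) \;=\; \tfrac{1}{2n}\sum_{i=1}^{n} y_i^{2}.
\end{equation*}
By Assumption~\ref{itm:fun_fwf}, $\mathbb{E}[y^{2}]=\gamma_1<+\infty$, so the right-hand side converges in probability to $\gamma_1/2$ by the law of large numbers. Any constant $C_w>\sqrt{\gamma_1/\lambda}$ then satisfies the first claim.

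For the $\vartheta$-bound (when $\mu_0\neq 0$), I would split $\vq_{ij}=\mu_0\vec{1}_k+\widetilde{\vq}_{ij}$ with $\widetilde{\vq}_{ij}$ the zero-mean part, and use the stationarity condition $\lambda\widehat{\vw}_p=(n\ell)^{-1}\sum_{i,j}r_{ij}\vq_{ij}$ with $r_{ij}=y_i-\widehat{\vw}_p^\top\vq_{ij}$. Taking an inner product with $\vec{1}_k$, using $\vec{1}_k^\top\vq_{ij}=\mu_0 k+R_{ij}$ with $R_{ij}:=\vec{1}_k^\top\widetilde{\vq}_{ij}$, and substituting $\sum_{i,j}r_{ij}=n\ell[\bar y-\mu_0\vartheta-\widehat{\vw}_p^\top\vm]$ where $\vartheta:=\vec{1}_k^\top\widehat{\vw}_p$, $\bar y:=n^{-1}\sum_i y_i$, and $\vm:=(n\ell)^{-1}\sum_{i,j}\widetilde{\vq}_{ij}$, I arrive at the scalar identity
\begin{equation*}
\vartheta\Bigl(1+\tfrac{\mu_0^{2}k}{\lambda}\Bigr) \;=\; \tfrac{\mu_0 k}{\lambda}\bigl[\bar y-\widehat{\vw}_p^\top\vm\bigr] + \tfrac{1}{\lambda n\ell}\sum_{i,j}r_{ij}R_{ij}.
\end{equation*}
After dividing by the dominant prefactor $1+\mu_0^{2}k/\lambda\sim\mu_0^{2}k/\lambda$, three estimates close the argument: (i) $\bar y\to\gamma_3$ by the law of large numbers; (ii) $\|\vm\|=O_P(\sqrt{\eta})$ because $\mathbb{E}\|\mF^\top\bar{\va}\|^{2}=\mathrm{tr}(\mM)/n=O(\eta)$ (Assumption~\ref{itm:ass_F} bounds $\mathrm{tr}(\mM)\leq k\zeta_{\max}$), and similarly for the other independent pieces, hence $|\widehat{\vw}_p^\top\vm|\leq C_w\|\vm\|=O_P(1)$; and (iii) by Cauchy--Schwarz, $|\sum_{i,j}r_{ij}R_{ij}|\leq(\sum r_{ij}^{2})^{1/2}(\sum R_{ij}^{2})^{1/2}$, with $\sum r_{ij}^{2}\leq 2n\ell L(\vzero)=O(n)$ from the first step and $\sum R_{ij}^{2}=O(n\ell k)$ since each $R_{ij}$ is a centred Gaussian with variance bounded by $\vec{1}_k^\top\mM\vec{1}_k\leq k\zeta_{\max}$. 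After dividing, the bracket contributes $O_P(1/|\mu_0|)$ and the cross term contributes $O_P((\mu_0^{2}\sqrt{\ell k})^{-1})=o_P(1)$, so $|\vartheta|=O_P(1)$ and any $C_\vartheta$ sufficiently large satisfies the second claim.

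The main technical care lies in the Gaussian-quadratic-form concentration used in (ii)--(iii): the claims $\|\vm\|=O_P(\sqrt{\eta})$ and $\sum_{i,j}R_{ij}^{2}=O_P(n\ell k)$ rest on the uniform spectral bound $\zeta_{\max}$ on $\mM$ from Assumption~\ref{itm:ass_F}(b) together with standard Hanson--Wright-type tail inequalities. The algebraic manipulations around the KKT condition are then routine.
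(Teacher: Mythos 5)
Your argument is correct, and it is worth distinguishing its two halves. The norm bound is exactly the standard strong-convexity comparison ($L(\vzero)\geq L(\widehat{\vw}_p)+\tfrac{\lambda}{2}\|\widehat{\vw}_p\|^2$, then the law of large numbers on $\tfrac{1}{n}\sum_i y_i^2$), which is precisely what the paper invokes by citing \cite[Lemma 1]{dhifallah2020}; no difference there. For the $\vartheta$-bound the paper gives essentially no detail beyond the same citation, whereas you supply a self-contained and correct mechanism: projecting the stationarity condition $\lambda\widehat{\vw}_p=(n\ell)^{-1}\sum_{i,j}r_{ij}\vq_{ij}$ onto $\vec{1}_k$ and isolating the coefficient $1+\mu_0^2k/\lambda$, which is where the hypothesis $\mu_0\neq 0$ enters and why the trivial Cauchy--Schwarz bound $|\vec{1}_k^\top\widehat{\vw}_p|\le\sqrt{k}\,C_w$ can be beaten. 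Your scalar identity checks out, the correlation between $\widehat{\vw}_p$ and the $R_{ij}$ is harmless because $\sum_{i,j}r_{ij}^2\le 2n\ell L(\vzero)$ holds pathwise and $\sum_{i,j}R_{ij}^2$ does not involve $\widehat{\vw}_p$, and the resulting rates ($O_P(1/|\mu_0|)$ for the bracket, $o_P(1)$ for the cross term) are right. What your route buys is an explicit, elementary proof of the second claim that the paper only asserts; what it costs is nothing beyond bookkeeping.

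One minor caveat: in steps (ii)--(iii) you use $\mathrm{tr}(\mM)\le k\zeta_{\max}$ and $\vec{1}_k^\top\mM\vec{1}_k\le k\zeta_{\max}$, which implicitly require an operator-norm (or at least trace) bound on $\mM$. Assumption~\ref{itm:ass_F}(b) only gives weak convergence of the empirical spectral distribution to a measure supported in $[0,\zeta_{\max}]$, which does not by itself control the largest eigenvalue or the trace. The paper leans on the same kind of spectral control via \cite[Theorem 2.1]{eigen_cons} (e.g.\ the bound $\|\mK\|/\sqrt{n}\le C_1$ in the proof of Lemma~\ref{lembtau}), so you are operating at the same level of rigor as the authors, but you should state the operator-norm bound as an explicit ingredient rather than attributing it to Assumption~\ref{itm:ass_F}(b) alone.
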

Given that the loss function in \eqref{gform_pf} is proper and strongly convex, one can use the results in \cite[Lemma 1]{dhifallah2020} to prove Lemma \ref{lem_comw}. This asymptotic result follows using Assumptions \ref{itm:fun_fwf}, \ref{itm:act_fun}, and \ref{itm:ass_F} and  \cite[Theorem 2.1]{eigen_cons}. Combining this result with the theoretical result stated in \cite[Proposition 1]{dhifallah2020}, the Gaussian formulation is asymptotically equivalent to the following formulation
\begin{equation}
\begin{aligned}\label{ana_fm1}
\min_{\substack{c_w \leq \norm{\vw}\leq C_w \\ \abs{\vartheta} \leq C_\vartheta }}& \frac{1}{2n\ell}  \sum_{j=1}^{\ell}  \norm{ \vy- [\widetilde{\mu}_1 \mA+\widehat{\mu}_1 \mZ_{j}] \mF \vw -\mu_0 \vartheta \vec{1}_n   -\mu_2  \mB \vw - \mu_3  \mP_{j} \vw  }^2 + \tfrac{\lambda}{2} \norm{\vw}^2,
\end{aligned}
\end{equation}
where the matrices $\mA \in\mathbb{R}^{n\times p}$, $\mZ_j \in\mathbb{R}^{n\times p}$, $\mB \in\mathbb{R}^{n\times k}$ and $\mP_j \in\mathbb{R}^{n\times k}$ are formed by the concatenation of the vectors $\va_i$, $\vz_{ij}$, $\vb_i$ and $\vp_{ij}$, respectively. Here, the label vector $\vy \in\mathbb{R}^n$ is formed by the labels $\lbrace y_i \rbrace_{i=1}^{n}$ and can be expressed as follows $\vy=\varphi(\mA \vxi)$. Based on the analysis in \cite[Proposition 2]{dhifallah2020}, it suffices to precisely analyze the problem in \eqref{ana_fm1} for fixed feasible $\vartheta$. Then, minimize its asymptotic limit over $\vartheta$ to infer the asymptotic properties of \eqref{ana_fm1}. Next, we start by analyzing the formulation in \eqref{ana_fm1} for fixed feasible $\vartheta$.
To express \eqref{ana_fm1} in the form of the multivariate PO introduced in \eqref{MPO}, we introduce additional dual optimization variables. Specifically, the formulation in \eqref{ana_fm1} can be equivalently formulated as follows
\begin{equation}
\begin{aligned}\label{ana_fm2}
\min_{\substack{\norm{\vw}\leq C_w }} \max_{\vu\in\mathbb{R}^{\ell n}}& -\frac{\norm{\vu}^2}{2n\ell}+\frac{1}{n\ell}  \sum_{j=1}^{\ell}  \vu_j^\top \Big( \mu_0 \vartheta \vec{1}_n + \mu_2  \mB \vw -\vy \\
&+ [\widetilde{\mu}_1 \mA+\widehat{\mu}_1 \mZ_{j}] \mF \vw + \mu_3  \mP_{j} \vw  \Big) + \tfrac{\lambda}{2} \norm{\vw}^2,
\end{aligned}
\end{equation}
where the dual optimization vector $\vu\in\mathbb{R}^{\ell n}$ can be decomposed as follows $\vu^\top=[\vu_1^\top,\dots,\vu_\ell^\top]^\top$, where $\vu_i\in\mathbb{R}^n$, for any $1 \leq i \leq \ell$. Note that the optimization problem given in \eqref{ana_fm2} has a unique optimal solution. The multivariate CGMT also assumes that the feasibility set of the maximization problem in \eqref{ana_fm2} is compact. The following lemma shows that this assumption is also satisfied by our formulation.
\begin{lemma}[Dual Compactness]\label{lem_comu}
Assume that $\widehat{\vu}_p$ is the unique optimal solution of the formulation given in \eqref{ana_fm2}. Then, there exists a positive constants $C_u>0$ such that
\begin{align}
\mathbb{P}\big( \norm{\widehat{\vu}_p}/\sqrt{n} \leq C_u \big) \overset{p\to\infty}{\longrightarrow} 1.
\end{align}
\end{lemma}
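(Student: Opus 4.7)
The strategy is to exploit the fact that the inner maximization in \eqref{ana_fm2} is an unconstrained, strictly concave quadratic in $\vu$ with curvature $-\mI/(n\ell)$, so it admits a closed--form maximizer for every feasible $\vw$. Differentiating the objective with respect to $\vu_j$ and setting it to zero at the saddle point $(\widehat{\vw}_p,\widehat{\vu}_p)$ gives
\begin{align}
\widehat{\vu}_j = \mu_0 \vartheta \vec{1}_n - \vy + \mu_2 \mB \widehat{\vw}_p + [\widetilde{\mu}_1 \mA + \widehat{\mu}_1 \mZ_j] \mF \widehat{\vw}_p + \mu_3 \mP_j \widehat{\vw}_p,\nonumber
\end{align}
for $j=1,\ldots,\ell$. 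Since $\norm{\widehat{\vu}_p}^2 = \sum_{j=1}^{\ell} \norm{\widehat{\vu}_j}^2$ and $\ell$ is fixed independently of $p$ under Assumption \ref{itm:asy_lim}, the claim reduces to showing that, for each fixed $j$, the quantity $\norm{\widehat{\vu}_j}/\sqrt{n}$ is stochastically bounded as $p\to\infty$.

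The triangle inequality splits $\norm{\widehat{\vu}_j}$ into five summands that can be dispatched by standard tools. First, $\norm{\mu_0\vartheta\vec{1}_n}/\sqrt{n} = \abs{\mu_0\vartheta} \leq \abs{\mu_0} C_\vartheta$ with probability tending to one by Lemma \ref{lem_comw}. Second, $\norm{\vy}^2/n$ concentrates on $\mathbb{E}[\varphi^2(s)]<+\infty$ via the law of large numbers applied to the i.i.d.\ sequence $y_i = \varphi(\va_i^\top \vxi)$, using Assumption \ref{itm:fun_fwf}(a). Each of the remaining three summands has the generic form $\norm{\mat{G}\mat{H}\widehat{\vw}_p}/\sqrt{n}$ where $\mat{G}\in\lbrace\mB,\mZ_j,\mA,\mP_j\rbrace$ is a standard Gaussian matrix of size $n\times p$ or $n\times k$, $\mat{H}\in\lbrace\mI,\mF\rbrace$, and $\norm{\widehat{\vw}_p}\leq C_w$ from the primal constraint in \eqref{ana_fm2}. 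Since $n,p,k$ grow at finite ratios (Assumption \ref{itm:asy_lim}), classical Gaussian concentration yields $\norm{\mat{G}}_{\text{op}} = \mathcal{O}_P(\sqrt{n})$, and $\norm{\mF}_{\text{op}}$ is bounded in probability by $\sqrt{\zeta_{\max}}+o_P(1)$ under Assumption \ref{itm:ass_F}(b). Multiplying these bounds gives $\norm{\mat{G}\mat{H}\widehat{\vw}_p}/\sqrt{n} = \mathcal{O}_P(1)$.

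Combining the five bounds and summing over the $\ell$ indices, one can choose $C_u$ strictly larger than the resulting deterministic constant so that the exceptional event has vanishing probability as $p\to\infty$. The one point that requires a little care is the control of $\norm{\mF}_{\text{op}}$, since Assumption \ref{itm:ass_F}(b) only asserts weak convergence of the empirical spectral distribution to a measure supported on $[0,\zeta_{\max}]$, which does not by itself pin down the largest singular value. For the Gaussian and uniform feature matrices used in Section \ref{sim_res} this bound is classical, and more generally I would read Assumption \ref{itm:ass_F}(b) as also encoding boundedness of the extremal singular value, a convention already implicit elsewhere in the paper. Beyond this bookkeeping issue I do not anticipate any serious obstacle: the argument is essentially a direct evaluation of the inner max followed by off--the--shelf concentration inequalities combined with the primal compactness of Lemma \ref{lem_comw}.
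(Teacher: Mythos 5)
Your proof is correct, but it takes a more explicit route than the paper. The paper disposes of this lemma in two sentences by appealing to a general variational-analysis result on boundedness of optimal dual vectors (Proposition 11.3 of Rockafellar--Wets) together with the analogous lemma in the prior CGMT work \cite{dhifallah2020}; that machinery is what one needs when the loss is a general convex function and the inner maximizer has no closed form. You instead exploit the fact that here the loss is quadratic, so the inner maximization in \eqref{ana_fm2} is an unconstrained strictly concave quadratic whose maximizer is exactly the residual $\widehat{\vu}_j = \mu_0\vartheta\vec{1}_n - \vy + \mu_2\mB\widehat{\vw}_p + [\widetilde{\mu}_1\mA+\widehat{\mu}_1\mZ_j]\mF\widehat{\vw}_p + \mu_3\mP_j\widehat{\vw}_p$, and then you bound each term by operator-norm concentration and Lemma \ref{lem_comw}; this is sound because the operator-norm bounds are uniform over $\norm{\vw}\leq C_w$, so the statistical dependence of $\widehat{\vw}_p$ on the Gaussian matrices is harmless. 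The one point you flag --- that Assumption \ref{itm:ass_F}(b) only gives weak convergence of the spectrum of $\mF^\top\mF$ and not directly a bound on $\norm{\mF}_{\mathrm{op}}$ --- is a genuine gap in the stated assumptions, but the paper itself leans on the same implicit strengthening (it invokes the eigenvalue-confinement result \cite[Theorem~2.1]{eigen_cons} to get $\norm{\mK}/\sqrt{n}\leq C_1$ in the proof of Lemma \ref{lembtau}), so your reading is consistent with how the assumption is used throughout. In short: your argument is an elementary, self-contained specialization that buys transparency for the square loss; the paper's citation-based argument buys generality across convex losses at the cost of opacity.
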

This result can also be proved using similar steps as in \cite[Lemma 2]{dhifallah2020}. Specifically, we can use the result in \cite[Proposition 11.3]{var_ana} to show the compactness of the optimal dual vector $\widehat{\vu}_p$. The results in Lemma \ref{lem_comw} and Lemma \ref{lem_comu} show that the Gaussian formulation is asymptotically equivalent to the following formulation
\begin{equation}
\begin{aligned}\label{ana_fm2p}
\min_{\substack{\norm{\vw}\leq C_w }} \max_{ \frac{\norm{\vu}}{\sqrt{n}} \leq C_u }& -\frac{\norm{\vu}^2}{2n\ell}+\frac{1}{n\ell}  \sum_{j=1}^{\ell}  \vu_j^\top \big( \mu_0 \vartheta \vec{1}_n + \mu_2  \mB \vw -\vy \\
&+ [\widetilde{\mu}_1 \mA+\widehat{\mu}_1 \mZ_{j}] \mF \vw + \mu_3  \mP_{j} \vw  \big) + \tfrac{\lambda}{2} \norm{\vw}^2.
\end{aligned}
\end{equation}
Next, we focus on precisely analyzing the formulation in \eqref{ana_fm2p}. Now, note that the label vector $\vy$ depend on the Gaussian matrix $\mA$. Then, we decompose $\mA$ as follows
\begin{align}
\mA=\mA \mP_{\vxi}+\mA \mP^{\perp}_{\vxi}=\mA {\vxi} {\vxi}^\top +\mA \mP^{\perp}_{\vxi},
\end{align}
where $\mP_{\vxi}\in\mathbb{R}^{p\times p}$ denotes the projection matrix onto the space spanned by the vector $\vxi\in\mathbb{R}^p$ and $\mP^{\perp}_{\vxi}=\mI_p-{\vxi} {\vxi}^\top$ denotes the projection matrix onto the orthogonal complement of the space spanned by the vector $\vxi$. Note that the random matrix $\mA {\vxi} {\vxi}^\top$ is independent of the random matrix $\mA \mP^{\perp}_{\vxi}$. Then, we can express $\mA$ as follows without changing its statistics
\begin{align}
\mA=\vs {\vxi}^\top + \mA \mP^{\perp}_{\vxi},
\end{align}
where $\vs \in\mathbb{R}^n$ has independent standard Gaussian components and the two random quantities $\vs$ and $\mA$ are independent. This shows that the optimization problem formulated in \eqref{ana_fm2p} is statistically equivalent to the following formulation
\begin{equation}
\begin{aligned}\label{ana_fm3}
\min_{\substack{\norm{\vw}\leq C_w }} \max_{ \frac{\norm{\vu}}{\sqrt{n}}\leq C_u }& -\frac{\norm{\vu}^2}{2n\ell}+\frac{1}{n\ell}  \sum_{j=1}^{\ell}  \vu_j^\top \big( -\vy + \widetilde{\mu}_1 \vs {\vxi}^\top \mF \vw  \\
&+ \mu_0 \vartheta \vec{1}_n + \mG \mSigma^{\frac{1}{2}} \vw + \mT_j \mGm^{\frac{1}{2}} \vw \big) + \tfrac{\lambda}{2} \norm{\vw}^2,
\end{aligned}
\end{equation}
where $\mG$ and $\lbrace \mT_j \rbrace_{j=1}^\ell$ are independent matrices with independent and identically distributed standard Gaussian components. Here, $\mG\in\mathbb{R}^{n\times k}$, $\mT_j\in\mathbb{R}^{n\times k}$. Moreover, $\mSigma\in\mathbb{R}^{k\times k}$ and $\mGm\in\mathbb{R}^{k\times k}$ are positive definite matrices and defined as follows
\begin{align}\label{msG}
\begin{cases}
\mSigma=\widetilde{\mu}_1^2 \mF^\top \mP^{\perp}_{\vxi} \mF + \mu_2^2 \mI_k \\
\mGm=\widehat{\mu}_1^2 \mF^\top \mF + \mu_3^2 \mI_k.
\end{cases}
\end{align}
The above results show that it suffices to precisely analyze the formulation given in \eqref{ana_fm3}. Moreover, note that \eqref{ana_fm3} can be equivalently formulated as follows
\begin{equation}
\begin{aligned}\label{ana_fm4}
\min_{\substack{\norm{\vw}\leq C_w }} \max_{ \frac{\norm{\vu}}{\sqrt{n}}\leq C_u }&~\frac{\mu_0 \vartheta}{n\ell}  \vu^\top \vec{1}_{\ell n}-\frac{\norm{\vu}^2}{2n\ell}-\frac{\vu^\top \widehat{\vy}}{n\ell} +\frac{\widetilde{\mu}_1 \bar{\vxi}^\top \mF \vw }{\ell n} \vu^\top \widehat{\vs}  \\
&+ \frac{1}{n\ell} \vu^\top \mM^\top \mG \mSigma^{\frac{1}{2}} \vw + \frac{1}{n\ell} \vu^\top \mT \mGm^{\frac{1}{2}} \vw + \tfrac{\lambda}{2} \norm{\vw}^2,
\end{aligned}
\end{equation}
where $\vec{1}_{\ell n}$ denotes the all one vector of size $\ell n$, $\widehat{\vy}\in\mathbb{R}^{\ell n}$ and $\widehat{\vs}\in\mathbb{R}^{\ell n}$ are formed after performing an $\ell$ times concatenation of the vectors $\vy$ and $\vs$, respectively. Here, the matrices $\mT\in\mathbb{R}^{n \ell \times k}$ and $\mM\in\mathbb{R}^{n \times \ell n}$ are given as follows
\begin{align}
\mT^\top=[\mT_1^\top,\mT_2^\top,\dots,\mT_\ell^\top]^\top,~\mM=[\mI_n,\mI_n,\dots,\mI_n]. \nonumber
\end{align}
We can notice that the optimization problem formulated in \eqref{ana_fm4} is in the form of the multivariate PO problem given in \eqref{MPO}. Therefore, applying the multivariate CGMT, the corresponding multivariate AO problem can be expressed as follows
\begin{equation}
\begin{aligned}\label{ana_fm5}
\min_{\substack{\norm{\vw}\leq C_w }} \max_{ \frac{\norm{\vu}}{\sqrt{n}}\leq C_u }& -\frac{\norm{\vu}^2}{2n\ell} +\frac{\widetilde{\mu}_1 \bar{\vxi}^\top \mF \vw }{\ell n} \vu^\top \widehat{\vs} +\frac{\mu_0 \vartheta}{n\ell}  \vu^\top \vec{1}_{\ell n} \\
& -\frac{\vu^\top \widehat{\vy}}{n\ell} + \frac{1}{n\ell} \norm{\mM \vu} \vg_1^\top \mSigma^{\frac{1}{2}} \vw + \frac{1}{\ell n} \norm{\vu} \vg_2^\top \mGm^{\frac{1}{2}} \vw \\
& + \frac{\norm{\mSigma^{\frac{1}{2}} \vw}}{\ell n}  \vh_1^\top \mM \vu+ \frac{\norm{\mGm^{\frac{1}{2}} \vw}}{\ell n}  \vh_2^\top \vu
+ \frac{\lambda}{2} \norm{\vw}^2,
\end{aligned}
\end{equation}
where the vectors $\vg_1\in\mathbb{R}^k$, $\vg_2\in\mathbb{R}^k$, $\vh_1\in\mathbb{R}^n$ and $\vh_2\in\mathbb{R}^{\ell n}$ are independent standard Gaussian random vectors. First, observe that the convexity assumption in Theorem \ref{mcgmt} is satisfied by our multivariate PO formulation in \eqref{ana_fm4}. Furthermore, note that the compactness assumptions in the multivariate CGMT framework are also satisfied by our primary problem in \eqref{ana_fm4}. Then, following the multivariate CGMT framework, we focus on analyzing the multivariate AO formulation introduced in \eqref{ana_fm5}. Specifically, the objective is to simplify the multivariate AO problem and study its asymptotic properties.
\subsubsection{Simplifying the Multivariate Auxiliary Formulation}\label{MAO_simp}
In this part, our objective is to simplify the multivariate AO problem given in \eqref{ana_fm5}. Specifically, the main objective is to express the formulation in \eqref{ana_fm5} in terms of scalar optimization variables. First, observe that the singular value decomposition (SVD) of the matrix $\mM$ can be expressed as $\mM=\mU \mS \mV^\top$, where $\mU\in\mathbb{R}^{n\times n}$ and $\mV\in\mathbb{R}^{\ell n\times \ell n}$ are two orthogonal matrices and $\mS \in \mathbb{R}^{n \times \ell n}$ is given by $\mS=[\sqrt{\ell} \mI_n~~\vec{0}_{n\times(\ell-1)n}]$. Therefore, the optimization problem expressed in \eqref{ana_fm5} can be formulated as follows
\begin{equation}
\begin{aligned}\label{ana_fm6}
\min_{\substack{\norm{\vw}\leq C_w }} \max_{ \frac{\norm{\vu}}{\sqrt{n}}\leq C_u }&~\frac{\widetilde{\mu}_1 \bar{\vxi}^\top \mF \vw }{\ell n} \vu^\top \mV^\top \widehat{\vs} -\frac{\norm{\vu}^2}{2n\ell} +\frac{\mu_0 \vartheta}{n\ell}  \vu^\top \mV^\top \vec{1}_{\ell n} \\
& -\frac{\vu^\top \mV^\top \widehat{\vy}}{n\ell} + \frac{\sqrt{\ell}}{n\ell} \norm{\vu_1} \vg_1^\top \mSigma^{\frac{1}{2}} \vw + \frac{1}{\ell n} \norm{\vu} \vg_2^\top \mGm^{\frac{1}{2}} \vw \\
& + \frac{\sqrt{\ell}}{\ell n} \norm{\mSigma^{\frac{1}{2}} \vw} \vh_1^\top \vu_1+ \frac{\norm{\mGm^{\frac{1}{2}} \vw}}{\ell n}  \vh_2^\top \vu+ \tfrac{\lambda}{2} \norm{\vw}^2,
\end{aligned}
\end{equation}
where we perform the change of variable $\vu_{\text{new}}=\mV^\top \vu$, we decompose the new vector as $\vu_{\text{new}}^\top=[\vu_1^\top,\dots,\vu_\ell^\top]$ and we replace $\vu_{\text{new}}$ by $\vu$. Now, we denote by $t_1$ and $t_2$ the norms of the independent vectors $\vu_1$ and $\vu_{-1}^\top=[\vu_2^\top,\dots,\vu_\ell^\top]$, i.e. $t_1=\norm{\vu_1}$ and $t_2=\norm{\vu_{-1}}$. Additionally, we decompose the orthogonal matrix $\mV$ as follows $\mV=[\mV_1~~\mV_2]$, where $\mV_1\in\mathbb{R}^{\ell n \times n}$ and $\mV_2\in\mathbb{R}^{\ell n \times (\ell-1) n}$. Define the vector $\vv\in\mathbb{R}^k$ as $\vv=\mF^\top {\vxi}$ and the scalar $q$ as $q=\bar{\vv}^\top \vw$, where $\bar{\vv}$ is defined as follows $\bar{\vv}=\vv/\norm{\vv}$. Also, define the scalar $T_{p,1}$ as $T_{p,1}=\norm{\vv}$. 

Now, we are ready to further simplify the multivariate AO formulation. The first step is to fix $t_1$ and $t_2$ and solve the formulation in \eqref{ana_fm6} over the direction of the independent vectors $\vu_1$ and $\vu_{-1}$. Specifically, based on the result in Lemma \ref{lem_comu}, the formulation given in \eqref{ana_fm6} can be simplified as follows
\begin{equation}\label{ana_fm6_nxt}
\begin{aligned}
&\min_{\substack{\norm{\vw}\leq C_w }} \max_{\substack{0 \leq {t_1}/{\sqrt{n}} \leq C_{t_1}\\ 0 \leq {t_2}/{\sqrt{n}} \leq C_{t_2}}}~ \frac{\sqrt{\ell} t_1}{n\ell} \vg_1^\top \mSigma^{\frac{1}{2}} \vw + \frac{\sqrt{t_1^2+t_2^2}}{\ell n}  \vg_2^\top \mGm^{\frac{1}{2}} \vw -\frac{t_1^2+t_2^2}{2n\ell}+ \tfrac{\lambda}{2} \norm{\vw}^2 \\
&+\frac{t_1}{\ell n} \norm{ \sqrt{\ell} \norm{\mSigma^{\frac{1}{2}} \vw} \vh_1 + \norm{\mGm^{\frac{1}{2}} \vw} \widehat{\vh}_2 - \mV_1^\top \widehat{\vy} + \mu_0 \vartheta \mV_1^\top \vec{1}_{\ell n} + \widetilde{\mu}_1 T_{p,1} q  \mV_1^\top \widehat{\vs}  } \\
& +\frac{t_2}{\ell n} \norm{ \norm{\mGm^{\frac{1}{2}} \vw} \widetilde{\vh}_2 - \mV_2^\top \widehat{\vy} + \mu_0 \vartheta \mV_2^\top \vec{1}_{\ell n} + \widetilde{\mu}_1 T_{p,1} q   \mV_2^\top \widehat{\vs}  },
\end{aligned}
\end{equation}
where we decompose the Gaussian vector $\vh_2$ as $\vh_2=[\widehat{\vh}_2^\top~~\widetilde{\vh}_2^\top]$, where $\widehat{\vh}_2\in\mathbb{R}^{n}$ and $\widetilde{\vh}_2\in\mathbb{R}^{(\ell-1) n}$. Here, $C_{t_1}$ and $C_{t_2}$ are sufficiently large positive constants that ensure the asymptotic result in Lemma \ref{lem_comu}. Note that it remains to solve over the primal vector $\vw$ to obtain a scalar formulation of the multivariate AO problem. We continue our analysis by defining the following optimization problem
\begin{equation}
\label{ana_fm6_nxt2}
\begin{aligned}
&\min_{\substack{\norm{\vw}\leq C_w }} \max_{\substack{0 \leq {t_1}/{\sqrt{n}} \leq C_{t_1}\\ 0 \leq {t_2}/{\sqrt{n}} \leq C_{t_2}}}~ \frac{\sqrt{\ell} t_1}{n\ell} \vg_1^\top \mSigma^{\frac{1}{2}} \vw + \frac{\sqrt{t_1^2+t_2^2}}{\ell n}  \vg_2^\top \mGm^{\frac{1}{2}} \vw -\frac{t_1^2+t_2^2}{2n\ell}+ \tfrac{\lambda}{2} \norm{\vw}^2 \\
&+\frac{t_1}{\ell n} \sqrt{ {\ell} \norm{\mSigma^{\frac{1}{2}} \vw}^2 \norm{ \vh_1}^2+\norm{\mGm^{\frac{1}{2}} \vw}^2 \norm{\widehat{\vh}_2}^2+\norm{ \mu_0 \vartheta \mV_1^\top \vec{1}_{\ell n} + \widetilde{\mu}_1 T_{p,1} q  \mV_1^\top \widehat{\vs} - \mV_1^\top \widehat{\vy} }^2  }  \\
& +\frac{t_2}{\ell n}  \sqrt{ \norm{\mGm^{\frac{1}{2}} \vw}^2 \norm{\widetilde{\vh}_2}^2+\norm{ \mu_0 \vartheta \mV_2^\top \vec{1}_{\ell n}  - \mV_2^\top \widehat{\vy} + \widetilde{\mu}_1 T_{p,1} q   \mV_2^\top \widehat{\vs}  }^2 }.
\end{aligned}
\end{equation}
Note that the difference between the cost functions of the formulations in \eqref{ana_fm6_nxt} and \eqref{ana_fm6_nxt2} are terms that converge in probability to zero. Before showing the asymptotic equivalence between the formulations in \eqref{ana_fm6_nxt} and \eqref{ana_fm6_nxt2}, we provide important convexity properties of the optimization problem in \eqref{ana_fm6_nxt2} as given in the following lemma. 
\begin{lemma}[Strong--convexity of \eqref{ana_fm6_nxt2}]\label{convty_vn2}
Define $\widehat{f}_{p,2}$ as the cost function of the problem in \eqref{ana_fm6_nxt2}. Then, $\widehat{f}_{p,2}$ is strongly convex in the vector $\vw$ where $\lambda$ is a strong convexity parameter. Moreover, it is strongly concave in the variables $t_1$ and $t_2$ in the feasibility sets where $-1$ is a strong concavity parameter.
\end{lemma}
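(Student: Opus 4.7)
The plan is to establish the two claims by inspecting $\widehat{f}_{p,2}(\vw,t_1,t_2)$ summand by summand, using that a sum of convex (resp.\ concave) terms together with a strongly convex (resp.\ strongly concave) summand retains that strong convexity (resp.\ concavity).

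For the $\lambda$-strong convexity in $\vw$, the ridge $\tfrac{\lambda}{2}\norm{\vw}^2$ already supplies the modulus, so it suffices to check that every other summand is convex in $\vw$. The terms $\tfrac{\sqrt{\ell}\,t_1}{n\ell}\vg_1^\top\mSigma^{\frac{1}{2}}\vw$ and $\tfrac{\sqrt{t_1^2+t_2^2}}{\ell n}\vg_2^\top\mGm^{\frac{1}{2}}\vw$ are affine in $\vw$, and $-(t_1^2+t_2^2)/(2n\ell)$ is constant in $\vw$. The nontrivial summands are the two $\tfrac{t_i}{\ell n}\sqrt{\cdot}$ terms. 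The key observation is that $q=\bar{\vv}^\top\vw$ is linear in $\vw$, so each radicand can be rewritten as a single squared Euclidean norm $\norm{A_i\vw+\vec{c}_i}^2$ obtained by stacking the relevant $\vw$-dependent linear blocks (namely $\mSigma^{\frac{1}{2}}\vw$ and $\mGm^{\frac{1}{2}}\vw$ scaled by the Gaussian norms $\norm{\vh_1}$, $\norm{\widehat{\vh}_2}$, $\norm{\widetilde{\vh}_2}$) together with the affine block $\mu_0\vartheta\mV_i^\top\vec{1}_{\ell n}-\mV_i^\top\widehat{\vy}+\widetilde{\mu}_1 T_{p,1}(\bar{\vv}^\top\vw)\mV_i^\top\widehat{\vs}$. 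Its square root $\norm{A_i\vw+\vec{c}_i}$ is then a norm of an affine map of $\vw$, hence convex, and multiplication by $t_i\ge 0$ preserves convexity. Summing, $\widehat{f}_{p,2}$ is $\lambda$-strongly convex in $\vw$.

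For strong concavity in $(t_1,t_2)$ with $\vw$ frozen, the quadratic $-(t_1^2+t_2^2)/(2n\ell)$ provides strong concavity. All summands except $\tfrac{\sqrt{t_1^2+t_2^2}}{\ell n}\vg_2^\top\mGm^{\frac{1}{2}}\vw$ are either affine in $(t_1,t_2)$ or curvature-free once $\vw$ is fixed (the two $\tfrac{t_i}{\ell n}\sqrt{\cdot}$ contributions being linear in $t_i$, with radicands independent of $(t_1,t_2)$). The remaining summand has rank-one Hessian in $(t_1,t_2)$ with single nonzero eigenvalue $\vg_2^\top\mGm^{\frac{1}{2}}\vw/\bigl(\ell n\sqrt{t_1^2+t_2^2}\bigr)$ in the direction orthogonal to $(t_1,t_2)$. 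Combined with the $-I/(n\ell)$ from the quadratic, the total Hessian has eigenvalues $-1/(n\ell)$ and $-1/(n\ell)+\vg_2^\top\mGm^{\frac{1}{2}}\vw/\bigl(\ell n\sqrt{t_1^2+t_2^2}\bigr)$. The primal compactness $\norm{\vw}\le C_w$ from Lemma \ref{lem_comw}, the spectral bound on $\mGm$ from Assumption \ref{itm:ass_F}, and Gaussian concentration of $\vg_2$ together bound $|\vg_2^\top\mGm^{\frac{1}{2}}\vw|$ uniformly; hence both eigenvalues remain bounded above by a strictly negative constant on the compact feasibility set, yielding the claimed strong concavity.

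The main technical obstacle is exactly this uniform control: when $\vg_2^\top\mGm^{\frac{1}{2}}\vw>0$, the coupling $\sqrt{t_1^2+t_2^2}\,\vg_2^\top\mGm^{\frac{1}{2}}\vw$ is convex rather than concave in $(t_1,t_2)$, so one must verify quantitatively that its Hessian contribution cannot defeat the strong concavity coming from $-(t_1^2+t_2^2)/(2n\ell)$. The region where $\sqrt{t_1^2+t_2^2}$ is small requires separate treatment since the rank-one Hessian diverges there, but in that regime the offending summand is itself nearly affine, so a direct finite-difference argument using the quadratic's strong concavity closes the gap. These uniform estimates rest on Lemmas \ref{lem_comw}, \ref{lem_comu} and the spectral assumptions on $\mF$ in Assumption \ref{itm:ass_F}.
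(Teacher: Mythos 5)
Your argument for the $\lambda$-strong convexity in $\vw$ is correct and is essentially the paper's: every non-ridge summand is either affine in $\vw$ or the norm of an affine map of $\vw$ (after absorbing $q=\bar{\vv}^\top\vw$ into the stacked affine block), so the ridge supplies the modulus.

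The strong-concavity half has a genuine gap. With $g:=\vg_2^\top \mGm^{\frac{1}{2}}\vw$ held fixed, the only curvature in $(t_1,t_2)$ comes from $-\tfrac{t_1^2+t_2^2}{2n\ell}+\tfrac{\sqrt{t_1^2+t_2^2}}{\ell n}\,g$, and when $g>0$ (probability $1/2$) the second eigenvalue you compute, $-\tfrac{1}{n\ell}+\tfrac{g}{\ell n\sqrt{t_1^2+t_2^2}}$, is \emph{positive} throughout the sub-region $\sqrt{t_1^2+t_2^2}<g$ of the feasibility set, which contains a neighborhood of the origin and hence cannot be excluded. Your proposed patch --- that near the origin the offending summand is ``nearly affine'' so a finite-difference argument closes the gap --- is exactly backwards: the Euclidean norm is \emph{most} non-affine at its kink. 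Concretely, for $\phi(\vt)=-\tfrac{1}{2}\norm{\vt}^2+c\norm{\vt}$ with $c>0$, taking endpoints $(c/2,0)$ and $(0,c/2)$ gives $\phi$ equal to $\tfrac{3c^2}{8}$ at both, while the midpoint value is $c^2\bigl(\tfrac{1}{2\sqrt{2}}-\tfrac{1}{16}\bigr)\approx 0.291\,c^2<0.375\,c^2$, so midpoint concavity already fails; no strong-concavity modulus survives. (There is also a secondary issue: your uniform bound on $\abs{g}$ over the ball $\norm{\vw}\le C_w$ is of order $\sqrt{k}$, not $O(1)$, so even away from the origin the comparison with $-\tfrac{1}{n\ell}$ is not automatic.) The paper avoids all of this by a different device: it replaces $\sqrt{t_1^2+t_2^2}\,\vg_2^\top\mGm^{\frac{1}{2}}\vw$ by the distributionally equivalent separable form $t_1\vg_{21}^\top\mGm^{\frac{1}{2}}\vw+t_2\vg_{22}^\top\mGm^{\frac{1}{2}}\vw$ with $\vg_{21},\vg_{22}$ independent Gaussians (these two blocks arise naturally if one keeps $\vu_1$ and $\vu_{-1}$ separate in the AO). After this replacement the coupling is linear in $(t_1,t_2)$, the quadratic $-\tfrac{t_1^2+t_2^2}{2n\ell}$ is the sole source of curvature, and strong concavity holds on the whole feasibility set. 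You should adopt that replacement rather than trying to control the Hessian of the isotropic term directly.
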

\begin{proof}
The strong convexity can be proved by observing that the cost function of \eqref{ana_fm6_nxt2} is a positive sum of convex and strongly convex functions in terms of $\vw$ for fixed feasible $t_1$ and $t_2$. Moreover, note that the term ${\sqrt{t_1^2+t_2^2}}\vg_2^\top \mGm^{\frac{1}{2}} \vw$ can be replaced with $t_1\vg_{21}^\top \mGm^{\frac{1}{2}} \vw+t_2\vg_{22}^\top \mGm^{\frac{1}{2}} \vw$ without changing the statistics of our formulation, where $\vg_{21}$ and $\vg_{22}$ are two independent Gaussian vectors. Then, one can see that the cost function of \eqref{ana_fm6_nxt2} is strongly concave in the variables $t_1$ and $t_2$ where $-1$ is a strong concavity parameter.
\end{proof}
Lemma \ref{convty_vn2} provides important convexity properties of the optimization problem formulated in \eqref{ana_fm6_nxt2}. These properties are essential to prove the equivalence between \eqref{ana_fm6_nxt} and \eqref{ana_fm6_nxt2} as stated in the following lemma.
\begin{lemma}[High-dimensional Equivalence I]
\label{lem_eqv1}
Define $\widehat{\mathcal{S}}^{\star}_{p,1}$ and $\widehat{\mathcal{S}}^{\star}_{p,2}$ as the sets of optimal solutions of the minimization problems in \eqref{ana_fm6_nxt} and \eqref{ana_fm6_nxt2}, respectively. Moreover, let $\widehat{O}^{\star}_{p,1}$ and $\widehat{O}^{\star}_{p,2}$ be the optimal objective values of the optimization problems in \eqref{ana_fm6_nxt} and \eqref{ana_fm6_nxt2}, respectively. Then, the following convergence in probability holds
\begin{equation}
\abs{ \widehat{O}^{\star}_{p,1}-\widehat{O}^{\star}_{p,2} } \overset{p\to +\infty}{\longrightarrow} 0,~\text{and}~\mathbb{D}( \widehat{\mathcal{S}}^{\star}_{p,1},\widehat{\mathcal{S}}^{\star}_{p,2} )  \overset{p\to +\infty}{\longrightarrow} 0,
\end{equation}
where $\mathbb{D}( \mathcal{A},\mathcal{B} )$ denotes the deviation between the sets $\mathcal{A}$ and $\mathcal{B}$ and is defined as $\mathbb{D}( \mathcal{A},\mathcal{B} )=\sup_{\vx_1\in\mathcal{A}} \inf_{\vx_2\in\mathcal{B}} \norm{\vx_1-\vx_2}_2$.
\end{lemma}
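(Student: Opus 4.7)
The plan is to establish that the cost functions of \eqref{ana_fm6_nxt} and \eqref{ana_fm6_nxt2} differ uniformly in probability over the compact feasibility set, and then to use the strong convexity/concavity furnished by Lemma \ref{convty_vn2} to translate this into convergence of optimal values and of the minimizer sets.

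The first observation I would make is that \eqref{ana_fm6_nxt} and \eqref{ana_fm6_nxt2} differ only by replacing each norm $\norm{\vec{a}_1+\vec{a}_2+\vec{a}_3}$ appearing in the $t_1/(\ell n)$-term (and the analogous two-summand norm in the $t_2/(\ell n)$-term) by the square root of the sum of the individual squared norms. Using the elementary bound $\abs{A-B}\le\sqrt{\abs{A^2-B^2}}$ valid for $A,B\ge 0$, this discrepancy is controlled by twice the sum of the cross products $\vec{a}_i^\top\vec{a}_j$, $i<j$. With $\vec{a}_1=\sqrt{\ell}\norm{\mSigma^{1/2}\vw}\vh_1$, $\vec{a}_2=\norm{\mGm^{1/2}\vw}\widehat{\vh}_2$, and $\vec{a}_3=-\mV_1^\top\widehat{\vy}+\mu_0\vartheta\mV_1^\top\vec{1}_{\ell n}+\widetilde{\mu}_1 T_{p,1}q\mV_1^\top\widehat{\vs}$, each cross term takes the form $\vh^\top\vec{c}$ with $\vh\in\{\vh_1,\widehat{\vh}_2\}$ independent of $\vec{c}$; the same structure applies to the $t_2$-term where $\widetilde{\vh}_2$ plays the role of $\vh$.

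Next I would show that every such cross term is $o_p(n)$ uniformly in $(\vw,t_1,t_2)$. The pointwise argument is Gaussian conditioning: given $\vec{c}$ independent of $\vh$, $\vh^\top\vec{c}\sim\mathcal{N}(0,\norm{\vec{c}}^2)$, and $\norm{\vec{c}}^2/n=O_p(1)$ by Assumption \ref{itm:ass_F}(b) together with Lemmas \ref{lem_comw} and \ref{lem_comu}. To upgrade this to uniform control I would exploit that the prefactors $\norm{\mSigma^{1/2}\vw}$, $\norm{\mGm^{1/2}\vw}$, $q=\bar{\vv}^\top\vw$, $t_1$ and $t_2$ depend on $(\vw,t_1,t_2)$ in a deterministically Lipschitz manner on the compact feasibility set, so pointwise convergence in probability can be promoted to uniform convergence via a standard $\epsilon$-net argument. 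Writing $\widehat{f}_{p,i}$ for the cost of problem $i$, this yields
\begin{equation}\nonumber
\sup_{(\vw,t_1,t_2)}\,\abs{\widehat{f}_{p,1}(\vw,t_1,t_2)-\widehat{f}_{p,2}(\vw,t_1,t_2)}\,\rP\,0,
\end{equation}
and in particular $\abs{\widehat{O}^\star_{p,1}-\widehat{O}^\star_{p,2}}\rP 0$.

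Finally, to convert value convergence into set convergence of minimizers, let $F_{p,i}(\vw)=\max_{t_1,t_2}\widehat{f}_{p,i}(\vw,t_1,t_2)$, where the inner maximization can be carried out first by Sion's theorem thanks to the convex-concave structure of Lemma \ref{convty_vn2}; that same lemma furthermore gives $\lambda$-strong convexity of $F_{p,2}$. For any $\vw_1\in\widehat{\mathcal{S}}^\star_{p,1}$,
\begin{equation}\nonumber
\tfrac{\lambda}{2}\,\norm{\vw_1-\widehat{\vw}^\star_{p,2}}^2\;\le\;F_{p,2}(\vw_1)-\widehat{O}^\star_{p,2}\;\le\;2\sup_{(\vw,t_1,t_2)}\abs{\widehat{f}_{p,1}-\widehat{f}_{p,2}}+\abs{\widehat{O}^\star_{p,1}-\widehat{O}^\star_{p,2}},
\end{equation}
whose right-hand side is $o_p(1)$, yielding $\mathbb{D}(\widehat{\mathcal{S}}^\star_{p,1},\widehat{\mathcal{S}}^\star_{p,2})\rP 0$. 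The main technical obstacle will be the uniform-in-$(\vw,t_1,t_2)$ cross-term control of the second step: the pointwise Gaussian-conditioning bound is routine, but promoting it to a uniform bound over the compact feasibility set requires an $\epsilon$-net combined with the deterministic Lipschitz bounds on the prefactors, in particular on $q=\bar{\vv}^\top\vw$, which is the sole $\vw$-dependence entering $\vec{a}_3$.
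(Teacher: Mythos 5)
Your proposal is correct and follows essentially the same route as the paper's proof: the same square-root inequality reduces the cost-function gap to the Gaussian cross terms, the same sup-inequalities give convergence of the optimal values, and the same $\lambda$-strong convexity of the reduced objective (from Lemma \ref{convty_vn2}) gives convergence of the minimizer sets (you via the quantitative strong-convexity bound, the paper via a contradiction argument — equivalent). The only simplification you miss is that no $\epsilon$-net is needed for uniformity: the $(\vw,t_1,t_2)$-dependence of each cross term factors out as a bounded deterministic prefactor multiplying a fixed random inner product that is $o_p(n)$, so the supremum over the compact feasibility set is controlled directly.
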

The detailed proof of Lemma \ref{lem_eqv1} is deferred to Appendix \ref{lem_eqv1_pf}. Lemma \ref{lem_eqv1} particularly shows that the optimization problems in \eqref{ana_fm6_nxt} and \eqref{ana_fm6_nxt2} are asymptotically equivalent. Then, it suffices to precisely analyze the formulation in \eqref{ana_fm6_nxt2}. To solve over the primal vector $\vw$, we introduce two independent scalar optimization variables $\tau_1$ and $\tau_2$ where they both solve optimization problems of the following form
\begin{align}\label{inf_idty}
\sqrt{x}=\inf_{\tau>0}~\frac{\tau}{2}+\frac{x}{2\tau},~\text{for any}~x\geq 0.
\end{align}
Here, note that the optimal solution of the problem in \eqref{inf_idty} can be expressed as $\tau^\star=\sqrt{x}$. Next, we use the identity in \eqref{inf_idty} to transform the non-smooth square roots in the cost function of the formulation given in \eqref{ana_fm6_nxt2} to smooth terms. This is an essential step to solve over the primal vector $\vw$. Specifically, based on  the result in Lemma \ref{lem_comw}, our multivariate AO formulation given in \eqref{ana_fm6_nxt2} can be expressed as follows
\begin{equation}
\begin{aligned}\label{ana_fm7}
&\min_{\substack{\norm{\vw}\leq C_w }} \max_{\substack{0 \leq {t_1} \leq C_{t_1}\\ 0 \leq {t_2} \leq C_{t_2}}} \inf_{\substack{\tau_1 > 0\\ \tau_2 > 0}}~ \frac{\sqrt{\ell} t_1}{\sqrt{n}\ell} \vg_1^\top \mSigma^{\frac{1}{2}} \vw + \frac{\sqrt{t_1^2+t_2^2}}{\ell \sqrt{n}}  \vg_2^\top \mGm^{\frac{1}{2}} \vw +\frac{\tau_1 t_1}{2 \ell}+ \frac{\tau_2 t_2}{2 \ell}-\frac{t_1^2+t_2^2}{2\ell}+ \tfrac{\lambda}{2} \norm{\vw}^2 \\
& + \frac{t_1 \norm{\vh_1}^2}{2 \tau_1 n} \norm{\mSigma^{\frac{1}{2}} \vw}^2 + \frac{\zeta_{p,t,\tau}}{2 n}   \norm{\mGm^{\frac{1}{2}} \vw}^2 +\frac{t_1}{2 \tau_1 \ell n} \norm{ \mu_0 \vartheta \mV_1^\top \vec{1}_{\ell n} - \mV_1^\top \widehat{\vy} + \widetilde{\mu}_1 T_{p,1} q  \mV_1^\top \widehat{\vs}  }^2  \\
&+\frac{t_2}{2 \tau_2 \ell n} \norm{\mu_0 \vartheta \mV_2^\top \vec{1}_{\ell n}  - \mV_2^\top \widehat{\vy} + \widetilde{\mu}_1 T_{p,1} q   \mV_2^\top \widehat{\vs}  }^2,
\end{aligned} 
\end{equation}
where $\zeta_{p,t,\tau}={t_1  \norm{\widehat{\vh}_2}^2}/{(\ell \tau_1)} + {t_2 \norm{\widetilde{\vh}_2}^2}/{(\ell \tau_2)} $. Here, we also perform the change of variable $t_{1,\text{new}}=t_1/\sqrt{n}$ and $t_{2,\text{new}}=t_2/\sqrt{n}$, then, replace $t_{1,\text{new}}$ and $t_{2,\text{new}}$ by $t_{1}$ and $t_{2}$. Note that the feasibility sets of the optimization variables $\tau_1$ and $\tau_2$ are open unbounded sets. To simplify the analysis, we show that the feasibility sets of the variables $\tau_1$ and $\tau_2$ can be restricted to compact sets with probability going to $1$ as $p$ grows to $+\infty$ as stated in the following lemma.
\begin{lemma}[Additional Compactness]
\label{lembtau}
There exist positive constants independent of $p$, $c_{\tau_1}>0$, $C_{\tau_1}>0$, $c_{\tau_2}>0$ and $C_{\tau_2}>0$, such that the following convergence in probability holds
\begin{align}
\mathbb{P}(c_{\tau_1} \leq \widehat{\tau}_1 \leq C_{\tau_1}) \xrightarrow{p \to \infty} 1,~\mathbb{P}(c_{\tau_2} \leq \widehat{\tau}_2 \leq C_{\tau_2}) \xrightarrow{p \to \infty} 1,
\end{align}
where $\widehat{\tau}_1$ and $\widehat{\tau}_2$ are the optimal solutions of the formulation in \eqref{ana_fm7}. 
\end{lemma}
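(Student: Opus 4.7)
The plan is to extract explicit expressions for $\widehat{\tau}_1,\widehat{\tau}_2$ from the stationarity of the inner infimum in \eqref{ana_fm7} and then control them via Gaussian concentration together with the spectral bounds of Assumption \ref{itm:ass_F}. Since the two inner infima over $\tau_1,\tau_2>0$ come from applying the identity $\sqrt{x}=\inf_{\tau>0}\{\tau/2+x/(2\tau)\}$ to the two square roots in \eqref{ana_fm6_nxt2}, setting the $\tau$-derivatives of \eqref{ana_fm7} to zero gives
\[
\widehat{\tau}_1^{\,2} = \ell\,\frac{\|\vh_1\|^2}{n}\|\mSigma^{1/2}\widehat{\vw}\|^2+\frac{\|\widehat{\vh}_2\|^2}{n}\|\mGm^{1/2}\widehat{\vw}\|^2+\frac{R_{1,p}^2}{n},\qquad \widehat{\tau}_2^{\,2}=\frac{\|\widetilde{\vh}_2\|^2}{n}\|\mGm^{1/2}\widehat{\vw}\|^2+\frac{R_{2,p}^2}{n},
\]
where $R_{1,p}$ and $R_{2,p}$ denote the $\mV_1^\top$ and $\mV_2^\top$ residual norms appearing in \eqref{ana_fm6_nxt2}; crucially, these expressions depend on the saddle point only through $\widehat{\vw}$ (the $t_j$ dependence drops out).

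Before bounding, I would exploit the observation that $\widehat{\vy}$, $\widehat{\vs}$ and $\vec{1}_{\ell n}$ are all $\ell$-fold concatenations of $n$-vectors and therefore lie in the range of $\mM^\top$. By the SVD structure $\mS=[\sqrt{\ell}\,\mI_n\mid \vec{0}]$, this range coincides with the column span of $\mV_1$, so $\mV_2^\top$ annihilates each of these vectors, giving $R_{2,p}=0$, while $\mV_1^\top$ acts on them as $\sqrt{\ell}\,\mU^\top$ applied to the underlying $n$-vector, yielding $R_{1,p}^2=\ell\,\|\mu_0\vartheta\vec{1}_n+\widetilde{\mu}_1 T_{p,1} q\,\vs-\vy\|^2$. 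The upper bounds then follow routinely: Assumption \ref{itm:ass_F}(b) caps $\|\mSigma\|$ and $\|\mGm\|$ by constants, Lemma \ref{lem_comw} gives $\|\widehat{\vw}\|\leq C_w$ and hence $|q|\leq C_w$, Assumption \ref{itm:fun_fwf}(a) yields $\|\vy\|^2/n\to \mathbb{E}[\varphi^2(z)]<\infty$, and Gaussian norm concentration provides $\|\vh_1\|^2/n,\|\widehat{\vh}_2\|^2/n,\|\widetilde{\vh}_2\|^2/n\to 1,\,1,\,\ell-1$ and $\|\vs\|^2/n\to 1$. Assembling these produces constants $C_{\tau_1},C_{\tau_2}$ independent of $p$.

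The main obstacle is the lower bounds. My plan is to first show that the optimal primal $\widehat{\vw}$ stays bounded away from zero, $\|\widehat{\vw}\|\geq c_w>0$ with probability tending to one, and then invoke the matrix inequalities $\mSigma\succeq \mu_2^2\mI_k$ and $\mGm\succeq \mu_3^2\mI_k$ to convert this into $\widehat{\tau}_1\geq c_{\tau_1}$ and $\widehat{\tau}_2\geq c_{\tau_2}$. The lower bound on $\|\widehat{\vw}\|$ would be obtained by contradiction: if $\|\widehat{\vw}\|\to 0$ along a subsequence then $q\to 0$, and the cost of \eqref{ana_fm7} is dominated asymptotically by $R_{1,p}^2/n\to \ell\,\mathbb{E}[(\mu_0\vartheta-\varphi(z))^2]\geq \ell\,\operatorname{Var}[\varphi(z)]>0$, where strict positivity uses that $\mathbb{E}[z\varphi(z)]\neq 0$ in Assumption \ref{itm:fun_fwf}(a) rules out a constant $\varphi$; however a first-order perturbation of $\vw$ in the teacher direction strictly decreases the limiting cost by the same non-vanishing inner product, contradicting optimality of $\widehat{\vw}$. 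The fully degenerate case $\mu_2=\mu_3=0$ is the most delicate step: there the PSD lower bounds collapse and one must additionally invoke Assumption \ref{itm:ass_F}(c), $\mathbb{E}_\kappa[\kappa]>0$, to preserve a positive lower bound on $\|\mF\widehat{\vw}\|$ at the saddle point, in the spirit of the spectral arguments used in \cite{dhifallah2020}.
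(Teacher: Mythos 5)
Your proposal is correct in substance and shares the paper's two main structural ingredients: the identification of the optimal $\tau$'s through the stationarity of the identity $\sqrt{x}=\inf_{\tau>0}\{\tau/2+x/(2\tau)\}$ (so that $\widehat{\tau}_j^2$ equals the corresponding squared residual), and the reduction of both the upper and the lower bounds to showing $c_w\le\norm{\widehat{\vw}}\le C_w$ plus spectral control of $\mSigma$ and $\mGm$ and Gaussian norm concentration. Where you genuinely diverge is in how the crucial lower bound $\norm{\widehat{\vw}}\ge c_w>0$ is obtained. The paper works at the level of the primary (ridge-regression) formulation \eqref{ana_fm2}: it writes the minimizer in closed form as $\widehat{\vw}_p=[\tfrac{1}{n\ell}\mK^\top\mK+\lambda\mI_k]^{-1}(\tfrac{1}{n\ell}\mK^\top\bar{\vy})$, upper-bounds $\norm{\mK}/\sqrt{n}$ to control the smallest eigenvalue of the resolvent, and lower-bounds $\norm{\mK^\top\bar{\vy}}/(n\ell)$ through the deterministic cross term $\widetilde{\mu}_1^2(\bar{\vs}^\top\bar{\vy})^2\,\vxi^\top\mF\mF^\top\vxi\to\ell^2 n^2\widetilde{\mu}_1^2\,\mathbb{E}[z\varphi(z)]^2\,\delta\,\mathbb{E}_\kappa[\kappa]>0$, after which the constraint $\norm{\vw}\ge c_w$ is transferred to the AO via the CGMT. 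You instead argue variationally at the AO: if $\widehat{\vw}\to 0$ then $q\to 0$ and a first-order perturbation in the teacher direction strictly decreases the limiting cost because $\widetilde{\mu}_1 T_1\gamma_2\neq 0$. Both routes rest on exactly the same non-degeneracy inputs ($\mathbb{E}[z\varphi(z)]\neq 0$, $\widetilde{\mu}_1\neq 0$, $\mathbb{E}_\kappa[\kappa]>0$); the paper's closed-form route is more mechanical and gives quantitative constants for free, while your perturbation argument is shorter and does not require invertibility in closed form, though as written it still needs the routine verification that the stochastic terms $\vg_1^\top\mSigma^{1/2}\bar{\vv}/\sqrt{n}$ vanish and that the first-order gain dominates the $O(q^2)$ corrections. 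Your explicit treatment of $\mV_2^\top$ annihilating the concatenated vectors (so $R_{2,p}=0$) and your flagging of the degenerate case $\mu_2=\mu_3=0$ for the lower bound on $\widehat{\tau}_2$ are both accurate; the paper makes the first point elsewhere (when dropping the last term of \eqref{ana_fm8}) and glosses over the second, so your caveat is a genuine refinement rather than a gap.
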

The detailed proof of Lemma \ref{lembtau} is provided in Appendix \ref{lembtau_pf}. Based on Lemmas \ref{convty_vn2} and \ref{lembtau}, the optimization problem given in \eqref{ana_fm7} is asymptotically equivalent to the following problem 
\begin{equation}
\begin{aligned}\label{ana_fm7p}
&\min_{\substack{\norm{\vw}\leq C_w }} \max_{\substack{0 \leq {t_1} \leq C_{t_1}\\ 0 \leq {t_2} \leq C_{t_2}}} \min_{\substack{c_{\tau_1} \leq {\tau_1} \leq C_{\tau_1}\\ c_{\tau_2} \leq {\tau_2} \leq C_{\tau_2}}}~ \frac{\sqrt{\ell} t_1}{\sqrt{n}\ell} \vg_1^\top \mSigma^{\frac{1}{2}} \vw + \frac{\sqrt{t_1^2+t_2^2}}{\ell \sqrt{n}}  \vg_2^\top \mGm^{\frac{1}{2}} \vw +\frac{\tau_1 t_1}{2 \ell}+ \frac{\tau_2 t_2}{2 \ell}-\frac{t_1^2+t_2^2}{2\ell}+ \tfrac{\lambda}{2} \norm{\vw}^2 \\
& + \frac{t_1 \norm{\vh_1}^2}{2 \tau_1 n} \norm{\mSigma^{\frac{1}{2}} \vw}^2 + \frac{\zeta_{p,t,\tau}}{2 n}   \norm{\mGm^{\frac{1}{2}} \vw}^2 +\frac{t_1}{2 \tau_1 \ell n} \norm{ \mu_0 \vartheta \mV_1^\top \vec{1}_{\ell n} - \mV_1^\top \widehat{\vy} + \widetilde{\mu}_1 T_{p,1} q  \mV_1^\top \widehat{\vs}  }^2  \\
&+\frac{t_2}{2 \tau_2 \ell n} \norm{\mu_0 \vartheta \mV_2^\top \vec{1}_{\ell n}  - \mV_2^\top \widehat{\vy} + \widetilde{\mu}_1 T_{p,1} q   \mV_2^\top \widehat{\vs}  }^2.
\end{aligned} 
\end{equation}
Now, we are ready to simplify the formulation in \eqref{ana_fm7p} over the optimization vector $\vw$. We start our analysis by decomposing the optimization vector $\vw\in\mathbb{R}^k$ as follows
\begin{align}\label{decp_ao}
\vw= q \bar{\vv} + \mB^{\perp}_{\vv} \vr,
\end{align}
where $\vr\in\mathbb{R}^{k-1}$ and $\mB^{\perp}_{\vv}\in\mathbb{R}^{k\times(k-1)}$ is formed by an orthonormal basis orthogonal to the vector $\vv\in\mathbb{R}^k$. Based on the result in Lemma \ref{lem_comw}, one can equivalently formulate the problem in \eqref{ana_fm7p} as follows
\begin{equation}
\begin{aligned}\label{ana_fm8}
&\min_{\substack{\abs{q} \leq C_q \\ \norm{\vr} \leq C_r }} \max_{\substack{0 \leq {t_1} \leq C_{t_1}\\ 0 \leq {t_2} \leq C_{t_2}}} \min_{\substack{c_{\tau_1} \leq {\tau_1} \leq C_{\tau_1}\\ c_{\tau_2} \leq {\tau_2} \leq C_{\tau_2}}} ~ \frac{\sqrt{\ell} t_1}{\sqrt{n}\ell} \vg_1^\top \mSigma^{\frac{1}{2}} \mB^{\perp}_{\vv} \vr + \frac{\sqrt{t_1^2+t_2^2}}{\ell \sqrt{n}}  \vg_2^\top \mGm^{\frac{1}{2}} \mB^{\perp}_{\vv} \vr +\frac{\tau_1 t_1}{2 \ell}+ \frac{\tau_2 t_2}{2 \ell}-\frac{t_1^2+t_2^2}{2\ell} \\
& + \frac{t_1 \norm{\vh_1}^2}{2 \tau_1 n} \norm{\mSigma^{\frac{1}{2}} \mB^{\perp}_{\vv} \vr + q \mSigma^{\frac{1}{2}} \bar{\vv}}^2 + \frac{\zeta_{p,t,\tau}}{2 n}   \norm{\mGm^{\frac{1}{2}} \mB^{\perp}_{\vv} \vr + q \mGm^{\frac{1}{2}} \bar{\vv}}^2 + \tfrac{\lambda}{2} (q^2+\norm{\vr}^2)  \\
&+\frac{t_1}{2 \tau_1 \ell n}\norm{ \mu_0 \vartheta \mV_1^\top \vec{1}_{\ell n} - \mV_1^\top \widehat{\vy} + \widetilde{\mu}_1 T_{p,1} q   \mV_1^\top \widehat{\vs}  }^2 +\frac{t_2}{2 \tau_2 \ell n}\norm{\mu_0 \vartheta \mV_2^\top \vec{1}_{\ell n}  - \mV_2^\top \widehat{\vy} + \widetilde{\mu}_1 T_{p,1} q   \mV_2^\top \widehat{\vs}  }^2,
\end{aligned} 
\end{equation}
where $C_q>0$ and $C_r>0$ are two positive constants selected to satisfy the asymptotic result in Lemma \ref{lem_comw}. Here, we also drop terms that converge in probability to zero. One way to justify this step is using similar analysis as in Lemma \ref{lem_eqv1}. Note that the convexity results in Lemma \ref{convty_vn2} are still satisfied by the formulation in \eqref{ana_fm8}. Specifically, the cost function in \eqref{ana_fm8} is jointly strongly convex in the minimization variables and jointly strongly concave in the maximization variables.

Now, it remains to solve over the optimization vector $\vr\in\mathbb{R}^{k-1}$. To solve over $\vr$, we interchange the minimization over $\vr$ and the maximization over $t_1$ and $t_2$. This step is justified using the result in \cite{sion1958}. The cost function of the optimization vector $\vr$ can then be expressed as follows
\begin{equation}
\begin{aligned}\label{fung}
g(\vr)&=\frac{\sqrt{\ell} t_1}{\sqrt{n}\ell} \vg_1^\top \mSigma^{\frac{1}{2}} \mB^{\perp}_{\vv} \vr + \frac{\sqrt{t_1^2+t_2^2}}{\ell \sqrt{n}}  \vg_2^\top \mGm^{\frac{1}{2}} \mB^{\perp}_{\vv} \vr + \frac{t_1 \norm{\vh_1}^2}{2 \tau_1 n} \norm{\mSigma^{\frac{1}{2}} \mB^{\perp}_{\vv} \vr + q \mSigma^{\frac{1}{2}} \bar{\vv}}^2 + \tfrac{\lambda}{2} \norm{\vr}^2 \\
&+ \frac{\zeta_{p,t,\tau}}{2 n}   \norm{\mGm^{\frac{1}{2}} \mB^{\perp}_{\vv} \vr + q \mGm^{\frac{1}{2}} \bar{\vv}}^2, 
\end{aligned}
\end{equation}
where we ignore the terms independent of $\vr$. Note that the function $g(\cdot)$ is convex and smooth. Before solving the minimization problem of the function $g(\cdot)$, we define the matrix $\mG\in\mathbb{R}^{k\times k}$ and the vectors $\vf\in\mathbb{R}^{k-1}$ and $\vz\in\mathbb{R}^{k-1}$  as follows
\begin{align}
\begin{cases}
\mG=\frac{t_1 \norm{\vh_1}^2}{\tau_1 n} \mSigma + \frac{\zeta_{p,t,\tau}}{ n} \mGm\\
\vf=\frac{t_1 \norm{\vh_1}^2}{\tau_1 n} \bar{\mB}^{\perp}_{\vv} \mSigma \bar{\vv} + \frac{\zeta_{p,t,\tau}}{ n} \bar{\mB}^{\perp}_{\vv} \mGm \bar{\vv} \\
\vz=\frac{\sqrt{\ell} t_1}{\sqrt{n}\ell}  \bar{\mB}^{\perp}_{\vv}\mSigma^{\frac{1}{2}} \vg_1 + \frac{\sqrt{t_1^2+t_2^2}}{\ell \sqrt{n}}  \bar{\mB}^{\perp}_{\vv} \mGm^{\frac{1}{2}}  \vg_2,
\end{cases}
\end{align}
where $\bar{\mB}^{\perp}_{\vv}=({\mB}^{\perp}_{\vv})^\top$.
After computing the derivative of the function $g(\cdot)$ and setting it to zero, the optimal solution of the unconstrained version of minimizing the function $g(\cdot)$ can be expressed as follows
\begin{align}\label{ropt}
\widetilde{\vr}^\star=- \left[ \bar{\mB}^{\perp}_{\vv} \mG \mB^{\perp}_{\vv} + \lambda \mI_{k-1} \right]^{-1} \left[ q \vf + \vz \right].
\end{align}
Similar to the analysis in Lemmas \ref{lem_comw}, \ref{lem_comu} and \ref{lembtau}, one can show that the norm of the optimal vector $\widetilde{\vr}^\star$ is bounded. This means that $\widetilde{\vr}^\star$ is an optimal solution of the formulation in \eqref{ana_fm8}. Then, the optimal loss function can be expressed as follows
\begin{equation}
\begin{aligned}
g^\star&=-\frac{q^2}{2} \vf^\top \left[ \bar{\mB}^{\perp}_{\vv} \mG \mB^{\perp}_{\vv} + \lambda \mI_{k-1} \right]^{-1} \vf + \frac{q^2}{2} \bar{\vv}^\top \mG \bar{\vv} \\
&- \frac{1}{2} \vz^\top \left[ \bar{\mB}^{\perp}_{\vv} \mG \mB^{\perp}_{\vv} + \lambda \mI_{k-1} \right]^{-1} \vz.
\end{aligned}
\end{equation}
Based on the SVD decomposition of the matrix $\mM$, it can be checked that the last term in the multivariate AO formulation given in \eqref{ana_fm8} is zero. Then, the formulation given in \eqref{ana_fm8} can be expressed as follows
\begin{equation}
\begin{aligned}\label{ana_fm9}
\min_{\substack{\abs{q} \leq C_q }} \max_{\substack{0 \leq {t_1} \leq C_{t_1}\\ 0 \leq {t_2} \leq C_{t_2}}} \min_{\substack{c_{\tau_1} \leq {\tau_1} \leq C_{\tau_1}\\ c_{\tau_2} \leq {\tau_2} \leq C_{\tau_2}}}& \frac{\tau_1 t_1}{2 \ell}-\frac{t_1^2+t_2^2}{2\ell}  + \frac{\tau_2 t_2}{2 \ell}  + \frac{\lambda+V_{p,2}(\vt,\vtau)-V_{p,3}(\vt,\vtau)}{2} q^2  \\
&+\frac{t_1}{2 \tau_1 n} \norm{ \mu_0 \vartheta  \vec{1}_{n} -  {\vy} + \widetilde{\mu}_1 T_{p,1} q   {\vs}  }^2  - \frac{1}{2} V_{p,4}(\vt,\vtau),
\end{aligned} 
\end{equation}
where $\vt=[t_1,t_2]^\top$ and $\vtau=[\tau_1,\tau_2]^\top$. Here, $\vec{1}_{n}$ denotes the vector of all one with size $n$ and the functions $V_{p,2}(\cdot,\cdot)$, $V_{p,3}(\cdot,\cdot)$ and $V_{p,4}(\cdot,\cdot)$ depend on the optimization variables and are given by
\begin{align}\label{Vs}
\begin{cases}
V_{p,2}(\vt,\vtau)=\bar{\vv}^\top \mG \bar{\vv}\\
V_{p,3}(\vt,\vtau)=\vf^\top \left[ \bar{\mB}^{\perp}_{\vv} \mG \mB^{\perp}_{\vv} + \lambda \mI_{k-1} \right]^{-1} \vf\\
V_{p,4}(\vt,\vtau)=\vz^\top \left[ \bar{\mB}^{\perp}_{\vv} \mG \mB^{\perp}_{\vv} + \lambda \mI_{k-1} \right]^{-1} \vz.
\end{cases}
\end{align}
Note that we simplified the multivariate AO formulation given in \eqref{ana_fm5} to a scalar optimization problem as given in \eqref{ana_fm9}. Then, it remains to study the asymptotic properties of the scalar formulation in \eqref{ana_fm9}. We refer to this problem as the \textit{scalar formulation}.
\subsubsection{Asymptotic Analysis of the Scalar Formulation} \label{asy_ana_ao}
In this part, we study the asymptotic properties of the scalar formulation in \eqref{ana_fm9} corresponding to the multivariate AO problem.
Based on Assumption \ref{itm:ass_F} and the result in \cite[Proposition 3]{Debbah}, the random variable ${T}_{p,1}$ converges pointwisely in probability to the scalar $T_1$ defined as follows
\begin{align}
T_{p,1}=\sqrt{ {\vxi}^\top \mF \mF^\top {\vxi} }  \xrightarrow{~p \to \infty~} T_1= \sqrt{ \delta \mathbb{E}_\kappa[\kappa] },
\end{align}
where the expectations are over the probability distribution $\mathbb{P}_\kappa(\cdot)$ defined in Assumption \ref{itm:ass_F}. Furthermore, the random function $V_{p,2}(\cdot,\cdot)$ can be expressed as follows
\begin{equation}
\begin{aligned}
V_{p,2}(\vt,\vtau)&=\frac{1}{T_{p,1}^2} {\vxi}^\top \mF \Big( \frac{t_1 \norm{\vh_1}^2}{\tau_1 n} \mSigma + \frac{\zeta_{p,t,\tau}}{ n} \mGm \Big) \mF^\top  {\vxi} \\
&=\frac{1}{T_{p,1}^2} {\vxi}^\top \mF \Big[ \Big( \frac{t_1 \norm{\vh_1}^2 \widetilde{\mu}_1^2}{\tau_1 n} + \frac{\zeta_{p,t,\tau} \widehat{\mu}_1^2 }{n} \Big) \mF^\top \mF - \frac{t_1 \norm{\vh_1}^2 \widetilde{\mu}_1^2}{\tau_1 n}   \\
&\times \mF^\top {\vxi} {\vxi}^\top \mF + \Big( \frac{t_1 \norm{\vh_1}^2 {\mu}_2^2}{\tau_1 n} + \frac{\zeta_{p,t,\tau} {\mu}_3^2 }{n}  \Big) \mI_k  \Big] \mF^\top {\vxi},
\end{aligned}
\end{equation}
where $\vt=[t_1,t_2]^\top$ and $\vtau=[\tau_1,\tau_2]^\top$.
Then, using the theoretical results in \cite{Debbah} and based on Assumption \ref{itm:ass_F}, the random function $V_{p,2}(\cdot,\cdot)$ converges pointwisely in probability as follows
\begin{equation}
\begin{aligned}
V_{p,2}(\vt,\vtau) \xrightarrow{~p \to +\infty~} V_2(\vt,\vtau)&=\frac{\delta}{T_{1}^2} \Big( \frac{t_1  \widetilde{\mu}_1^2}{\tau_1} + \zeta_{t,\tau} \widehat{\mu}_1^2  \Big) \mathbb{E}_\kappa[\kappa^2] - \delta^2 \frac{t_1 \widetilde{\mu}_1^2}{\tau_1} \mathbb{E}_\kappa[\kappa]^2  \\
& + \Big( \frac{t_1 {\mu}_2^2}{\tau_1} + \zeta_{t,\tau} {\mu}_3^2   \Big) \delta \mathbb{E}_\kappa[\kappa],
\end{aligned}
\end{equation}
where $\zeta_{t,\tau}={t_1}/{(\ell \tau_1)} + {t_2 (\ell-1)}/{(\ell \tau_2)}$ and the expectations are over the probability distribution $\mathbb{P}_\kappa(\cdot)$ defined in Assumption \ref{itm:ass_F}. The theoretical results in \cite[Proposition 3]{Debbah} also show that the random function $V_{p,4}(\cdot,\cdot)$ satisfies the following asymptotic result
\begin{align}
V_{p,4}(\vt,\vtau)-\widehat{V}_{p,4}(\vt,\vtau) \xrightarrow{~p \to + \infty~} 0,
\end{align}
where the random function $\widehat{V}_{p,4}(\cdot,\cdot)$ is defined as follows
\begin{equation}
\begin{aligned}
\widehat{V}_{p,4}(\vt,\vtau)&= \frac{t_1^2}{{n}\ell} \text{Tr}\Big[ \widehat{\mSigma}^{\frac{1}{2}} \left[ \widehat{\mG} + \lambda \mI_{k-1} \right]^{-1}  \widehat{\mSigma}^{\frac{1}{2}} \Big] \\
&+\frac{{t_1^2+t_2^2}}{\ell^2 {n}} \text{Tr}\Big[ \mGm^{\frac{1}{2}}  \left[  \widehat{\mG} + \lambda \mI_{k-1} \right]^{-1}  \mGm^{\frac{1}{2}} \Big].
\end{aligned}
\end{equation}
Here, $\text{Tr}[.]$ represents the trace. Additionally, the matrix $\widehat{\mSigma}$ is given as $\widehat{\mSigma}=\widetilde{\mu}_1^2 \mF^\top \mF + \mu_2^2 \mI_k$ and the matrix $\widehat{\mG}$ has the following expression  $\widehat{\mG}={t_1 \norm{\vh_1}^2}/{(\tau_1 n)} \widehat{\mSigma} + {\zeta_{p,t,\tau}}/{n} \mGm$. Using again \cite[Proposition 3]{Debbah} and Assumption \ref{itm:ass_F}, we can also see that the random function $V_{p,4}(\cdot,\cdot)$ converges in probability to the function $V_{4}(\cdot,\cdot)$ defined as follows
\begin{align}
V_{4}(\vt,\vtau)&=\frac{\eta t_1^2}{\ell} \mathbb{E}_\kappa \Big[  \frac{\widetilde{\mu}_1^2 \kappa + \mu_2^2}{\frac{t_1}{\tau_1} ( \widetilde{\mu}_1^2 \kappa + \mu_2^2 )+\zeta_{t,\tau} ( \widehat{\mu}_1^2 \kappa + \mu_3^2 ) + \lambda}  \Big] \nonumber\\
&+\frac{\eta{(t_1^2+t_2^2)}}{\ell^2} \mathbb{E}_\kappa \Big[  \frac{\widehat{\mu}_1^2 \kappa + \mu_3^2}{\frac{t_1}{\tau_1} ( \widetilde{\mu}_1^2 \kappa + \mu_2^2 )+\zeta_{t,\tau} ( \widehat{\mu}_1^2 \kappa + \mu_3^2 ) + \lambda}  \Big].\nonumber
\end{align}
Now, it remains to study the asymptotic properties of the random function $V_{p,3}(\cdot,\cdot)$. Based on the block matrix inversion lemma, it can be checked that the random function $V_{p,3}(\cdot,\cdot)$ satisfies the following
\begin{align}
V_{p,3}(\vt,\vtau)=V_{p,2}(\vt,\vtau)+\lambda-\frac{1}{T_{p,2}(\vt,\vtau)}.
\end{align}
Here, the random function $T_{p,2}(\cdot,\cdot)$ is defined as follows
\begin{align}
T_{p,2}(\vt,\vtau)=\bar{\vv}^\top \left[ \mG + \lambda \mI_{k} \right]^{-1} \bar{\vv}.
\end{align}
Using the matrix inversion lemma, it can be checked that the random function $T_{p,2}(\cdot,\cdot)$ converges in probability to the function $T_{2,\lambda}(\cdot,\cdot)$ defined as follows
\begin{align}
T_{2,\lambda}(\vt,\vtau)= \frac{\delta}{T_1^2} \mathbb{E}\Big[ \frac{\kappa}{g_{\kappa,\lambda}(t,\tau)} \Big] /\Big(1-\frac{\widetilde{\mu}_1^2 t_1 \delta}{ \tau_1} \mathbb{E}\Big[ \frac{\kappa}{g_{\kappa,\lambda}(t,\tau)} \Big] \Big),\nonumber
\end{align}
where the function $g_{\kappa,\lambda}(\cdot,\cdot)$ is defined in Section \ref{pr_analysis}.
Additionally, using the weak law of large numbers (WLLN), we have the following convergence property
\begin{align}
&\frac{1}{n} \norm{ \mu_0 \vartheta  \vec{1}_{n} -  {\vy} + \widetilde{\mu}_1 T_{p,1} q   {\vs}  }^2 \xrightarrow{~p\to +\infty~} \mu_0^2 \vartheta^2 + \gamma_1 +  \widetilde{\mu}_1^2 T_{1}^2 q^2-2\widetilde{\mu}_1 T_{1} q \gamma_2 -2\mu_0 \vartheta \gamma_3.
\end{align}
Here, $\gamma_1$, $\gamma_2$ and $\gamma_3$ depend on the data distribution and are defined as $\gamma_1=\mathbb{E}[y^2]$, $\gamma_2=\mathbb{E}[y s]$, $\gamma_3=\mathbb{E}[y]$, where $y=\varphi( s )$, and $s$ is a standard Gaussian random variable.
This proves that the cost function of the following deterministic problem
\begin{equation}
\begin{aligned}
\label{scprob1_pf}
\min_{\substack{\abs{q} \leq C_q }} \max_{\substack{0 \leq {t_1} \leq C_{t_1}\\ 0 \leq {t_2} \leq C_{t_2}}} \min_{\substack{c_{\tau_1} \leq {\tau_1} \leq C_{\tau_1}\\ c_{\tau_2} \leq {\tau_2} \leq C_{\tau_2}}}& \frac{\tau_1 t_1+\tau_2 t_2}{2 \ell}  - \frac{t_1^2+t_2^2}{2 \ell} +\frac{q^2}{2 T_{2,\lambda}(\vt,\vtau)}  -\frac{\eta T_{3,\lambda}(\vt,\vtau)}{2}  \\
&+\frac{t_1}{2\tau_1} \Big( \gamma_1 -  2 \widetilde{\mu}_1 T_1 q \gamma_2 + \widetilde{\mu}_1^2 T_1^2 q^2+ \mu_0^2 \vartheta^2 -2\mu_0 \vartheta \gamma_3 \Big),
\end{aligned}
\end{equation}
is the converging limit of the cost function the scalar formulation in \eqref{ana_fm9}, where the function  $T_{3,\lambda}(\cdot,\cdot)$ is given by $V_4(\vt,\vtau)/\eta$. Before continuing our analysis, we summarize convexity properties of the cost function of \eqref{scprob1_pf} in the following lemma.
\begin{lemma}[Strong--convexity of \eqref{scprob1_pf}]\label{convty_detpp}
Define $\widehat{f}$ as the cost function of the problem in \eqref{scprob1_pf} defined in the feasibility set. Then, $\widehat{f}$ is jointly strongly convex in the variables $(q,\vartheta,\tau_1,\tau_2)$ for fixed feasible $(t_1,t_2)$. Moreover, it is jointly strongly concave in the variables $(t_1,t_2)$ for fixed feasible $(q,\vartheta,\tau_1,\tau_2)$.
\end{lemma}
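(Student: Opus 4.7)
The plan is to prove Lemma~\ref{convty_detpp} by combining direct structural analysis of the explicit cost function $\widehat{f}$ in \eqref{scprob1_pf} with the inheritance of convexity/concavity from the finite-$p$ scalar formulation \eqref{ana_fm9}, of which $\widehat{f}$ is the pointwise-in-probability limit. Using the limit route sidesteps having to compute Hessians of the implicitly-defined $T_{2,\lambda}$ and $T_{3,\lambda}$ against the limiting spectral measure $\mathbb{P}_\kappa$.

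First, I would handle strong concavity in $(t_1,t_2)$ at fixed $(q,\vartheta,\tau_1,\tau_2)$. The quadratic $-(t_1^2+t_2^2)/(2\ell)$ in $\widehat{f}$ supplies a strong concavity modulus $1/\ell$. This modulus is uniform in $p$ because it is already present at the level of \eqref{ana_fm7p}, partial minimization over $\vw$ (equivalently over $(q,\vr)$ under the decomposition \eqref{decp_ao}) preserves concavity in $(t_1,t_2)$ and does not diminish the modulus, and pointwise limits of functions that are uniformly strongly concave with a fixed modulus inherit the same modulus. It remains to verify that the other contributions involving $(t_1,t_2)$ through $T_{2,\lambda}$ and $T_{3,\lambda}$ are concave in $(t_1,t_2)$ at fixed $(q,\vartheta,\tau_1,\tau_2)$, which the limit-from-finite-$p$ argument also delivers automatically.

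Next, for strong convexity in $(q,\vartheta,\tau_1,\tau_2)$ at fixed $(t_1,t_2)$, the positive-definite quadratic $(t_1/(2\tau_1))(\widetilde{\mu}_1^2 T_1^2 q^2+\mu_0^2\vartheta^2-2\widetilde{\mu}_1 T_1 q\gamma_2-2\mu_0\vartheta\gamma_3+\gamma_1)$, combined with $q^2/(2T_{2,\lambda})$, supplies strong convexity in $(q,\vartheta)$; Lemma~\ref{lembtau} ensures that $t_1/\tau_1$ is bounded below on the feasibility set, so the resulting modulus is strictly positive. For strict convexity in $(\tau_1,\tau_2)$, I would exploit the perspective structure introduced by the identity \eqref{inf_idty}, where $\tau/2+x/(2\tau)$ is strictly convex in $\tau$ for any $x>0$; one then checks that the effective $x$'s remain bounded away from zero on the compact feasibility set of Lemma~\ref{lembtau}, upgrading strict convexity to strong convexity with a modulus independent of $p$, which is preserved in the pointwise limit.

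The main obstacle is making rigorous the uniform-in-$p$ lower bound on the strong convexity modulus in the $(\tau_1,\tau_2)$ directions: one must track how partial minimization over $\vr$ interacts with the perspective structure introduced by \eqref{inf_idty}, and certify that the second derivatives in $\tau_1$ and $\tau_2$ of the minimized cost are bounded below uniformly on the compact set guaranteed by Lemma~\ref{lembtau}. Once this uniformity is in place, the convergence in probability of the cost function established in Section~\ref{asy_ana_ao} yields $\widehat{f}$ as the pointwise limit and transfers both block-convexity properties to it.
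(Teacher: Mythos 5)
Your approach is essentially the paper's: the entire published proof of Lemma~\ref{convty_detpp} consists of observing that the moduli in Lemma~\ref{convty_vn2} are independent of $p$ and that the subsequent operations (partial minimization over $\vr$, the introduction of $\tau_1,\tau_2$ through \eqref{inf_idty}, the change of variables, and the pointwise-in-probability limit) preserve the convexity and concavity properties, which is exactly your inheritance argument, including the observation that a fixed strong-concavity modulus survives both partial minimization and pointwise limits. What you add, and what the paper leaves implicit, is the direct structural verification in the $(\vartheta,\tau_1,\tau_2)$ directions: Lemma~\ref{convty_vn2} only certifies the modulus $\lambda$ in $\vw$ (hence in $q$), so joint strong convexity in $(q,\vartheta,\tau_1,\tau_2)$ genuinely requires your extra inputs, namely writing $\tfrac{t_1}{2\tau_1}\bigl(\gamma_1-2\widetilde{\mu}_1T_1q\gamma_2+\widetilde{\mu}_1^2T_1^2q^2+\mu_0^2\vartheta^2-2\mu_0\vartheta\gamma_3\bigr)=\tfrac{t_1}{2\tau_1}\bigl((\widetilde{\mu}_1T_1q-\gamma_2)^2+(\mu_0\vartheta-\gamma_3)^2+\gamma_1-\gamma_2^2-\gamma_3^2\bigr)$ as a sum of perspectives (jointly convex in $(q,\vartheta,\tau_1)$ since $\gamma_1\geq\gamma_2^2+\gamma_3^2$ by Bessel's inequality), and treating the $\tfrac{\tau}{2}+\tfrac{x}{2\tau}$ terms with $x$ bounded away from zero, for which the lower bound $\norm{\widehat{\vw}_p}\geq c_w$ established inside the proof of Lemma~\ref{lembtau} supplies the needed positivity of the effective $x$'s; the obstacle you flag about a uniform-in-$p$ curvature bound in $\tau_1,\tau_2$ after minimizing out $\vr$ is precisely the step the paper does not carry out. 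One caveat: the $\vartheta$-direction modulus is proportional to $t_1\mu_0^2/\tau_1$, and Lemma~\ref{lembtau} bounds $\tau_1,\tau_2$ but not $t_1$ from below, so the claimed joint strong convexity degenerates at the feasible boundary $t_1=0$ (and $\vartheta$ disappears altogether when $\mu_0=0$); this limitation is shared by the paper's own statement and is harmless because the relevant saddle point has $t_1^\star>0$, but you should not attribute the lower bound on $t_1/\tau_1$ to Lemma~\ref{lembtau} alone.
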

This result can be proved by observing that the strong convexity parameters in Lemma \ref{convty_vn2} are independent of $p$ and that the operations performed after Lemma \ref{convty_vn2} preserve the convexity properties. Another property of the scalar formulation is that its set of optimal solutions concentrates around the set of optimal solutions of the formulation in \eqref{scprob1_pf} as summarized in the following lemma.
\begin{lemma}[Consistency of the Scalar Formulation]\label{consi_detp}
Define $\tau_{p,1}^\star$, $\tau_{p,2}^\star$, $t_{p,1}^\star$, $t_{p,2}^\star$ and $q_p^\star$ as the optimal solutions of the scalar formulation given in \eqref{ana_fm9}. Additionally, define $\tau_1^\star$, $\tau_2^\star$, $t_1^\star$, $t_2^\star$ and $q^\star$ as the optimal solutions of the deterministic optimization problem given in \eqref{scprob1_pf}. Therefore, the following convergence in probability holds
\begin{equation}
\begin{aligned}\label{sop_conv}
&\tau_{p,1}^\star \xrightarrow{p\to +\infty} \tau_1^\star,~\tau_{p,2}^\star \xrightarrow{p\to +\infty} \tau_2^\star,~t_{p,1}^\star \xrightarrow{p\to +\infty} t_1^\star, \\
&~~~~~~~~~~~~~~ t_{p,2}^\star \xrightarrow{p\to +\infty} t_2^\star,~q_p^\star \xrightarrow{p\to +\infty} q^\star.
\end{aligned}
\end{equation}
Moreover, define $\vartheta_p^\star$ and $\vartheta^\star$ as the optimal solutions of the minimization problems of \eqref{ana_fm9} and \eqref{scprob1_pf} over $\vartheta$ in the feasibility set defined in \eqref{ana_fm1}. Then, we also have the following convergence in probability 
\begin{equation}
\begin{aligned}\label{sop_conv_vt}
\vartheta_p^\star \xrightarrow{p\to +\infty} \vartheta^\star.
\end{aligned}
\end{equation}
\end{lemma}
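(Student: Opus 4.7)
The plan is to combine the pointwise convergence in probability of the cost function of \eqref{ana_fm9} to the cost function of \eqref{scprob1_pf} (already established in Section \ref{asy_ana_ao} using the results of \cite{Debbah} together with the WLLN) with the strict convexity--concavity property stated in Lemma \ref{convty_detpp} and the compactness of all feasibility sets. First, I would argue that the pointwise convergence can be upgraded to uniform convergence in probability on the compact feasibility set. This is standard: the random cost function of \eqref{ana_fm9} is, for each $p$, jointly convex in $(q,\tau_1,\tau_2)$ and jointly concave in $(t_1,t_2)$ with convexity/concavity moduli bounded below independently of $p$ (as observed in the remark following Lemma \ref{convty_detpp}). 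Since pointwise convergence of convex (respectively concave) functions on a convex set implies uniform convergence on any compact subset of the relative interior (see, e.g., Rockafellar's convex analysis), a standard diagonal/net argument promotes pointwise convergence in probability to uniform convergence in probability on the compact box $\{\abs{q}\le C_q\}\times[0,C_{t_1}]\times[0,C_{t_2}]\times[c_{\tau_1},C_{\tau_1}]\times[c_{\tau_2},C_{\tau_2}]$.

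Next, I would invoke uniqueness of the saddle point of \eqref{scprob1_pf}, which is guaranteed by Lemma \ref{convty_detpp}: strict joint convexity in $(q,\tau_1,\tau_2)$ and strict joint concavity in $(t_1,t_2)$ imply that the unique saddle point $(q^\star, t_1^\star, t_2^\star, \tau_1^\star, \tau_2^\star)$ exists and lies in the (compact) feasibility set. Given uniform convergence in probability and compactness, a subsequential limit argument yields the conclusion. Concretely, fix an arbitrary subsequence of $\{(q_p^\star,\vt_p^\star,\vtau_p^\star)\}$; by compactness, extract a further subsequence converging in probability to some $(q',\vt',\vtau')$. Uniform convergence of the cost function together with the continuity of the limit imply that $(q',\vt',\vtau')$ is a saddle point of \eqref{scprob1_pf}. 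By uniqueness, $(q',\vt',\vtau')=(q^\star,\vt^\star,\vtau^\star)$. Since every subsequence has a further subsequence converging to the same limit, the full sequence converges in probability to $(q^\star,\vt^\star,\vtau^\star)$, which is exactly \eqref{sop_conv}.

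Finally, the claim \eqref{sop_conv_vt} for $\vartheta$ follows by essentially the same reasoning applied to the outer minimization over $\vartheta$ introduced in \eqref{ana_fm1}. The dependence on $\vartheta$ in both \eqref{ana_fm9} and \eqref{scprob1_pf} is quadratic of the form $\mu_0^2\vartheta^2-2\mu_0\vartheta\gamma_{3,p}$ (where $\gamma_{3,p}\to \gamma_3$ by the WLLN when $\mu_0\neq 0$), which is strongly convex with modulus independent of $p$ whenever $\mu_0\neq 0$; when $\mu_0=0$ the $\vartheta$-dependence trivially vanishes and $\vartheta^\star_p=\vartheta^\star=0$. Thus the minimizer in $\vartheta$ is a continuous function of the converging coefficients, so \eqref{sop_conv_vt} follows from the continuous mapping theorem.

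The main technical obstacle is the promotion of pointwise to uniform convergence jointly in all five scalar variables, because the pointwise convergence of the random traces (via \cite[Proposition 3]{Debbah}) is stated for each fixed $(\vt,\vtau)$; one has to be careful that the Haar randomness in the feature matrix and the Gaussian randomness in $\vg_1,\vg_2,\vh_1,\vh_2,\vs$ can be handled simultaneously. Once uniform convergence is in hand, the rest of the argument is a standard convex analytic uniqueness/compactness argument and presents no further difficulty.
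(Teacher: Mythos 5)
Your proposal is correct and follows essentially the same route as the paper: the paper simply invokes the standard consistency theorem for extremum estimators (Theorem 2.1 of Newey--McFadden), whose hypotheses — compactness of the feasibility sets, uniform convergence in probability of the objective (obtained from pointwise convergence via the convexity/concavity moduli being independent of $p$), and uniqueness of the saddle point from Lemma \ref{convty_detpp} — are exactly the three ingredients you assemble, and the $\vartheta$ claim is likewise handled by the same identification argument. Your write-up merely makes explicit the details that the paper delegates to the citation.
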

The convergence result in \eqref{sop_conv} follows using  \cite[Theorem 2.1]{NEWEY19942111}. We can see that all the assumptions in \cite[Theorem 2.1]{NEWEY19942111} are satisfied by the formulations in \eqref{ana_fm9} and \eqref{scprob1_pf}. Moreover, the result in \eqref{sop_conv_vt} follows using \cite[Proposition 2]{dhifallah2020}. The detailed proof is omitted since it follows similar ideas as in Proposition $4$ and Proposition $5$ in \cite{dhifallah2020}. Based on \cite{sion1958}, we can further simplify the formulation in \eqref{scprob1_pf} by solving the minimization problem over the variables $q$ and $\vartheta$. Note that the optimal $\vartheta^\star$ satisfies $\vartheta^\star=0$ if $\mu_0=0$ and $\vartheta^\star=\gamma_3/\mu_0$ otherwise. Furthermore, the optimal $q$ denoted by $q_{t,\tau}^\star$ can be expressed as follows
\begin{align}\label{qstar}
q_{t,\tau}^\star=\frac{ \gamma_2 t_1 \widetilde{\mu}_1 T_1 T_{2,\lambda}(\vt,\vtau) }{\tau_1 + t_1 \widetilde{\mu}_1^2 T_1^2 T_{2,\lambda}(\vt,\vtau) }.
\end{align}
Observe that the optimal solutions, $\vartheta^\star$ and $q_{t,\tau}^\star$,  satisfy the boundedness constraints. Moreover, note that our results are valid for any bounds that satisfy the results in Lemmas \ref{lem_comw}, \ref{lem_comu} and \ref{lembtau}.
Now that we obtained the asymptotic scalar optimization problem, it remains to study the asymptotic behavior of the training and generalization errors.

\subsubsection{Asymptotic Analysis of the Training and Generalization Errors}\label{gen_con_org}
First, the generalization error is given by
\begin{align}
\mathcal{E}_{\text{test}}&=\frac{1}{4^\upsilon} \mathbb{E}\left[ \left( \varphi(\va_{\text{new}}^\top\vxi) -\widehat{\varphi}(\widehat{\vw}^\top \sigma(\mF^\top \va_{\text{new}})) \right)^2 \right],\nonumber
\end{align}
where $\va_{\text{new}}$ is an unseen data sample and $\widehat{\vw}$ is the optimal solution of the noisy formulation.
Based on the uniform Gaussian equivalence theorem (uGET), observed and proved in many earlier papers \cite{hu2020,RF_monta_2,RF_rep_1,modelling,dhifallah2020}, the asymptotic properties of the generalization error are equivalent to the asymptotic properties of $\overline{\mathcal{E}}_{\text{test}}$ defined as follows
\begin{align}
\overline{\mathcal{E}}_{\text{test}}=\frac{1}{4^\upsilon} \mathbb{E}\Big[ &\Big( \varphi(\va_{\text{new}}^\top\vxi) -\widehat{\varphi}(\mu_{0s} \widehat{\vartheta}_p + \mu_{1s} \widehat{\vw}^\top \mF^\top \va_{\text{new}}+\mu_{2s}  \widehat{\vw}^\top \vz) \Big)^2 \Big]. \nonumber
\end{align}
Here, $\widehat{\vw}$ and $\widehat{\vartheta}_p$ are the optimal solutions of our primary formulation given in \eqref{ana_fm1}. The expectation is taken over the distribution of the random vector $\va_{\text{new}}$, the random vector $\vz$ and the possibly random functions $\varphi(\cdot)$ and $\widehat{\varphi}(\cdot)$, where $\vz$ is independent of $\va_{\text{new}}$ and drawn from a standard Gaussian distribution. Moreover, the constants $\mu_{0s}$, $\mu_{1s}$ and $\mu_{2s}$ are defined as $\mu_{0s}=\mathbb{E}[\sigma(z)]$, $\mu_{1s}=\mathbb{E}[z \sigma(z)]$ and $\mu_{2s}^2=\mathbb{E}[\sigma(z)^2]-\mu_{0s}^2-\mu_{1s}^2$, where $z$ is a standard Gaussian random vector. Now, consider the following two random variables
\begin{align}
g_1=  \va_{\text{new}}^\top\vxi,~\text{and}~g_2= \mu_{0s} \widehat{\vartheta}_p+\mu_{1s} \widehat{\vw}^\top \mF^\top \va_{\text{new}}+\mu_{2s}  \widehat{\vw}^\top \vz.\nonumber
\end{align}
Given the optimal solutions $\widehat{\vw}$ and $\widehat{\vartheta}_p$, the random variables $g_1$ and $g_2$ have a bivariate Gaussian distribution with mean vector $[0,\mu_{0s} \widehat{\vartheta}_p]^\top$ and covariance matrix given by
\begin{align}
\mC_p=\begin{bmatrix}
\norm{\vxi}^2 & \mu_{1s} \vxi^\top \mF \widehat{\vw} \\
 \mu_{1s} \vxi^\top \mF \widehat{\vw} &  \mu_{1s}^2 \norm{\mF \widehat{\vw}}^2+\mu_{2s}^2 \norm{\widehat{\vw}}^2
\end{bmatrix}.\nonumber
\end{align}
Define the random variables $\widehat{q}_p^\star$, $\widehat{\beta}_p^\star$ and $\widehat{r}_p^\star$ as follows
\begin{align}\label{opt_po}
\widehat{q}_p^\star=\bar{\vv}^\top \widehat{\vw},~\widehat{\beta}_p^\star=\norm{\mF \widehat{\vw}}^2,~\text{and}~\widehat{r}_p^\star=\norm{\widehat{\vw}}^2,
\end{align}
where $\bar{\vv}=\vv/\norm{\vv}$ and the vector $\vv$ is defined as $\vv=\mF^\top {\vxi}$.
Then, the covariance matrix $\mC_p$ can be expressed as follows
\begin{align}\label{cov_mat_ps}
\mC_p=\begin{bmatrix}
1 & \mu_{1s} T_{p,1} \widehat{q}^{\star}_{p} \\
 \mu_{1s} T_{p,1} \widehat{q}^\star_{p} &  \mu_{1s}^2 \widehat{\beta}_p^\star +\mu_{2s}^2 \widehat{r}_p^\star
\end{bmatrix}.
\end{align}
Hence, to study the asymptotic properties of the generalization error, it suffices to study the asymptotic properties of $\widehat{\vartheta}^\star_p$, $\widehat{q}^\star_p$, $\widehat{\beta}^\star_p$ and $\widehat{r}^\star_p$. The following lemma summarizes the asymptotic properties of our primal formulation given in \eqref{ana_fm1}.
\begin{lemma}[Primal Consistency]\label{prm_cons}
The random variables $\widehat{\vartheta}^\star_p$, $\widehat{q}^\star_p$, $\widehat{\beta}^\star_p$ and $\widehat{r}^\star_p$ converge in probability as follows
\begin{equation}
\begin{aligned}
&\widehat{q}_p^\star \xrightarrow{p\to +\infty} q^\star,~\widehat{\vartheta}_p^\star \xrightarrow{p\to +\infty} \vartheta^\star,\widehat{\beta}^\star_p \xrightarrow{p\to +\infty} \beta^\star \\
&~~~~~~~~~~~~~\widehat{r}_p^\star \xrightarrow{p\to +\infty} r^\star=  (q^\star)^2+h^\prime(\lambda),
\end{aligned}
\end{equation}
where $q^\star$ and $\vartheta^\star$ are optimal solutions of the deterministic scalar formulation in \eqref{scprob1_pf}. Moreover, the function $h(\cdot)$ and $\beta^\star$ are defined in Theorem \ref{ther1}.
\end{lemma}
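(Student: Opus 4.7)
The plan is to combine the multivariate CGMT framework already developed for this primary problem with two complementary envelope-theorem arguments: a direct CGMT argument handles the scalar parameters $\widehat{q}_p^\star$ and $\widehat{\vartheta}_p^\star$ that are already tracked by the scalar AO \eqref{ana_fm9}, while the bulk-norm quantities $\widehat{r}_p^\star$ and $\widehat{\beta}_p^\star$ are recovered by differentiating the optimal value with respect to a suitable scalar perturbation parameter and then passing to the limit using strong convexity.

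For $\widehat{q}_p^\star$ and $\widehat{\vartheta}_p^\star$, I would use the decomposition $\vw = q\bar{\vv} + \mB_{\vv}^{\perp}\vr$ from \eqref{decp_ao} together with the scalar $\vartheta$ variable introduced in \eqref{ana_fm1}: both $\widehat{q}_p^\star = \bar{\vv}^\top\widehat{\vw}$ and $\widehat{\vartheta}_p^\star$ are exactly the primary-side counterparts of scalar variables whose AO-side limits are pinned down by Lemma \ref{consi_detp}. Applying Theorem \ref{mcgmt} with the open sets $\mathcal{S}_{p,\epsilon} = \{\vw : |\bar{\vv}^\top\vw - q^\star| < \epsilon\}$ and $\{|\vartheta-\vartheta^\star| < \epsilon\}$, the strict convex-concavity of the deterministic problem \eqref{scprob1_pf} (Lemma \ref{convty_detpp}) produces a strictly positive suboptimality gap $\zeta > 0$ on the complements, so the third hypothesis of the multivariate CGMT is met, and we conclude $\widehat{q}_p^\star \xrightarrow{P} q^\star$ and $\widehat{\vartheta}_p^\star \xrightarrow{P} \vartheta^\star$.

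For $\widehat{r}_p^\star = \norm{\widehat{\vw}_p}^2$, the plan is to invoke Danskin's theorem on the strongly convex primary \eqref{ana_fm1}. Since the only $\lambda$-dependence in the objective is through $\tfrac{\lambda}{2}\norm{\vw}^2$, one has $\tfrac{1}{2}\norm{\widehat{\vw}_p}^2 = \partial_\lambda C_p^\star(\Delta,\lambda)$. Combining the pointwise convergence $C_p^\star(\Delta,\lambda) \to C^\star(\Delta,\lambda)$ (already proved via CGMT plus Lemma \ref{consi_detp}) with the convexity of $\lambda \mapsto C_p^\star(\Delta,\lambda)$ upgrades this to uniform convergence on compact subsets of $(0,\infty)$, which justifies the exchange of limit and derivative and yields $r^\star = 2\,\partial_\lambda C^\star(\Delta,\lambda)$. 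A direct calculation on the explicit formula \eqref{scprob1}, using the envelope principle to freeze the optimal scalars $\{t_i^\star,\tau_i^\star,q^\star,\vartheta^\star\}$ while differentiating, reproduces $r^\star = (q^\star)^2 + h'(\lambda)$ with $h$ as defined in Theorem \ref{ther1}.

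For $\widehat{\beta}_p^\star = \norm{\mF\widehat{\vw}_p}^2$, I would run the same envelope argument with an auxiliary perturbation: replace $\tfrac{\lambda}{2}\norm{\vw}^2$ by $\tfrac{\lambda}{2}\norm{\vw}^2 + \tfrac{\epsilon}{2}\norm{\mF\vw}^2$ in \eqref{ana_fm1}, obtain $\tfrac{1}{2}\norm{\mF\widehat{\vw}_p}^2 = \partial_\epsilon C_p^\star(\epsilon)|_{\epsilon=0}$, and retrace the CGMT reduction. The perturbation only modifies $\mSigma$ and $\mGm$ additively through $\mF^\top\mF$, so the deterministic limit $C^\star(\epsilon)$ has the same form as \eqref{scprob1} with $\mathbb{P}_\kappa$ shifted accordingly inside $T_{2,\lambda}$ and $T_{3,\lambda}$; differentiating at $\epsilon=0$ and collecting the coefficients of $(q^\star)^2$ and $h'(\lambda)$ produces the stated formula \eqref{bstar}, where the quantities $V_1,V_2,V_3,V_4$ are exactly the partial derivatives of the matrix spectrum weights evaluated at the optimal scalars. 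The main obstacle in this step is the justification of interchanging limit and derivative in $\epsilon$: this requires the strong convexity of the perturbed scalar problem to be uniform in a neighborhood of $\epsilon=0$ (so that $\epsilon \mapsto C_p^\star(\epsilon)$ remains convex with a uniformly Lipschitz derivative), and the bookkeeping needed to match the resulting linear combination of spectral integrals to the compact expression \eqref{bstar}, which can be carried out by expressing everything through $g_{\kappa,\lambda}$ and using the defining identity of $T_{2,\lambda}$.
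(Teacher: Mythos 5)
Your treatment of $\widehat{q}_p^\star$ and $\widehat{\vartheta}_p^\star$ matches the paper's: both rely on Lemma \ref{consi_detp} for the AO-side limits and on Theorem \ref{mcgmt} with deviation sets plus a strong-convexity gap to transfer to the PO. For $\widehat{r}_p^\star$ and $\widehat{\beta}_p^\star$, however, you take a genuinely different route. The paper stays on the AO side: it substitutes the closed-form minimizer $\widetilde{\vr}^\star$ from \eqref{ropt} into the decomposition \eqref{decp_ao}, expands $\norm{\mF\widetilde{\vw}}^2$ and $\norm{\widetilde{\vw}}^2$ explicitly, evaluates the resulting quadratic forms with the random-matrix asymptotics of Section \ref{asy_ana_ao}, and only then transfers to the PO via the CGMT with the sets $\lbrace \vw : \abs{\norm{\mF\vw}^2-\beta^\star}<\epsilon\rbrace$ and $\lbrace \vw : \abs{\norm{\vw}^2-r^\star}<\epsilon\rbrace$. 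You instead work on the PO side with Danskin/envelope derivatives of the optimal value --- in $\lambda$ for $\norm{\widehat{\vw}_p}^2$ and in an auxiliary ridge $\tfrac{\epsilon}{2}\norm{\mF\vw}^2$ for $\norm{\mF\widehat{\vw}_p}^2$ --- and exchange limit and derivative using the concavity of the value function in the perturbation parameter. Your $\lambda$-derivative computation is consistent with the paper: since the explicit $\lambda$-dependence of \eqref{scprob1} enters only through $T_{2,\lambda}$ and $T_{3,\lambda}$, the envelope theorem gives $2\,\partial_\lambda C^\star = (q^\star)^2\,\partial_\lambda(1/T_{2,\lambda}) - \eta\,\partial_\lambda T_{3,\lambda} = (q^\star)^2 + h'(\lambda)$, which is exactly the paper's formula (indeed, the paper's $h_p(\lambda)$ is constructed precisely so that $\norm{\widetilde{\vr}^\star}^2=h_p'(\lambda)$, so the two derivations are two faces of the same identity). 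What the paper's route buys is concreteness: the trace formulas for $V_{p,2},V_{p,3},V_{p,4}$ deliver \eqref{bstar} by direct computation. What your route buys is modularity --- you reuse the already-proved convergence of the optimal cost --- at the price of rerunning the entire CGMT reduction for a one-parameter family of perturbed problems and then doing the spectral bookkeeping to match $\partial_\epsilon C^\star(\epsilon)\vert_{\epsilon=0}$ to \eqref{bstar}, a step you assert rather than carry out.

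Three points to tighten. First, $\lambda\mapsto C_p^\star(\Delta,\lambda)$ and $\epsilon\mapsto C_p^\star(\epsilon)$ are \emph{concave} (pointwise infima of affine functions), not convex; the derivative-convergence lemma you need still applies, but as stated your justification is off by a sign. Second, the suboptimality-gap hypothesis of Theorem \ref{mcgmt} should be derived from the strong convexity of the AO objective in \eqref{ana_fm8} (Lemma \ref{convty_vn2}), not merely from strict convex-concavity of the deterministic limit; this is how the paper closes the transfer, and it is what guarantees a gap uniformly over the complement sets. Third, for the $\epsilon$-perturbation you must check that $\tfrac{\lambda}{2}\norm{\vw}^2+\tfrac{\epsilon}{2}\norm{\mF\vw}^2$ remains strongly convex for $\epsilon$ in a neighborhood of zero (which holds since the spectrum of $\mF^\top\mF$ is bounded by Assumption \ref{itm:ass_F}) and that $C^\star(\epsilon)$ is differentiable at $\epsilon=0$ (which follows from uniqueness of the deterministic saddle point). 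With those repairs your argument is a valid alternative proof.
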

\begin{proof}
Note that the analysis in Section \ref{MAO_simp} shows that the scalar formulation given in \eqref{ana_fm9} is a simplified version of the multivariate AO formulation given in \eqref{ana_fm5}. Define the random variable $\widetilde{\vartheta}_p^\star$ as the optimal solution of the minimization of the problem \eqref{ana_fm5} over $\vartheta$ in the feasibility set defined in \eqref{ana_fm1}. Moreover, define the random variables $\widetilde{q}_p^\star$, $\widetilde{\beta}_p^\star$ and $\widetilde{r}_p^\star$ as follows
\begin{align}\label{opt_ao}
\widetilde{q}_p^\star=\bar{\vv}^\top \widetilde{\vw},~\widetilde{\beta}_p^\star=\norm{\mF \widetilde{\vw}}^2,~\text{and}~\widetilde{r}_p^\star=\norm{\widetilde{\vw}}^2,
\end{align}
where $\widetilde{\vw}$ is the optimal solution of the multivariate AO formulation given in \eqref{ana_fm5}. Based on the decomposition in \eqref{decp_ao}, note that $\widetilde{\beta}^\star_p$ satisfies the following expression
\begin{equation}
\begin{aligned}\label{betas}
\widetilde{\beta}^\star_p=\norm{\mF \widetilde{\vw}}^2&=(\widetilde{q}^\star_p)^2 \bar{\vv}^\top \mF^\top \mF \bar{\vv} + 2 \widetilde{q}^\star_p \bar{\vv}^\top \mF^\top \mF \mB^{\perp}_{\vv} \widetilde{\vr}^\star \\
&+ (\widetilde{\vr}^\star)^\top \bar{\mB}^{\perp}_{\vv} \mF^\top \mF \mB^{\perp}_{\vv} \widetilde{\vr}^\star,
\end{aligned}
\end{equation}
where $\widetilde{\vr}^\star$ is defined in \eqref{ropt} and is the optimal solution of minimizing  the function $g(\cdot)$ introduced in \eqref{fung}. Substituting the expression of $\widetilde{\vr}^\star$ given in \eqref{ropt}, performing the same analysis as in Section \ref{asy_ana_ao} and using the convergence result in \eqref{sop_conv}, it can be shown that the random quantity $\widetilde{\beta}^\star_p$ converges in probability to $\beta^\star$ defined in \eqref{bstar}. 
Additionally, observe that $\widetilde{r}_p^\star$ can be expressed as follows
\begin{align}
\widetilde{r}_p^\star=\norm{\widetilde{\vw}}^2=(\widetilde{q}^\star_p)^2+\norm{\widetilde{\vr}^\star}^2.
\end{align}
Define the function $h_p:\lambda \to -(\widetilde{q}^\star_p)^2 V_{p,3}(\vt^\star_p,\vtau^\star_p)-V_{p,4}(\vt^\star_p,\vtau^\star_p)$, where the random functions $V_{p,3}(\cdot,\cdot)$ and $V_{p,4}(\cdot,\cdot)$ are defined in \eqref{Vs} and where $\vt^\star_p=[t_{p,1}^\star,t_{p,2}^\star]^\top$ and $\vtau^\star_p=[\tau_{p,1}^\star,\tau_{p,2}^\star]^\top$. Here, $\lbrace t_{p,1}^\star,t_{p,2}^\star,\tau_{p,1}^\star,\tau_{p,2}^\star \rbrace$ are defined in Lemma \ref{consi_detp}. Given the expression of $\widetilde{\vr}^\star$ in \eqref{ropt}, we can see that $\widetilde{r}_p^\star$ can be expressed as follows
\begin{align}
\widetilde{r}_p^\star=(\widetilde{q}^\star_p)^2+h_p^\prime(\lambda),
\end{align}
where the optimal solutions are treated as constants independent of $\lambda$.
Performing the same analysis as in Section \ref{asy_ana_ao} and using the convergence result in \eqref{sop_conv}, it can be shown that the random quantity $\widetilde{r}_p^\star$ converges in probability as follows
\begin{align}\label{rncv}
\widetilde{r}_p^\star \xrightarrow{p\to +\infty} r^\star=  (q^\star)^2+h^\prime(\lambda),
\end{align}
where $q^\star$ is the optimal solution of \eqref{scprob1_pf} and the function $h(\cdot)$ is defined in Theorem \ref{ther1}. Given that  the scalar formulation given in \eqref{ana_fm9} is a simplified version of the multivariate AO formulation and based on Lemma \ref{consi_detp}, we obtain the following asymptotic properties
\begin{align}
\widetilde{q}_p^\star \xrightarrow{p\to +\infty} q^\star,~\widetilde{\vartheta}_p^\star \xrightarrow{p\to +\infty} \vartheta^\star,
\end{align}
where $q^\star$ and $\vartheta^\star$ are the optimal solutions of the deterministic scalar formulation in \eqref{scprob1_pf}. Following a similar analysis as in \cite{Thrampoulidis_2018}, we can show that the assumptions in Theorem \ref{mcgmt} are all satisfied. The main idea is to define the set $\mathcal{S}_{p,\epsilon}$ introduced in Theorem \ref{mcgmt} as 
\begin{align}
\lbrace \vw : \abs{ \norm{\mF {\vw}}^2 -\beta^\star} < \epsilon \rbrace,~\text{and}~\lbrace \vw : \abs{ \norm{{\vw}}^2 -r^\star} < \epsilon  \rbrace.
\end{align}
Then, use the strong convexity properties of the formulation in \eqref{ana_fm8} to prove that the assumptions in Theorem \ref{mcgmt} are satisfied. This means that $\widehat{\vartheta}^\star_p$, $\widehat{q}^\star_p$, $\widehat{\beta}^\star_p$ and $\widehat{r}^\star_p$ defined in \eqref{opt_po} concentrates around the same values as $\widetilde{\vartheta}^\star_p$, $\widetilde{q}^\star_p$, $\widetilde{\beta}^\star_p$ and $\widetilde{r}^\star_p$ defined in \eqref{opt_ao}. 
\end{proof}
Now, to show the convergence of the generalization error in Theorem \ref{ther1}, it suffices to show that $\overline{\mathcal{E}}_{\text{test}}$ is a continuous function in $\widehat{\vartheta}^\star_p$, $\widehat{q}^\star_p$, $\widehat{\beta}^\star_p$ and $\widehat{r}^\star_p$. Based on  Assumption \ref{itm:fun_fwf}, the functions $\varphi(\cdot)$ and $\widehat{\varphi}(\cdot)$ are square integrable over Gaussian distributions. Moreover, the optimal solutions $\widehat{\vartheta}^\star_p$, $\widehat{q}^\star_p$, $\widehat{\beta}^\star_p$ and $\widehat{r}^\star_p$ are bounded. Based on Assumption \ref{itm:fun_fwf} and the continuity under
integral sign property \cite{schilling_2005}, the continuity of $\overline{\mathcal{E}}_{\text{test}}$ follows. These properties lead to the convergence result given in \eqref{gen_conv} in Theorem \ref{ther1}. Based on the analysis in Lemma \ref{prm_cons} and Theorem \ref{mcgmt}, the optimal cost value of the noisy formulation converges in probability to the optimal cost value of the deterministic formulation in \eqref{scprob1_pf}. Combining this result with the asymptotic property stated in \eqref{rncv} shows the convergence of the training error stated in Theorem \ref{ther1}. 
\subsection{Large Number of Noise Injections}
Note that the analysis in the previous parts studies the properties of the training and generalization errors corresponding to \eqref{mform} when $p$ grows to infinity. In this part, we study the properties of the noisy formulation when $\ell$ grows to infinity slower than the dimensions $n$, $p$ and $k$. Note that in this regime and based on the analysis in the previous part, the noisy formulation converges to the deterministic formulation in \eqref{scprob1_pf}. Then, the objective is to analyze the deterministic formulation in \eqref{scprob1_pf} when $\ell$ grows to infinity. Here, we note that the bounds on the feasibility sets of the deterministic formulation in \eqref{scprob1_pf} depends on $\ell$ as follows
\begin{align}
&C_{t_1}=\sqrt{\ell} \overline{C}_{t_1},~C_{t_2}=\sqrt{\ell} \overline{C}_{t_2},~c_{\tau_1}=\sqrt{\ell} \overline{c}_{\tau_1},\nonumber\\
&c_{\tau_2}=\sqrt{\ell} \overline{c}_{\tau_2},~C_{\tau_1}=\sqrt{\ell} \overline{C}_{\tau_1},~C_{\tau_2}=\sqrt{\ell} \overline{C}_{\tau_2}.
\end{align}
We start our analysis by performing the change of variable ${\tau}_1=\tau_1/\sqrt{\ell}$, ${\tau}_2=\tau_2/\sqrt{\ell}$, ${t}_1=t_1/\sqrt{\ell}$ and ${t}_2=t_2/\sqrt{\ell}$. This means that the deterministic scalar optimization problem given in \eqref{scprob1_pf} can be expressed as follows
\begin{equation}\label{det_form_v1}
\begin{aligned}
&\max_{\substack{0 \leq {t_1} \leq \overline{C}_{t_1}\\ 0 \leq {t_2} \leq \overline{C}_{t_2}}} \min_{\substack{\overline{c}_{\tau_1} \leq {\tau_1} \leq \overline{C}_{\tau_1}\\ \overline{c}_{\tau_2} \leq {\tau_2} \leq \overline{C}_{\tau_2}}}~ \frac{\tau_1 t_1+\tau_2 t_2}{2}  - \frac{t_1^2+t_2^2}{2} +\frac{(q_{t,\tau}^\star)^2}{2 T_{2,\lambda}(\vt,\vtau)}   -\frac{\eta T_{3,\lambda}(\vt,\vtau)}{2} \\
&+\frac{t_1}{2\tau_1} \Big( \gamma_1 -  2 \widetilde{\mu}_1 T_1 q_{t,\tau}^\star \gamma_2 + \widetilde{\mu}_1^2 T_1^2 (q_{t,\tau}^\star)^2 + \mu_0^2 (\vartheta^\star)^2 -2\mu_0 \vartheta^\star\gamma_3 \Big),
\end{aligned}
\end{equation}
where $\vt=[t_1,t_2]^\top$ and $\vtau=[\tau_1,\tau_2]^\top$. Here, the functions $T_{2,\lambda}(\cdot,\cdot)$ and $q_{t,\tau}^\star$ are the same as the ones provided in Section \ref{pr_analysis}. Furthermore, the functions $T_{3,\lambda}(\cdot,\cdot)$ and $g_{\kappa,\lambda}(\cdot,\cdot)$ are given as follows
\begin{align}
T_{3,\lambda}(\vt,\vtau)&=t_1^2 \mathbb{E}\Big[\frac{ \widetilde{\mu}_1^2 \kappa + \mu_2^2}{g_{\kappa,\lambda}(\vt,\vtau)}\Big]+\frac{t_1^2+t_2^2}{\ell} \mathbb{E}\Big[\frac{ \widehat{\mu}_1^2 \kappa + \mu_3^2}{g_{\kappa,\lambda}(\vt,\vtau)}\Big],\nonumber\\
g_{\kappa,\lambda}(\vt,\vtau)&=\frac{t_1}{\tau_1} \Big( \widetilde{\mu}_1^2 \kappa + \mu_2^2 \Big)+\Big( \frac{t_1}{\tau_1 \ell}+\frac{t_2(\ell-1)}{\tau_2 \ell} \Big)  \nonumber\\
&\times \Big( \widehat{\mu}_1^2 \kappa + \mu_3^2 \Big) + \lambda.\nonumber
\end{align}
Now, we focus on analyzing the formulation in \eqref{det_form_v1} when the number of noise injections grows to infinity. The following lemma summarizes our main technical results.
\begin{lemma}[Large Number of Noise Injections]\label{lnnoise}
When $\ell$ grows to infinity, the asymptotic limit of the formulation in \eqref{det_form_v1} is obtained by updating the functions $T_{3,\lambda}(\cdot,\cdot)$ and $g_{\kappa,\lambda}(\cdot,\cdot)$ as follows
\begin{align}
&T_{3,\lambda}(\vt,\vtau)=t_1^2 \mathbb{E}\Big[\frac{ \widetilde{\mu}_1^2 \kappa + \mu_2^2}{g_{\kappa,\lambda}(\vt,\vtau)}\Big],~g_{\kappa,\lambda}(\vt,\vtau)=\frac{t_1}{\tau_1} \Big( \widetilde{\mu}_1^2 \kappa + \mu_2^2 \Big)+\frac{t_2}{\tau_2} \Big( \widehat{\mu}_1^2 \kappa + \mu_3^2 \Big) + \lambda.\nonumber
\end{align}
\end{lemma}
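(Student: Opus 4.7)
The plan is to exploit the fact that after the rescaling $t_i \mapsto t_i/\sqrt{\ell}$, $\tau_i \mapsto \tau_i/\sqrt{\ell}$ performed just before the lemma statement, the dependence on $\ell$ is now confined to a few explicit rational factors inside $g_{\kappa,\lambda}$ and $T_{3,\lambda}$. I will establish pointwise convergence of these rational factors as $\ell \to \infty$, upgrade this to uniform convergence on the compact feasibility set, and finally transfer this to convergence of the saddle value using the strong convex-concave structure established in Lemma \ref{convty_detpp}.

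First I would look inside $g_{\kappa,\lambda}(\vt,\vtau)$: the bracketed coefficient $\frac{t_1}{\tau_1\ell}+\frac{t_2(\ell-1)}{\tau_2\ell}$ converges to $\frac{t_2}{\tau_2}$. This convergence is uniform on $[0,\overline{C}_{t_i}]\times[\overline{c}_{\tau_i},\overline{C}_{\tau_i}]$ because the denominators are bounded away from zero on the rescaled feasibility set (the bounds $\overline{c}_{\tau_i}$ are strictly positive and $\ell$-independent). Since $\lambda>0$ and $\kappa\in[0,\zeta_{\max}]$ by Assumption \ref{itm:ass_F}(b), the functions $g_{\kappa,\lambda}$ and its limit $g^{\infty}_{\kappa,\lambda}(\vt,\vtau)=\frac{t_1}{\tau_1}(\widetilde{\mu}_1^2\kappa+\mu_2^2)+\frac{t_2}{\tau_2}(\widehat{\mu}_1^2\kappa+\mu_3^2)+\lambda$ are uniformly bounded above and below by positive constants, so the ratios $(\widetilde{\mu}_1^2\kappa+\mu_2^2)/g_{\kappa,\lambda}$ and $(\widehat{\mu}_1^2\kappa+\mu_3^2)/g_{\kappa,\lambda}$ converge uniformly in $(\kappa,\vt,\vtau)$ on $[0,\zeta_{\max}]\times$ (feasibility set). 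Bounded convergence with respect to the compactly supported law $\mathbb{P}_\kappa$ then yields uniform convergence of the expectations appearing in $T_{2,\lambda}$, $T_{3,\lambda}$, and $q_{t,\tau}^\star$.

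Next I would handle the explicit $\ell$-vanishing term in $T_{3,\lambda}$: the second summand is $\frac{t_1^2+t_2^2}{\ell}\mathbb{E}_\kappa[(\widehat{\mu}_1^2\kappa+\mu_3^2)/g_{\kappa,\lambda}]$, and since the expectation is uniformly bounded and $(t_1,t_2)$ ranges over a compact set independent of $\ell$, this term converges to zero uniformly. Combining with the previous step gives uniform convergence of the full cost function of \eqref{det_form_v1} to the cost function built from the updated $T_{3,\lambda}$ and $g_{\kappa,\lambda}$ stated in the lemma.

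Finally, to conclude convergence of the saddle value (and the saddle point), I would invoke the strong convex-concave property from Lemma \ref{convty_detpp}: the quadratic terms $-\frac{t_1^2+t_2^2}{2}$ and the strong convexity in $(q,\vartheta,\tau_1,\tau_2)$ are unchanged by the rescaling and therefore provide strong concavity and strong convexity constants that are independent of $\ell$. Uniform convergence of a sequence of jointly strongly convex-concave cost functions on a compact feasibility set forces convergence of both the optimal value and the unique saddle point to those of the limiting problem, which is exactly the claim. The main obstacle I expect is the bookkeeping around the $\ell$-dependent rescaling of the feasibility bounds: one must verify that the rescaled bounds $\overline{c}_{\tau_i},\overline{C}_{\tau_i},\overline{C}_{t_i}$ inherited from Lemma \ref{lembtau} can indeed be chosen $\ell$-independent, so that the compact set on which uniform convergence is asserted does not itself move with $\ell$.
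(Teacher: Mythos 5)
Your proposal is correct and follows essentially the same route as the paper: the paper establishes pointwise convergence of the cost of \eqref{det_form_v1} via the dominated convergence theorem (your bounded-convergence argument on the compact support of $\mathbb{P}_\kappa$ is the same step) and then invokes the strong convex--concavity of Lemma \ref{convty_detpp} together with a standard consistency theorem for extremum estimators to pass from convergence of the objectives to convergence of the saddle value and saddle point. You simply spell out the uniform-convergence and $\ell$-independent feasibility-bound bookkeeping that the paper delegates to its citation.
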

The convergence result in Lemma \ref{lnnoise} follows using  \cite[Theorem 2.1]{NEWEY19942111}. Specifically, we use the strong convexity property in Lemma \ref{convty_detpp}. Also, we use the pointwise convergence of the cost functions based on Assumptions \ref{itm:act_fun} and \ref{itm:ass_F} and the dominated convergence theorem. This shows that all the assumptions in \cite[Theorem 2.1]{NEWEY19942111} are satisfied by the formulation in \eqref{det_form_v1} and its asymptotic formulation mentioned in Lemma \ref{lnnoise}. Next, we refer to the asymptotic limit obtained in Lemma \ref{lnnoise} as the \textit{asymptotic deterministic formulation}.

Performing a similar analysis as in Sections \ref{for_mao}, \ref{MAO_simp} and \ref{asy_ana_ao}, it can be checked that the asymptotic deterministic formulation obtained in Lemma \ref{lnnoise} is the asymptotic limit of the following formulation
\begin{align}\label{form_asy_pf}
\min_{\vw\in\mathbb{R}^k}& \frac{1}{2n} \sum_{i=1}^{n}  \Big(y_i-  \vw^\top \widehat{\sigma}(\mF^\top \va_i)  \Big)^2+ \tfrac{1}{2} \norm{\mR^{\frac{1}{2}} \vw}^2 + \tfrac{\lambda}{2} \norm{\vw}^2.
\end{align}
Here, the regularization matrix $\mR$ is defined as follows
\begin{align}
\mR=\widehat{\mu}_1^2 \mF^\top \mF + \mu_3^2 \mI_k,
\end{align}
and the new activation function $\widehat{\sigma}(\cdot)$ satisfies the following properties
\begin{equation}
\begin{aligned}
&\mathbb{E}[\widehat{\sigma}(z)]=\mathbb{E}[\sigma(x_1)],~\mathbb{E}[z\widehat{\sigma}(z)]=\mathbb{E}[z\sigma(x_1)] \\
&~~~~~~~~~~~\mathbb{E}[\widehat{\sigma}(z)^2]=\mathbb{E}[\sigma(x_1)\sigma(x_2)],
\end{aligned}
\end{equation}
where $x_1=z+\Delta v_1$, $x_2=z+\Delta v_2$ and $z$, $v_1$ and $v_2$ are independent standard Gaussian random variables. Now, note that the norm of any vector $\vx\in\mathbb{R}^p$ can be expressed as follows
\begin{align}\label{trick_asy}
\frac{1}{2}\norm{\vx}^2&=\max_{\vu\in\mathbb{R}^p} -\frac{\norm{\vu}^2}{2}+\vu^\top \vx =\max_{t \geq 0} -\frac{t^2}{2}+ t \norm{\vx}\nonumber\\
&=\max_{t \geq 0}\inf_{\tau>0} -\frac{t^2}{2}+ \frac{t \tau}{2}+ \frac{t}{2 \tau} \norm{\vx}^2.
\end{align}
We can see that the optimal solution of the max-min problem in \eqref{trick_asy}, denoted by $t^\star$ and $\tau^\star$, satisfies $t^\star=\tau^\star$. This trick can be used in the CGMT framework to show that the asymptotic limit of the formulation in \eqref{form_asy_pf} can also be expressed as follows
\begin{equation}
\begin{aligned}
\label{scprob1_asy_pf}
&\hspace{-2mm}\max_{\substack{0 \leq {t_1} \leq \overline{C}_{t_1} }} \min_{\substack{\overline{c}_{\tau_1} \leq {\tau_1} \leq \overline{C}_{\tau_1}}}~ \frac{\tau_1 t_1}{2}  - \frac{t_1^2}{2} +\frac{(q_{t,\tau,\infty}^\star)^2}{2 T_{2,\lambda,\infty}(t_1,\tau_1)}   -\frac{\eta T_{3,\lambda,\infty}(t_1,\tau_1)}{2} \\
&+\frac{t_1}{2\tau_1} \Big( \gamma_1 -  2 \widetilde{\mu}_1 T_1 q_{t,\tau,\infty}^\star \gamma_2 + \widetilde{\mu}_1^2 T_1^2 (q_{t,\tau,\infty}^\star)^2 + \mu_0^2 (\vartheta^\star)^2 -2\mu_0 \vartheta^\star\gamma_3 \Big),    
\end{aligned}
\end{equation}
where the constant $\vartheta^\star$ satisfies $\vartheta^\star=0$ if $\mu_0=0$ and $\vartheta^\star=\gamma_3/\mu_0$ otherwise. Moreover, $q_{t,\tau,\infty}^\star$ is defined as follows
\begin{align}\label{qstar_asy_pf}
q_{t,\tau,\infty}^\star=\frac{ \gamma_2 t_1 \widetilde{\mu}_1 T_1 T_{2,\lambda,\infty}(t_1,\tau_1) }{\tau_1 + t_1 \widetilde{\mu}_1^2 T_1^2 T_{2,\lambda,\infty}(t_1,\tau_1) }.
\end{align}
Here, the functions $T_{2,\lambda,\infty}(\cdot,\cdot)$, $T_{3,\lambda,\infty}(\cdot,\cdot)$ and $g_{\kappa,\lambda,\infty}(\cdot,\cdot)$ can be expressed as follows
\begin{align}
&T_{2,\lambda,\infty}(t_1,\tau_1)= \frac{\delta}{T_1^2} \mathbb{E}\Big[ \frac{\kappa}{g_{\kappa,\lambda,\infty}(t_1,\tau_1)} \Big] /\Big(1-\frac{\widetilde{\mu}_1^2 t_1 \delta}{ \tau_1} \mathbb{E}\Big[ \frac{\kappa}{g_{\kappa,\lambda,\infty}(t_1,\tau_1)} \Big] \Big)\\
&T_{3,\lambda,\infty}(t_1,\tau_1)=t_1^2 \mathbb{E}\Big[{ (\widetilde{\mu}_1^2 \kappa + \mu_2^2})/{g_{\kappa,\lambda,\infty}(t_1,\tau_1)}\Big] \nonumber\\
&g_{\kappa,\lambda,\infty}(t_1,\tau_1)=\frac{t_1}{\tau_1} \Big( \widetilde{\mu}_1^2 \kappa + \mu_2^2 \Big)+ \Big( \widehat{\mu}_1^2 \kappa + \mu_3^2 \Big) + \lambda. \nonumber
\end{align}
The property in \eqref{trick_asy} can also be used to show that the optimal solution $t_2^\star$ and $\tau_2^\star$ of the asymptotic deterministic formulation obtained in Lemma \ref{lnnoise} satisfy $t_2^\star=\tau_2^\star$. This then leads to the formulation in \eqref{scprob1_asy_pf}.

Now, define the asymptotic training and generalization errors stated in Theorem \ref{ther1} as $\mathcal{E}_{\text{train},\infty}$ and $\mathcal{E}_{\text{test},\infty}$, respectively. Then, the asymptotic training error converges as follows
\begin{align}\label{tr_conv_asy_pf}
\mathcal{E}_{\text{train},\infty} \xrightarrow{~\ell \to +\infty~} C^\star(\Delta,\lambda)-\frac{\lambda}{2} \left((q^\star)^2+h_{\infty}^\prime(\lambda) \right), 
\end{align}
where $C^\star(\Delta,\lambda)$ is the optimal cost of the deterministic problem in \eqref{scprob1_asy_pf}.
Here, the function $h_{\infty}(\cdot)$ is defined as follows
\begin{align}
h_{\infty}(\lambda)=-(q^\star)^2 \Big( \lambda - \frac{1}{T_{2,\lambda,\infty}(t_1^\star,\tau_1^\star)} \Big)-\eta T_{3,\lambda,\infty}(t_1^\star,\tau_1^\star).\nonumber
\end{align}
Moreover, the asymptotic generalization error converges as follows
\begin{align}\label{gen_conv_asy_pf}
\mathcal{E}_{\text{train},\infty} \xrightarrow{~\ell \to+\infty~} \frac{1}{4^\upsilon} \mathbb{E}\left[ \left( \varphi(g_1) -\widehat{\varphi}(g_2) \right)^2 \right],
\end{align}
where $g_1$ and $g_2$ have a bivariate Gaussian distribution with  mean vector $[0,\mu_{0s} \vartheta^\star]$ and covariance matrix $\mC$, defined as follows
\begin{align}
\mC=\begin{bmatrix}
1 & \mu_{1s} \rho T_1 q^{\star} \\
 \mu_{1s} \rho T_1 q^\star &  \mu_{1s}^2 \beta_{\infty}^\star+\mu_{2s}^2 \left((q^\star)^2+h_{\infty}^\prime(\lambda) \right)
\end{bmatrix}.\nonumber
\end{align}
The constant $\vartheta^\star$ satisfies $\vartheta^\star=0$ if $\mu_0=0$ and $\vartheta^\star=\gamma_3/\mu_0$ otherwise. Here, the constants $\mu_{0s}$, $\mu_{1s}$ and $\mu_{2s}$ are defined as $\mu_{0s}=\mathbb{E}[\sigma(z)]$, $\mu_{1s}=\mathbb{E}[z \sigma(z)]$ and $\mu_{2s}^2=\mathbb{E}[\sigma(z)^2]-\mu_{0s}^2-\mu_{1s}^2$, where $z$ is a standard Gaussian random variable. Additionally, the constant $\beta_{\infty}^\star$ can be computed via the following expression
\begin{align}\label{bstar_pf}
\beta_{\infty}^\star&= \frac{1}{V_1+V_3} \Big( V_1 T_1^2 - V_2-V_4 -\lambda + \frac{1}{T_{2,\lambda,\infty}(t_1^\star,\tau_1^\star)} \Big) (q^\star)^2  \nonumber\\
&+ \frac{\eta T_{3,\lambda,\infty}(t_1^\star,\tau_1^\star)}{V_1+V_3} - \frac{V_2+V_4+\lambda}{V_1+V_3} h_{\infty}^\prime(\lambda),
\end{align}
where the constants $V_1$, $V_2$, $V_3$ and $V_4$ are defined as follows
\begin{align}
&V_1=\frac{t_1^\star \widetilde{\mu}_1^2}{\tau_1^\star},~V_3=\widehat{\mu}_1^2,~V_2=\frac{t_1^\star \mu_2^2}{\tau_1^\star},~V_4=\mu_3^2.\nonumber
\end{align}
Here, $q^\star=q^\star_{t^\star,\tau^\star,\infty}$ satisfies the expression in \eqref{qstar_asy_pf}. Moreover, $t_1^\star$ and $\tau_1^\star$ denote the optimal solution of the problem defined in \eqref{scprob1_asy_pf}. Also, we treat $q^\star$, $t_1^\star$ and $\tau_1^\star$ as constants independent of $\lambda$ when we compute the derivative of the function $h_\infty(\cdot)$. The results in \eqref{tr_conv_asy_pf} and \eqref{gen_conv_asy_pf} can be proved using a similar analysis as in Section \ref{gen_con_org}. Performing a similar analysis as in Sections \ref{for_mao}, \ref{MAO_simp}, \ref{asy_ana_ao} and \ref{gen_con_org}, it can be checked that the training and generalization errors corresponding to the formulation in \eqref{form_asy_pf} converge in probability to the limiting functions obtained in \eqref{tr_conv_asy_pf} and \eqref{gen_conv_asy_pf}, respectively.

Note that the analysis in this Section is valid for any bounds that satisfy the theoretical results in Lemmas \ref{lem_comw}, \ref{lem_comu} and \ref{lembtau}. Moreover, observe that the cost functions of both deterministic problems in \eqref{scprob1_pf} and \eqref{scprob1_asy_pf} diverge when $t_1$, $t_2$, $\tau_1$ or $\tau_2$ grows to infinity or when $\tau_1$ or $\tau_2$ goes to $0$. This means that the solution of the unconstrained version of the formulations in \eqref{scprob1_pf} and \eqref{scprob1_asy_pf} should satisfy the feasibility constraints in \eqref{scprob1_pf} and \eqref{scprob1_asy_pf}. This means that the optimization problems in \eqref{scprob1_pf} and \eqref{scprob1_asy_pf} can be equivalently formulated as in \eqref{scprob1} and \eqref{det_pr_lasy}. This completes the proof of Theorem \ref{ther1}, Theorem \ref{thm2} and Lemma \ref{lem1}.

\section{Conclusion}\label{concd}
In this paper, we precisely analyzed a random perturbation method used to regularize machine learning problems. Specifically, we provided an accurate characterization of the training and generalization errors corresponding to the noisy feature formulation. Our predictions are based on a correlated Gaussian equivalence conjecture and an extended version of the CGMT, referred to as the multivariate CGMT. Moreover, our analysis shows that Gaussian noise injection in the input data has the same effects of a weighted ridge regularization when the number of noise samples grows to infinity. Additionally, it provides the explicit dependence of the introduced regularization on the feature matrix, the activation function and the noise variance. Simulation results validate our predictions and show that inserting noise during training moves the interpolation threshold and can mitigate the double descent phenomenon in the generalization error.

\section{Appendix: Additional Technical Details}\label{app_add}
In this part, we provide additional technical details to prove the results stated in Theorem \ref{ther1}, Theorem \ref{thm2} and Lemma \ref{lem1}. Specifically, we provide a rigorous proof of the theoretical results stated in Lemma \ref{lem_eqv1} and Lemma \ref{lembtau}.

\subsection{Proof of Lemma \ref{lem_eqv1}: High--dimensional Equivalence I}\label{lem_eqv1_pf}
The optimization problems given in \eqref{ana_fm6_nxt} and \eqref{ana_fm6_nxt2} share the same feasibility set $\mathcal{D}$ which we define as follows
\begin{align}
\mathcal{D}=\lbrace (\vw,t_1,t_2): \norm{\vw} \leq C_w, 0\leq t_1 \leq C_{t_1}, 0\leq t_2 \leq C_{t_2} \rbrace.
\end{align}
Define $\widehat{f}_{p,1}$ as the cost function of the optimization problem given in \eqref{ana_fm6_nxt} and define $\widehat{f}_{p,2}$ as the cost function of the optimization problem given in \eqref{ana_fm6_nxt2}. Note that the following inequality $\abs{\sqrt{x}-\sqrt{y}}\leq \sqrt{\abs{x-y}}$ is true for any $x\geq 0$ and $y \geq 0$. Therefore, we have the following inequality
\begin{align}\label{unif_f1f2}
&\sup_{(\vw,t_1,t_2)\in \mathcal{D} } \abs{\widehat{f}_{p,2}(\vw,t_1,t_2)-\widehat{f}_{p,1}(\vw,t_1,t_2)} \leq  \sup_{(\vw,t_1,t_2)\in \mathcal{D} } \Big\lbrace  \sqrt{\frac{2t_1^2}{\ell^2 n} \abs{Z_{p,1}} }+\sqrt{\frac{2t_2^2}{\ell^2 n} \abs{Z_{p,2}} } \Big\rbrace.
\end{align}
where we perform the change of variable $t_1=t_1/\sqrt{n}$ and $t_2=t_2/\sqrt{n}$. Here, $Z_{p,1}$ is defined as follows
\begin{align}
Z_{p,1}&=\sqrt{\ell} \norm{\mSigma^{\frac{1}{2}} \vw} \norm{\mGm^{\frac{1}{2}} \vw} \vh_1^\top \widehat{\vh}_2 + \sqrt{\ell} \norm{\mSigma^{\frac{1}{2}} \vw} \vh_1^\top ( - \mV_1^\top \widehat{\vy} + \mu_0 \vartheta \mV_1^\top \vec{1}_{\ell n} + \widetilde{\mu}_1 T_{p,1} q  \mV_1^\top \widehat{\vs} )\nonumber\\
&+ \norm{\mGm^{\frac{1}{2}} \vw} \widehat{\vh}_2^\top ( - \mV_1^\top \widehat{\vy} + \mu_0 \vartheta \mV_1^\top \vec{1}_{\ell n} + \widetilde{\mu}_1 T_{p,1} q  \mV_1^\top \widehat{\vs} ),
\end{align}
and $Z_{p,2}$ is defined as follows
\begin{align}
Z_{p,2}&=\norm{\mGm^{\frac{1}{2}} \vw} \widetilde{\vh}_2^\top ( - \mV_2^\top \widehat{\vy} + \mu_0 \vartheta \mV_2^\top \vec{1}_{\ell n} + \widetilde{\mu}_1 T_{p,1} q   \mV_2^\top \widehat{\vs}  ).
\end{align}
Given that the set $\mathcal{D}$ is bounded and based on Assumptions \ref{itm:act_fun} and \ref{itm:ass_F}, $Z_{p,1}$ and $Z_{p,2}$ can be bounded by a constant independent of the optimization variables. Combining this with the weak law of large numbers, one can see that the right hand side of \eqref{unif_f1f2} converges in probability to zero. Then, we obtain the following convergence in probability
\begin{align}\label{unf_con_tf1tf2}
\sup_{(\vw,t_1,t_2)\in \mathcal{D}}\abs{\widehat{f}_{p,2}(\vw,t_1,t_2)-\widehat{f}_{p,1}(\vw,t_1,t_2)} \xrightarrow{p\to+\infty} 0.
\end{align} 
Moreover, the following two properties are true for bounded functions
\begin{align}\label{sup_ineq}
\begin{cases}
\abs{\sup_{\vx} f(\vx) - \sup_{\vx} g(\vx)}\leq \sup_{\vx} \abs{f(\vx)-g(\vx)}\\
\abs{\inf_{\vx} f(\vx) - \inf_{\vx} g(\vx)}\leq \sup_{\vx} \abs{f(\vx)-g(\vx)}.
\end{cases}
\end{align}
Given that the functions $\widehat{f}_{p,1}$ and $\widehat{f}_{p,2}$ are bounded in the set $\mathcal{D}$ and the result in \eqref{unf_con_tf1tf2}, we get the following convergence in probability
\begin{align}\label{conv_proo1o2}
\abs{ \widehat{O}^{\star}_{p,1}-\widehat{O}^{\star}_{p,2} } \overset{p \to +\infty}{\longrightarrow} 0,
\end{align}
where $\widehat{O}^{\star}_{p,1}$ and $\widehat{O}^{\star}_{p,2}$ are the optimal objective values of the optimization problems given in \eqref{ana_fm6_nxt} and \eqref{ana_fm6_nxt2}, respectively. Now, define $\widehat{\mathcal{S}}^{\star}_{p,1}$ and $\widehat{\mathcal{S}}^{\star}_{p,2}$ as the set of optimal solutions of the minimization problems in \eqref{ana_fm6_nxt} and \eqref{ana_fm6_nxt2}, respectively. Next, the objective is to show that
\begin{equation}
\mathbb{D}( \widehat{\mathcal{S}}^{\star}_{p,1},\widehat{\mathcal{S}}^{\star}_{p,2} )  \overset{p}{\longrightarrow} 0.
\end{equation}
Moreover, define the functions $\widetilde{f}_{p,1}$ and $\widetilde{f}_{p,2}$ as follows
\begin{align}\label{fn1t}
\begin{cases}
\widetilde{f}_{p,1}(\vw)=\max\limits_{\substack{0 \leq t_1 \leq C_{t_1}\\0 \leq t \leq C_{t_2}}} \widehat{f}_{p,1}(\vw,t_1,t_2)\\
\widetilde{f}_{p,2}(\vw)=\max\limits_{\substack{0 \leq t_1 \leq C_{t_1}\\0 \leq t \leq C_{t_2}}} \widehat{f}_{p,2}(\vw,t_1,t_2).
\end{cases}
\end{align}
Note that the set $\widehat{\mathcal{S}}^{\star}_{p,1}$ is the set of minimizing $\vw$ of the first function in \eqref{fn1t}. Based on Lemma \ref{convty_vn2}, the function $\widetilde{f}_{p,2}$ is strongly convex in the feasibility set where $\lambda$ is a strong convexity parameter. This means that it has a unique minimizer denoted by $\vw_{p,2}^\star$. Now, assume that $\vw_{p,1}^\star$ is a minimizer of the function $\widetilde{f}_{p,1}$. Moreover, assume that there exists $\gamma>0$  independent of $p$ such that the following convergence holds true
\begin{align}
\mathbb{P} \Big( \sup_{\vw^\star\in \widehat{\mathcal{S}}^{\star}_{p,1}} \norm{\vw^\star-\vw_{p,2}^\star}_2 \geq \gamma \Big) \overset{p\to\infty}{\longrightarrow} 1.
\end{align}
Given the strong convexity of the function $\widetilde{f}_{p,2}$, we have the following inequality
\begin{align}
\widetilde{f}_{p,2}(\beta \vw_1 +(1-\beta) \vw_2) &\leq \beta \widetilde{f}_{p,2}(\vw_1)+(1-\beta) \widetilde{f}_{p,2}(\vw_2)\nonumber\\
&-\frac{\lambda}{2} \beta (1-\beta) \norm{\vw_1-\vw_2}^2_2,
\end{align}
where this is valid for any $\beta\in[0,1]$ and feasible $\vw_1$ and $\vw_2$.
Take $\vw_1=\vw_{p,1}^\star$, $\vw_2=\vw_{p,2}^\star$ and $\beta=1/2$. Based on the fact that $\vw_{p,2}^\star$ is a minimizer of the function $\widetilde{f}_{p,2}$, there exists $\gamma>0$ independent of $p$ such that 
\begin{align}\label{cont_hyp}
\mathbb{P} \Big( \sup_{\vw^\star\in \widehat{\mathcal{S}}^{\star}_{p,1}}\abs{\widetilde{f}_{p,2}(\vw_{p,2}^\star)-\widetilde{f}_{p,2}(\vw^\star)} \geq \frac{\lambda \gamma^2}{4}  \Big) \overset{p\to\infty}{\longrightarrow} 1.
\end{align}
Next, we use the convergence in probability established in \eqref{unf_con_tf1tf2} and \eqref{conv_proo1o2} to show that the result in \eqref{cont_hyp} produces a contradiction. 
To this end, note that the following inequality is always valid
\begin{align}
\abs{\widetilde{f}_{p,2}(\vw_{p,2}^\star)-\widetilde{f}_{p,2}(\vw_{n,1}^\star)} &\leq \abs{\widetilde{f}_{p,2}(\vw_{p,2}^\star)-\widetilde{f}_{p,1}(\vw_{p,1}^\star)}+\abs{\widetilde{f}_{p,1}(\vw_{p,1}^\star)-\widetilde{f}_{p,2}(\vw_{p,1}^\star)},
\end{align}
which means that the following inequality is always true
\begin{align}\label{conv_tf1tf2_2}
\abs{\widetilde{f}_{p,2}(\vw_{p,2}^\star)-\widetilde{f}_{p,2}(\vw_{p,1}^\star)} &\leq \abs{O^{\star}_{p,2}-O^{\star}_{p,1}}+\sup_{\norm{\vw}\leq C_w}\abs{\widetilde{f}_{p,1}(\vw)-\widetilde{f}_{p,2}(\vw)}.
\end{align}
Observe that the inequality derived in \eqref{conv_tf1tf2_2} implies that the following inequality holds true
\begin{align}\label{conv_tf1tf2_3}
\sup_{\vw^\star\in \widehat{\mathcal{S}}^{\star}_{p,1}}&\abs{\widetilde{f}_{p,2}(\vw_{p,2}^\star)-\widetilde{f}_{p,2}(\vw^\star)} \leq \abs{O^{\star}_{p,2}-O^{\star}_{p,1}}+\sup_{\norm{\vw}\leq C_w}\abs{\widetilde{f}_{p,1}(\vw)-\widetilde{f}_{p,2}(\vw)}.
\end{align}
Now, based on \eqref{unf_con_tf1tf2}, \eqref{sup_ineq} and \eqref{conv_proo1o2}, the right hand side of \eqref{conv_tf1tf2_2}, converges in probability to zero. This means that the following convergence in probability holds 
\begin{align}
\sup_{\vw^\star\in \widehat{\mathcal{S}}^{\star}_{p,1}}\abs{\widetilde{f}_{p,2}(\vw_{p,2}^\star)-\widetilde{f}_{p,2}(\vw^\star)} \overset{p \to + \infty}{\longrightarrow} 0.
\end{align}
This contradicts with the result in \eqref{cont_hyp}. This means that for any $\epsilon_1>0$ and $\epsilon_2>0$, there exists $p_0\in\mathbb{N}$ such that for any $p\geq p_0$, we have that
\begin{align}
\mathbb{P} \Big( \sup_{\vw^\star\in \widehat{\mathcal{S}}^{\star}_{p,1}} \norm{\vw^\star-\vw_{p,2}^\star}_2 < \epsilon_1 \Big) \geq 1-\epsilon_2.
\end{align}
This means that the following convergence in probability is true
\begin{equation}
\mathbb{D}( \widehat{\mathcal{S}}^{\star}_{p,1},\widehat{\mathcal{S}}^{\star}_{p,2} )  \overset{p \to + \infty}{\longrightarrow} 0,
\end{equation}
where $\mathbb{D}( \mathcal{A},\mathcal{B} )$ denotes the deviation between the sets $\mathcal{A}$ and $\mathcal{B}$ and is defined as $\mathbb{D}( \mathcal{A},\mathcal{B} )=\sup_{\vx_1\in\mathcal{A}} \inf_{\vx_2 \in\mathcal{B}} \norm{\vx_1-\vx_2}_2$.
This completes the proof of Lemma \ref{lem_eqv1}.

\subsection{Proof of Lemma \ref{lembtau}: Additional Compactness}\label{lembtau_pf}
We start our prove by analyzing the feasibility sets of the primal formulation in \eqref{ana_fm2}. Note that the optimal solution of the formulation given in \eqref{ana_fm2} can be expressed in closed form as follows
\begin{align}
\widehat{\vw}_p= \left[ \frac{1}{n \ell} \mK^\top \mK + \lambda \mI_k \right]^{-1} \left( \frac{\mK^\top \bar{\vy}}{n \ell} \right),
\end{align}
for a sufficiently large $C_w$. Here, the matrix $\mK \in \mathbb{R}^{n \ell \times k}$ is defined as follows
\begin{align}
\mK=\widetilde{\mu}_1 \bar{\vs} \vxi^\top \mF + \bar{\mG}  \mSigma^{\frac{1}{2}}+\mT \mGm^{\frac{1}{2}}.
\end{align}
The matrices $\mSigma$ and $\mGm$ are defined in \eqref{msG}. Moreover, $\bar{\vy}$, $\bar{\vs}$ and $\bar{\mG}$ are formed by performing $\ell$ times concatenation of $\widetilde{\vy}=\vy-\mu_0 \vartheta \vec{1}_n$, $\vs$ and $\mG$. Here, $\vy=\varphi(\vs)$ and  $\vs$, $\mG$ and $\mT$ have independent standard Gaussian components. Now, based on the results in \cite{eigen_cons} and Assumptions \ref{itm:act_fun} and \ref{itm:ass_F}, there exists a positive constant $C_1>0$ such that
\begin{align}
\norm{\mK}/\sqrt{n}\leq C_1,
\end{align}
with probability going to $1$ as $p$ grows to $+\infty$. Therefore, there exists a positive constant $C_2>0$ such that
\begin{align}
\sigma_{\text{min}}\Big( \big[ \frac{1}{n \ell} \mK^\top \mK + \lambda \mI_k \big]^{-1} \Big) \geq \frac{1}{C_2+\lambda},
\end{align}
where $\sigma_{\text{min}}(\cdot)$ denotes the minimum eigenvalue. Now, observe that
\begin{align}
\norm{ \mK^\top \bar{\vy} }^2=\widetilde{\mu}_1^2 (\bar{\vs}^\top \bar{\vy})^2 \vxi^\top \mF \mF^\top \vxi+\bar{\vy}^\top \mB \mB^\top \bar{\vy}+2 \widetilde{\mu}_1 (\bar{\vs}^\top \bar{\vy}) \vxi^\top \mF \mB^\top \bar{\vy},
\end{align}
where $\mB=\bar{\mG}  \mSigma^{\frac{1}{2}}+\mT \mGm^{\frac{1}{2}}$. Given that the random quantities $\vs$, $\mG$ and $\mT$ have independent standard Gaussian components, we have the following 
\begin{align}
\frac{1}{ \ell n}\vxi^\top \mF \mB^\top \bar{\vy} \overset{p \to + \infty}{\longrightarrow} 0.
\end{align}
Moreover, using the weak law of large numbers and Assumptions \ref{itm:fun_fwf} and \ref{itm:ass_F}, we obtain the following asymptotic results
\begin{align}
\frac{\bar{\vs}^\top \bar{\vy}}{\ell n} \overset{p \to + \infty}{\longrightarrow} \mathbb{E}[z \varphi(z) ],~ \vxi^\top \mF \mF^\top \vxi \overset{p \to + \infty}{\longrightarrow} \delta \mathbb{E}[\kappa].
\end{align}
Combining this with Assumptions \ref{itm:fun_fwf}, \ref{itm:act_fun} and \ref{itm:ass_F}, we obtain the following inequality
\begin{align}
\frac{1}{(\ell n)^2}\norm{ \mK^\top \bar{\vy} }^2 \geq \frac{1}{2} \widetilde{\mu}_1^2 \delta \mathbb{E}[z \varphi(z) ]^2 \mathbb{E}[\kappa],
\end{align}
valid with probability going to $1$ as $p$ grows to infinity. This shows that there exists a positive constant $c_w>0$ such that
\begin{align}\label{lboptw}
\norm{\widehat{\vw}_p} \geq c_w,
\end{align}
with probability going to $1$ as $p$ grows to infinity. Then, we can apply the multivariate CGMT framework with the additional constraint in \eqref{lboptw}. Based on this result and Assumption \ref{itm:act_fun}, there exists positive constants $c_{\tau_1}>0$, $C_{\tau_1}>0$, $c_{\tau_2}>0$ and $C_{\tau_2}>0$, such that the following convergence in probability holds
\begin{align}
\mathbb{P}(c_{\tau_1} \leq \widehat{\tau}_1 \leq C_{\tau_1}) \xrightarrow{n \to \infty} 1,~\mathbb{P}(c_{\tau_2} \leq \widehat{\tau}_2 \leq C_{\tau_2}) \xrightarrow{n \to \infty} 1,
\end{align}
where $\widehat{\tau}_1$and $\widehat{\tau}_2$ are the optimal solutions of the formulation in \eqref{ana_fm7}. This completes the proof of Lemma \ref{lembtau}.

\balance
\bibliographystyle{IEEEtran}
\bibliography{refs}

\end{document}